\newcommand{\xhdr}[1]{\vspace{2mm} \noindent{\bf #1}}
\newcommand{\citet}[1]
{\citeauthor{#1}~\shortcite{#1}}
\newcommand{\citep}{\cite}
\newtheorem*{rep@theorem}{\rep@title}
\newcommand{\newreptheorem}[2]{%
\newenvironment{rep#1}[1]{%
 \def\rep@title{#2 \ref{##1}}%
 \begin{rep@theorem}}%
 {\end{rep@theorem}}}
\definecolor{ao(english)}{rgb}{0.0, 0.5, 0.0}\colorlet{notgreen}{blue!50!yellow}
\newtheorem{theorem}{Theorem}
\newtheorem{corollary}{Corollary}
\newtheorem{lemma}{Lemma}
\newtheorem{proposition}{Proposition}
\theoremstyle{definition}
\newtheorem{definition}{Definition} % [section]
\newcommand{\Secref}[1]{Section~\ref{#1}} 
\newcommand{\Figref}[1]{Figure~\ref{#1}} 
\newcommand{\Appref}[1]{Appendix~\ref{#1}} 
\newcommand{\Eqref}[1]{Equation~(\ref{#1})}
\newcommand{\Probref}[1]{Problem~(\ref{#1})}
\newcommand{\mM}{\mathcal{M}}
\newcommand{\mU}{\mathcal{U}}
\newcommand{\POF}{\text{POF}}
\newcommand{\DeltaOPTraw}{\Delta\mathit{OPT}}
\newcommand{\DeltaOPT}[1]{\DeltaOPTraw\left({#1}\right)}
\newcommand{\MOPTsub}{\text{OPT}}
\newcommand{\MNRsub}{\text{NR}}
\newcommand{\MRsub}{\text{R}}
\newcommand{\MOPT}{M_{\MOPTsub}}
\newcommand{\MNR}{M_{\MNRsub}}
\newcommand{\MR}{M_{\MRsub}}
\DeclarePairedDelimiter\ceil{\lceil}{\rceil}
\DeclarePairedDelimiter\floor{\lfloor}{\rfloor}
\tikzset{circle split part fill/.style  args={#1,#2}{%
 alias=tmp@name, % Jake's idea !!
  postaction={%
    insert path={
     \pgfextra{% 
     \pgfpointdiff{\pgfpointanchor{\pgf@node@name}{center}}%
                  {\pgfpointanchor{\pgf@node@name}{east}}%            
     \pgfmathsetmacro\insiderad{\pgf@x}
      %\begin{scope}[on background layer]
      %\fill[#1] (\pgf@node@name.base) ([xshift=-\pgflinewidth]\pgf@node@name.east) arc
      %                    (0:180:\insiderad-0.5\pgflinewidth)--cycle;
      %\fill[#2] (\pgf@node@name.base) ([xshift=\pgflinewidth]\pgf@node@name.west)  arc
      %                     (180:360:\insiderad-0.5\pgflinewidth)--cycle;  
      \fill[#1] (\pgf@node@name.base) ([xshift=-\pgflinewidth]\pgf@node@name.east) arc
                          (0:180:\insiderad-\pgflinewidth)--cycle;
      \fill[#2] (\pgf@node@name.base) ([xshift=\pgflinewidth]\pgf@node@name.west)  arc
                           (180:360:\insiderad-\pgflinewidth)--cycle;            %  \end{scope}   
         }}}}}  
\newif\ifshowproofs
\title{Scalable Robust Kidney Exchange}
\author{Duncan C McElfresh\\
  Department of Mathematics\\
  University of Maryland\\
  College Park, MD 20742\\
 \texttt{dmcelfre@math.umd.edu}\\
 \And
 Hoda Bidkhori\\
 Department of Industrial Engineering\\
 University of Pittsburgh\\
 Pittsburgh, PA 15260\\
 \texttt{bidkhori@pitt.edu}\\
\And
John P Dickerson\\
  Department of Computer Science\\
  University of Maryland\\
  College Park, MD 20742\\
 \texttt{john@cs.umd.edu}\\
}
\begin{document}

\maketitle

\begin{abstract}
  In barter exchanges, participants directly trade their endowed goods in a constrained economic setting without money.  Transactions in barter exchanges are often facilitated via a central clearinghouse that must match participants even in the face of uncertainty---over participants, existence and quality of potential trades, and so on.  Leveraging robust combinatorial optimization techniques, we address uncertainty in kidney exchange, a real-world barter market where patients swap (in)compatible paired donors. We provide two scalable robust methods to handle two distinct types of uncertainty in kidney exchange---over the \emph{quality} and the \emph{existence} of a potential match. The latter case directly addresses a weakness in all stochastic-optimization-based methods to the kidney exchange clearing problem, which all necessarily require explicit estimates of the probability of a transaction existing---a still-unsolved problem in this nascent market.  We also propose a novel, scalable kidney exchange formulation that eliminates the need for an exponential-time constraint generation process in competing formulations, maintains provable optimality, and serves as a subsolver for our robust approach. For each type of uncertainty we demonstrate the benefits of robustness on real data from a large, fielded kidney exchange in the United States. We conclude by drawing parallels between robustness and notions of fairness in the kidney exchange setting.

\end{abstract}

\section{Introduction}\label{sec:intro}

Real-world optimization problems face various types of uncertainty that impact both the quality and feasibility of candidate solutions. Uncertainty in combinatorial optimization is especially troublesome: if the \emph{existence} of certain constraints or variables is uncertain, identifying a good---or even feasible---solution can be extremely difficult. Stochastic optimization approaches endeavor to maximize the \emph{expected} objective value, under uncertainty. While sometimes successful, stochastic optimization relies heavily on a correct characterization of uncertainty; furthermore, stochastic approaches are often intractable---especially in combinatorial domains~\cite{Bertsimas11:Theory}. A complementary approach is \emph{robust optimization}, which protects against worst-case outcomes. Robust approaches can be less sensitive to the exact characterization of uncertainty, and are often far more tractable than stochastic approaches~\cite{BenTal09:Robust}.

This paper addresses uncertainty in \emph{kidney exchange}, a real-world barter market where patients with end-stage renal disease enter and trade their willing paired kidney donors~\cite{Rapaport86:Case,Roth04:Kidney}.  Kidney exchange is a relatively new paradigm for organ allocation, but already accounts for over 10\% of living kidney donations in the United States, and is growing in popularity worldwide~\cite{Biro17:Kidney}.  Modern exchanges also include \emph{non-directed donors} (NDDs), who enter the market without a paired patient and donate their kidney without receiving one in return.  Computationally, kidney exchange is a packing problem: solutions (matchings) consist of cyclic organ swaps and NDD-initiated donation chains in a directed compatibility graph, representing all participants and potential transactions.  Each potential transplant is given a numerical weight by policymakers; the objective is to select cycles and chains that maximize overall matching weight. In general, this problem is NP-hard~\cite{Abraham07:Clearing,Biro09:Maximum}; however, many efficient deterministic formulations exist that are fielded now and clear real exchanges~\cite{Abraham07:Clearing,Manlove15:Paired,Anderson15:Finding,Dickerson16:Position,Dickerson18:Failure}.

\xhdr{Uncertainty in kidney exchange.} Presently-fielded kidney exchange algorithms largely do not address uncertainty.  Here, we consider two types of uncertainty in kidney exchange: over the \emph{quality} of the transplant (weight uncertainty) and over the \emph{existence} of potential transplants (existence uncertainty). Policymakers assign weights to potential transplants, which are (imperfect) estimates of transplant quality; weight uncertainty stems from both measurement uncertainty (e.g., medical compatibility and kidney quality) and uncertainty in the prioritization of some patients over others. Transplant existence is always uncertain: matched transplants ``fail'' before executing for a variety of reasons, severely impacting a planned kidney exchange. To address both cases, we propose \emph{uncertainty sets} containing different realizations of the uncertain parameters. We then develop a scalable robust optimization approach, and demonstrate its success on data from a large fielded kidney exchange.

Robust optimization is a popular approach to optimization under uncertainty, with applications in reinforcement learning \cite{petrik2014raam}, regression \cite{xu2009robust}, classification \cite{chen2017robust}, and network optimization~\cite{mevissen2013data}. Motivated by real-world constraints, we apply robust optimization to kidney exchange---a graph-based market clearing or resource allocation problem.

\xhdr{Our Contributions.}
To our knowledge, weight uncertainty has not been addressed in the kidney exchange literature. Our approach is similar to that of \citet{Bertsimas04:Price} and \citet{poss2014robust}, and uses some of their results.  Several approaches have been proposed for existence uncertainty, primarily based on stochastic optimization~\cite{Dickerson16:Position,Anderson15:Finding,Dickerson18:Failure} or hierarchical optimization~\cite{Manlove15:Paired}. The primary disadvantage of these approaches---in addition to tractability---is their reliance on, and sensitivity to, the explicit estimation of the probability of each particular potential transplant.  This probability is extremely difficult to determine~\cite{Dickerson18:Failure,Glorie12:Estimating}, and prevents the translation of those methods into practice.  Our approach uses a simpler notion of edge existence uncertainty---an upper-bound on the number of non-existent edges---which is easier to interpret and estimate. \citet{glorie2014clearing} proposed a related robust formulation that is exponentially larger than ours, and is intractable for realistically-sized exchanges.

In addition, we introduce a new scalable formulation for kidney exchange that combines concepts from two state-of-the-art formulations~\cite{Anderson15:Finding,Dickerson16:Position}, handles long or uncapped NDD-initiated chains without requiring expensive constraint generation, and ties into a developed literature on fairness in kidney exchange---thus addressing use cases that are becoming more common in fielded exchanges~\cite{Anderson15:Finding}.

%\vspace{-2mm}
\section{Preliminaries}\label{sec:prelims}
%\vspace{-4mm}

\xhdr{Model for kidney exchange.}
A kidney exchange can be represented formally by a directed compatibility graph $G=(V,E)$.  Here, vertices $v \in V$ represent participants in the exchange, and are partitioned as $V=P\cup N$ into $P$, the set of all patient-donor pairs, and $N$, the set of all NDDs~\cite{Roth04:Kidney,Roth05:Kidney,Abraham07:Clearing}. Each potential transplant from a donor at vertex $u$ to a patient at vertex $v$ is represented by a directed edge $e = (u,v) \in E$, which has an associated weight $w_e\in \mathbf{w}$; weights are set by policymakers, and reflect both the medical utility of the transplant, as well as ethical considerations (e.g., prioritizing patients by waiting time, age, and so on). Cycles in $G$ correspond to cyclic trades between multiple patient-donor pairs in $P$; chains, correspond to donations that begin with an NDD in $N$ and continue through multiple patient-donor pairs in $P$. The kidney exchange \emph{clearing problem} is to select a feasible set of transplants (edges in $E$) that maximize overall weight. Let $\mM$ be the set of all feasible \emph{matchings} (i.e., solutions) to a kidney exchange problem; the general formulation of this problem is $\max_{\mathbf{x}\in \mM} \mathbf{x} \cdot \mathbf{w}$, where binary decision variables $\mathbf{x}$ represent edges, or cycles and chains. This problem is NP- and APX-hard~\cite{Abraham07:Clearing,Biro09:Maximum}.
%\begin{align}\label{eq:kex}
%\max\limits_{\mathbf{x}\in \mM}\quad& \mathbf{x} \cdot \mathbf{w}
%\end{align}
%We use variants of \Probref{eq:kex} where

\xhdr{Robust optimization.} Robust optimization is a common approach to optimization under uncertainty, which is often more tractable and requires less accurate uncertainty information than other approaches \cite{Bertsimas11:Theory}. This approach begins by defining an \emph{uncertainty set} $\mathcal{U}$ for the uncertain optimization parameter; $\mathcal{U}$ contains different \emph{realizations} of this parameter. Consider the example of edge weight uncertainty: we might design an edge weight uncertainty set $\mathcal{U}_w$ that contains the \emph{realized} (i.e. ``true'') edge weights $\hat{\mathbf{w}}$ with high probability, $P(\hat{\mathbf{w}} \in \mathcal{U}_w)\geq 1-\epsilon$, for $0<\epsilon \ll 1$. The parameter $\epsilon$ is referred to as the \emph{protection level}, and is often used to control the number of realizations in $\mathcal{U}$.

After designing $\mathcal{U}$, the robust approach finds the best solution, assuming the \emph{worst-case} realization within $\mathcal{U}$. For kidney exchange (a maximization problem), this corresponds to a \emph{minimization} over $\mathcal{U}$; for example,  \Probref{eq:robkex} is the robust formulation with uncertain edge weights.
\begin{align}\label{eq:robkex}
\max\limits_{\mathbf{x}\in \mM} \min\limits_{\hat{\mathbf{w}}\in \mU}\quad& \mathbf{x} \cdot \hat{\mathbf{w}}
\end{align}

The robustness of this approach depends on the proportion of possible realizations contained in $\mathcal{U}$. If $\mathcal{U}$ contains all possible realizations, the approach may be too conservative; if $\mathcal{U}$ only contains one possible realization of $\mathbf{\hat{w}}$, the solution may be too myopic. The number of realizations in $\mathcal{U}$ is often controlled by a parameter: either an \emph{uncertainty budget} $\Gamma$, or the protection level $\epsilon$. Next we introduce the first type of uncertainty considered in this paper: edge weight uncertainty.

\section{Optimization in the Presence of Edge Weight Uncertainty}\label{sec:wt}
Edge weights in kidney exchange represent the medical and social utility gained by a \emph{single} kidney transplant. Weights are determined by policymakers, and are subject to several types of uncertainty.\footnote{The process used to set weights by the UNOS US-wide kidney exchange is published publicly~\cite{UNOS15:Revising}.} Part of this uncertainty  is due to insufficient knowledge of the future: a patient or donor's health may change, raising or lowering the ``true'' weight of their transplant edges. Another type of uncertainty stems from disagreement between policymakers regarding the social utility of a transplant. For example, some policymakers might prioritize young patients over older patients; other policymakers might prioritize the sickest patients above all healthier patients. Policymakers aggregate these value judgments to assign a single weight to each transplant edge, but this aggregation is a contentious and imperfect process (although recent work from the AI community has begun to address this using techniques from computational social choice and machine learning~\cite{Freedman18:Adapting,Noothigattu18:Making}). Still, there is no way to measure the ``true'' social utility of a transplant, and therefore this uncertainty is not easily measured. 

\xhdr{Interval weight uncertainty.} \label{sec:uwtopt}
It is beyond the scope of this work to characterize these sources of uncertainty. We simply assume that the \emph{nominal} edge weights $\mathbf{w}$, provided by policymakers, are an uncertain estimate of the \emph{realized} edge weights $\mathbf{\hat{w}}$, i.e., the ``true'' value of each transplant.  Next, we formalize edge weight uncertainty and our robust approach. This section focuses on edge weights, so we write our formulations with decision variables $x_e\in \mathbf{x}$ corresponding to individual edges.

%With enough data, we might fit a statistical model to measurement uncertainty. For example, we might record the ``true'' edge weights after a transplantation occurs, for each type of patient. We could then fit a distribution to the true edge weights, and use this to estimate future uncertainty.

%\DCM{TODO: clean this up...} In \Secref{sec:uwtopt} we discuss our robust optimization approach to edge weight uncertainty. \Secref{sec:wtdist} introduces our notion of edge weight uncertainty, and a corresponding uncertainty set. In \Secref{sec:varbudgetweight} we define a variable-budget uncertainty set, which addresses the variable size of kidney exchange solutions. 

% \Secref{ssec:weightrobformulation} introduces a linear mixed integer program for solving edge-weight robust kidney exchange, and Sections \ref{sec:solGamma} and \ref{ssec:weightrobvar} describe the solution methods for this formulation.

We assume that realized edge weights $\mathbf{\hat{w}}$ are random variables with a partially known symmetric distribution, centered about the nominal weights $\mathbf{w}$. This assumption implies that $E[\mathbf{\hat{w}}]=\mathbf{w}$, thus a non-robust approach that maximizes $\mathbf{w}$ is equivalent to a stochastic optimization approach that maximizes \emph{expected} edge weight. We refer to this edge uncertainty model as \emph{interval uncertainty}.
\begin{definition}[Interval Edge Weight Uncertainty]
Let $\hat{w}_e$ be the realized weight of edge $e$, with nominal weight $w_e$, and maximum discount $0\leq d_e\leq w_e$. Let $\hat{w}_e \equiv w_e + d_e \alpha_e$, where $\alpha_e$ is the \emph{fractional deviation} of edge $e$. Both $\alpha_e$ and $\hat{w}_e$ are continuous random variables, symmetrically distributed on $[-1,1]$ and $[w_e-d_e,w_e+d_e]$ respectively.
\end{definition} 
Each discount factor $d_e$ should reflect the level of uncertainty in $w_e$. If $w_e$ is known exactly, then $d_e=0$; if $w_e$ is very uncertain, then we might set $d_e = w_e$, or higher. 

To vary the degree of uncertainty, we use an \emph{uncertainty budget} $\Gamma$, which limits the total deviation from nominal edge weights. With our uncertainty model, it is natural to let $\Gamma$ limit the total fractional deviation of each edge weight---i.e., sum of all $\alpha_e$. This uncertainty set $\mathcal{U}^{I}_\Gamma$ is defined as:

\vspace{-2mm}
{
\begin{equation*}\label{eq:usetint}
\mathcal{U}^{I}_\Gamma= \left\{\mathbf{ \hat{w}} \mid \hat{w}_e = w_e + d_e \alpha_e ,|\alpha_e|\leq 1, \sum\limits_{e\in E} | \alpha_e | \leq \Gamma \right\} 
\end{equation*}
}

For example if $\Gamma=3$, there may be three edges with $|\alpha_e|=1$, or one edge with $|\alpha_e|=1$ and four edges with $|\alpha_e|=1/2$, and so on.
% any number of edges can deviate from their true weights, such that $\sum_{e\in E} | \alpha_e | \leq 3$: 

Choosing an appropriate $\Gamma$ is not straightforward. Matchings often use only a small fraction of the decision variables (e.g., transplant edges), and it is difficult to predict the size of the optimal matching. Intuitively, $\Gamma$ should reflect the size of the final matching: for example if we assume that half of any matching's edges will be discounted, then we should set $\Gamma\simeq |\mathbf{x}|/2$. Generalizing this concept, we define a \emph{variable}-budget uncertainty set $\mathcal{U}^{I}_{\gamma}$, with budget function $\gamma(|\mathbf{x}|)$.

% \label{eq:usetvar}
\vspace{-2mm}
{
\begin{equation*}
\mathcal{U}^{I}_{\gamma}= \left\{\mathbf{ \hat{w}} \mid \hat{w}_e = w_e + d_e \alpha_e ,|\alpha_e| \leq 1, \sum\limits_{e\in E} |\alpha_e|  \leq \gamma(|\mathbf{x}|) \right\} 
\end{equation*}
}
\vspace{-2mm}

Next, to define $\gamma$, we relate it to a much more intuitive parameter: the protection level $\epsilon$. 

\subsection{Uncertainty Budget $\gamma$ and Protection Level $\epsilon$}

The protection level $\epsilon$ mediates between a completely conservative approach, and the non-robust approach: as $\epsilon\rightarrow 0$ the approach becomes more conservative, and $\epsilon= 1$ corresponds to a non-robust approach. In this section we relate $\gamma$ to $\epsilon$, beginning with the following Theorem~\ref{thm:pboundB}.
\begin{theorem}[Adapted from Theorem 3 of \protect\cite{Bertsimas04:Price}]\label{thm:pboundB}
For a matching $\mathbf{x}\in \mM$ with $|\mathbf{x}|$ edges, and uncertainty set $\mathcal{U}^I_\Gamma$, the probability that $\mathcal{U}^I_\Gamma$ contains the realized edge weights for $\mathbf{x}$ is bounded below by
$$P(\mathbf{\hat{w}}\in \mathcal{U}^I_\Gamma) \geq 1- B(|\mathbf{x}|,\Gamma),$$
with
$$B(|\mathbf{x}|,\Gamma) =  \frac{1}{2^{|\mathbf{x}|}}\left( (1-\mu) \begin{pmatrix}|\mathbf{x}| \\ \floor{\eta} \end{pmatrix}  + \sum\limits_{l=\floor{\eta} + 1}^{|\mathbf{x}|} \begin{pmatrix}|\mathbf{x}| \\ l\end{pmatrix}\right),$$
with $\eta = (\Gamma+|\mathbf{x}|)/2$ and $\mu = \eta - \floor{\eta}$. 
\end{theorem}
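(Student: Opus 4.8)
The plan is to recast this as a per-solution instance of the probabilistic guarantee behind Theorem~3 of \citet{Bertsimas04:Price}: first a kidney-exchange-specific reduction that turns the event into a one-sided deviation bound on a weighted sum of the fractional deviations $\alpha_e$, and then an appeal to the symmetrization/Binomial-tail argument of \citet{Bertsimas04:Price} to bound that probability by $B(|\mathbf{x}|,\Gamma)$.

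First I would fix $\mathbf{x}\in\mM$, write $n=|\mathbf{x}|$, and note that only the $n$ edges with $x_e=1$ affect $\mathbf{x}\cdot\hat{\mathbf{w}}$. Solving the inner minimization, $\min_{\hat{\mathbf{w}}\in\mathcal{U}^I_\Gamma}\mathbf{x}\cdot\hat{\mathbf{w}}=\mathbf{x}\cdot\mathbf{w}+\min\{\textstyle\sum_e d_e x_e\alpha_e : |\alpha_e|\le 1,\ \sum_e|\alpha_e|\le\Gamma\}$, is a fractional knapsack: the optimum saturates ($\alpha_e=-1$) the $\floor{\Gamma}$ edges of $\mathbf{x}$ with largest $d_e$ and sets $\alpha_e=-(\Gamma-\floor{\Gamma})$ on the next, so the robust value of $\mathbf{x}$ equals $\mathbf{x}\cdot\mathbf{w}-\beta(\mathbf{x})$ with $\beta(\mathbf{x})$ the corresponding ``top-$\Gamma$'' partial sum of the $d_e$. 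The event that $\mathcal{U}^I_\Gamma$ captures the realized weights of $\mathbf{x}$ --- equivalently, that this robust value does not overstate the realized value $\mathbf{x}\cdot\hat{\mathbf{w}}$ --- is $\{\sum_e d_e x_e\alpha_e\ge-\beta(\mathbf{x})\}$; since each $\alpha_e$ is symmetric on $[-1,1]$, its complement has probability $P\bigl(\sum_e d_e x_e\alpha_e>\beta(\mathbf{x})\bigr)$, so it suffices to show this is at most $B(n,\Gamma)$. (If $\Gamma\ge n$ then $\beta(\mathbf{x})=\sum_e d_e x_e$ and the complement is empty because $\alpha_e\le1$, consistent with $B(n,\Gamma)=0$; so assume $\Gamma<n$.) Under the dictionary ``uncertain magnitude $d_e\leftrightarrow\hat a_{ij}$, incidence $x_e\leftrightarrow x_j^*$, symmetric deviation $\alpha_e\leftrightarrow\eta_{ij}$, number of uncertain coefficients $n=|\mathbf{x}|$'', $P\bigl(\sum_e d_e x_e\alpha_e>\beta(\mathbf{x})\bigr)\le B(n,\Gamma)$ is exactly the probabilistic inequality established inside Theorem~3 of \citet{Bertsimas04:Price}.

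To indicate why that inequality holds: sorting the discounts of $\mathbf{x}$ decreasingly, one first argues the top $\floor{\Gamma}$ coefficients may be taken saturated in the event, reducing the deviation condition to one involving only the remaining coefficients and the fractional slot; passing to the variables $\tfrac12(1\pm\alpha_e)\in[0,1]$, which are symmetric about $1/2$, and using $\alpha_e\leftrightarrow-\alpha_e$ symmetry, the probability is bounded by the upper tail, at level $\eta=(n+\Gamma)/2$, of a sum of $n$ independent $[0,1]$-valued mean-$1/2$ variables. Over such variables the upper tail is maximized by the symmetric two-point law, for which the sum is $\mathrm{Binomial}(n,1/2)$; evaluating its tail at $\eta$, with the non-integrality of $\eta$ absorbed into the weighted term $(1-\mu)\binom{n}{\floor{\eta}}$ where $\mu=\eta-\floor{\eta}$, gives exactly $B(n,\Gamma)$.

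The hard part is this last step for \emph{unequal} discounts $d_e$ --- the saturation reduction and the extremal-distribution/Binomial comparison --- which is precisely the content of Theorem~3 of \citet{Bertsimas04:Price}; in the write-up I would invoke their result via the dictionary above rather than re-derive it, checking only that our \emph{unweighted} budget $\sum_e|\alpha_e|\le\Gamma$ yields the parameters $\eta$ and $\mu$ appearing there. The only genuinely new work is the reduction in the previous paragraph (solving the inner knapsack, identifying the event, the symmetry step), which is routine; one point worth a sentence in the final proof is that $n$ here is the number of edges \emph{used by} $\mathbf{x}$, not $|E|$, which is what makes the bound depend on $|\mathbf{x}|$.
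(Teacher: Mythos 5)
Your proposal is correct and follows essentially the same route as the paper: the paper gives no independent proof, simply adapting Theorem~3 of \citet{Bertsimas04:Price} with the number of uncertain coefficients set to the matching size $|\mathbf{x}|$, which is exactly the reduction-plus-invocation you carry out (your dictionary and the remark that $n=|\mathbf{x}|$ rather than $|E|$ is the substance of the adaptation). The only nitpick is your parenthetical that $B(n,\Gamma)=0$ when $\Gamma\ge n$: at $\Gamma=n$ one has $B(n,n)=2^{-n}>0$, though the bound is still trivially satisfied there, so nothing breaks.
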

That is, for some $\epsilon$, if $\Gamma$ is chosen such that $\epsilon = B(|\mathbf{x}|,\Gamma)$, then the inequality $P(\mathbf{\hat{w}}\in \mathcal{U}^I_\Gamma) \geq 1- \epsilon$ holds by Theorem \ref{thm:pboundB}. Next, we use this result to define a variable uncertainty budget function $\gamma$, using the intuitive definition introduced by \citet{poss2014robust}: for matching $\mathbf{x}\in \mM$ and protection level $\epsilon$, we find the minimum $\Gamma$ such that $B(|\mathbf{x} |,\Gamma) \leq \epsilon$. If this is not possible (i.e., the matching is too small, or $\epsilon$ is too small), then $\gamma = |\mathbf{x}|$. This budget function is defined as:

% \label{eq:beta}
\vspace{-3mm}
{%\small
\begin{equation*}
\beta(|\mathbf{x}|) = \begin{cases}
|\mathbf{x} | \hspace{0.3in} \text{if $\min\limits_{\Gamma >0} \left\{ \Gamma \mid B(|\mathbf{x} |,\Gamma) \leq \epsilon \right\}$ is infeasible,}\\
\min\limits_{\Gamma  >0} \left\{ \Gamma \mid B(|\mathbf{x} |,\Gamma) \leq \epsilon \right\} \hspace{0.2in} \text{otherwise.} \\ 
\end{cases}
\end{equation*}
}
It may not be clear how to solve the edge weight robust problem with this variable uncertainty budget. We use the approach of \citet{poss2014robust}, which solves the variable-budget robust problem by solving several instances of the \emph{constant}-budget robust problem; details of this approach can be found in \Appref{sec:solgamma}. Thus, to solve the variable-budget robust problem we first solve the constant-budget robust problem.

\subsection{Constant-Budget Edge Weight Robust Approach}\label{ssec:weightrobformulation}

We now describe our approach to the constant-budget edge weight robust problem; a full discussion and derivation can be found in \Appref{app:wt}. We need to solve \Probref{eq:robkex} with edge weight uncertainty set $\mathcal{U}^I_\Gamma$. This requires a minimization of the objective, over $\mathbf{\hat{w}}\in \mathcal{U}^I_\Gamma$, followed by a maximization over matchings in $\mM$. 

First we \emph{directly minimize} the objective of \Probref{eq:robkex} over $\mathcal{U}^I_\Gamma$. That is, for any matching $\mathbf{x}\in \mM$, we find the minimum objective value for any realized edge weights in $\mathcal{U}^I_\Gamma$, denoted by $Z(\mathbf{x})$:
{\small
\begin{align}\label{eq:zwt}
Z(\mathbf{x}) &= \min\limits_{\mathbf{\hat{w}}\in \mathcal{U}^I_\Gamma} \mathbf{x} \cdot \mathbf{\hat{w}}
\end{align}%
}%
%\vspace{-5mm}
%
%\begin{align} \label{eq:robwt}
%\max\limits_{\mathbf{x} \in \mathcal{M}} \quad& Z(\mathbf{x})
%\end{align}
Thus, solving the robust problem corresponds to maximizing $Z(\mathbf{x})$ over all feasible matchings. Our approach to doing so is as follows. First, we linearize $Z(\mathbf{x})$ using several new variables and constraints; we then add these to an existing kidney exchange formulation~\cite{Dickerson16:Position}. The complete linear formulations of $Z(\mathbf{x})$ and \Probref{eq:robkex} are given in \Appref{sec:wtform}, but are omitted here for space. Our robust formulation is scalable---it has a polynomial count of variables and constraints, regardless of finite chain cap; on realistic exchanges it takes only a few seconds to solve.  We demonstrate our method's impact on match composition in Section~\ref{sec:experiments}, and show how it effectively controls for the impact of robustness using protection level $\epsilon$.

\section{Optimization in the Presence of Edge Existence Uncertainty}\label{sec:ex}

In this section we consider \emph{edge existence uncertainty}, where an algorithmic match must be chosen before the full realization of edges is revealed. Algorithmically-matched transplants in a kidney exchange can fail before transplantation for a variety of reasons: a patient may become too ill to undergo transplantation, or pre-transplantation testing may reveal that a patient is incompatible with her planned donor kidney. Furthermore, some edges are more likely to fail than others (e.g., edges into particularly sick patients). Edge failure significantly impacts fielded exchanges--with failure rates above 50\% in many cases~\cite{Dickerson18:Failure,Anderson15:Finding,Ashlagi13:Kidney}. 

\input{fig_compatibility_graph}

For illustration, consider the simple exchange in \Figref{fig:exchange} with two potential matchings: single 5-chain initiated by the NDD, or two 2-cycles (with pairs $\{1,4\}$ and $\{2,5\}$). The 5-chain matches the most patient, but is less robust to edge failures. Consider the \emph{worst-case} outcome for each matching, when 1 edge is \emph{guaranteed} to fail: with the 5-chain, in the worst-case the \emph{first} edge fails, causing the entire chain to fail; with the 2-cycles, a single edge failure only causes a \emph{single} cycle to fail, leaving the other cycle complete. With this notion of edge existence uncertainty (which we define later), the 2-cycles are more robust than the 5-chain.

Managing edge failure in kidney exchange has been addressed in the AI and optimization literature in application-specific~\cite{Manlove15:Paired,Chen12:Graph-based} or stochastic-optimization-based~\cite{Dickerson18:Failure,Dickerson16:Position,Anderson15:Finding,Klimentova16:Maximising} ways.  These \emph{failure-aware} approaches associate with each edge a pre-determined failure probability $p_e$; these probabilities are used to then maximize \emph{expected} matching score, possibly subject to some recourse actions. This stochastic approach is tractable when $p_e$ is identical for each edge. Our work addresses two major drawbacks of the failure-aware approach. First, when each edge has a unique $p_e$, those models require enumerating every cycle and chain, which is intractable for large graphs or long chains. Second, the failure-aware approach is very sensitive to $p_e$ (as discussed in, e.g., \S4.4 of~\citet{Dickerson18:Failure}). In practice, precise values of $p_e$ are not known, thus the failure-aware approach can easily produce unreliable results. We use a simpler notion of edge existence uncertainty, which assumes that in any matching, the number of edges is \emph{bounded} by a constant ($\Gamma$). This parameter is intuitive and simple to estimate from past exchanges. 

%\DCM{is our approach less sensitive to the choice of $\Gamma$? Does it matter?}

To our knowledge, ours is the first \emph{scalable} robust optimization approach to edge existence uncertainty in kidney exchange. \citet{glorie2014clearing} develops several elegant robust methods for edge existence uncertainty, but requires that all cycles and chains are found during pre-processing and stored in memory. The number of chains grows exponentially in both the number of edges and the maximum chain length; thus, these approaches are intractable for exchanges involving more than a few dozen patient-donor pairs and NDDs.  

\xhdr{Edge existence uncertainty.} Here we briefly describe our robust approach to edge existence uncertainty; a full discussion and derivation can be found in \Appref{app:ex}. For ease of exposition, in this section, decision variables $x_c\in \mathbf{x}$ correspond to cycles and chains rather than edges. We use the following model of edge existence uncertainty.

\begin{definition}[$\Gamma$-Failures Edge Existence Uncertainty]
Up to $\Gamma$ edges may fail in any matching. After failures occur, the realized exchange graph is $\hat{G}=(V,\hat{E})$, such that edges $\hat{E}\subseteq E$ succeed and remain in existence, while all other edges fail and do not exist.
\end{definition} 
With this notion of uncertainty, without regard to computational or memory constraints, a stochastic-optimization-based approach could identify the best matching over all possible realizations $\hat{G}$~\cite{Anderson15:Finding}. This is clearly intractable, as the number of realized graphs is exponential in $|E|$.  Instead, we take a robust optimization approach by maximizing the worst-case (minimum) matching score over a set of realizable graphs $\hat{G}$ in an uncertainty set $\mathcal{U}$. Like the stochastic approach, the robust approach considers a huge number of realizations $\hat{G}$; however the robust approach is far more tractable, as it need only find the worst-case realization and need not represent all realizable graphs explicitly.

\xhdr{Uncertainty set.} Let $F\subseteq E$ be the subset of failed edges for a realized graph $\hat{G}$; thus, $\hat{E} = E\setminus F$ is the set of realized edges. \Eqref{eq:usetex} defines uncertainty set $\mathcal{U}_\Gamma^{ex}$ in this way: up to $\Gamma$ edges may fail (i.e., $|F|\leq \Gamma$). 

\vspace{-2mm}
\begin{equation}\label{eq:usetex}
\mathcal{U}^{ex}_\Gamma= \left\{ \hat{G} = (V,\hat{E}) \mid  \hat{E} = E\setminus F, |F| \leq \Gamma \right\} 
\end{equation}
\vspace{-2mm}

In kidney exchange, one edge failure can cause other edge failures: if one cycle edge fails, all edges in the cycle also fail; edge failure in a chain causes all \emph{subsequent} chain edges to also fail. This leads to a notion of weight uncertainty for cycles and chains, where the realized weight of a cycle or chain $\hat{w}_c$ may be smaller than nominal weight $w_c$. Let $\alpha_c$ be a discount parameter for cycle or chain $c$, such that  $\hat{w}_c=w_c(1-\alpha_c)$. For example, if any edge fails in cycle $c$, then the entire cycle fails and $\alpha_c=0$. We define the cycle/chain weight uncertainty set $\mathcal{U}_\Gamma^{w}$ in this way:
\begin{equation*} \label{eq:usetexwt}
\mathcal{U}_\Gamma^{w} = \left\{ \mathbf{\hat{w}_c} \mid \hat{w}_c = w_c(1-\alpha_c), \alpha_c \in [0,1] ,\sum\limits_{c\in X} \alpha_i \leq \Gamma \right\}
\end{equation*}
This uncertainty set is less intuitive than $\mathcal{U}_\Gamma^{ex}$, but more suited to the robust approach. In \Appref{app:ex} we show that $\mathcal{U}_\Gamma^{w}$ and $\mathcal{U}_\Gamma^{ex}$ are equivalent for integer $\Gamma$, and thus can be used for our robust approach.

\subsection{Robust Optimization Approach}

In this section we briefly describe our robust approach; for a full discussion and derivation, please see \Appref{app:ex}. Our robust formulation for uncertainty set $\mathcal{U}_\Gamma^{w}$ follows a similar approach to \Secref{sec:wt}. First, we directly minimize the kidney exchange objective over $\mathcal{U}^{w}_\Gamma$, for some feasible solution $\mathbf{x}\in \mM$. We express this minimization as a function $Z(\mathbf{x})$: in effect, $Z(\mathbf{x})$ discounts the $\Gamma$ largest-weight cycles and chains. We then linearize $Z(\mathbf{x})$ using several variables and constraints---this requires a formulation with variables tracking individual total chain weights---which is not possible in any existing compact kidney exchange formulations. For this purpose, we introduce a new kidney exchange formulation.

\xhdr{The PI-TSP formulation.} We propose the position-indexed TSP formulation (PI-TSP); for details, please see \Appref{app:ex}. Our formulation combines innovations from the two leading kidney exchange clearing approaches: PICEF \cite{Dickerson16:Position} and PC-TSP \cite{Anderson15:Finding}. PICEF introduced an indexing schema that enables a more compact formulation in the context of long chains; our formulation builds on this to allow tracking of individual chain weights, a necessity that PICEF could not do.  PC-TSP builds on techniques from the prize-collecting travelling salesperson problem~\cite{Balas89:Prize} to provide a tight linear programming relaxation; in general, the PC-TSP formulation has exponentially many constraints and thus requires constraint generation to solve. Our formulation uses an efficient version of position indexing that also requires only $O(|E|)+O(|V|\cdot |N|)$ constraints. Unlike PICEF, our formulation does not grow with the chain cap $L$: PICEF uses $O(|V|^3)$ variables (when $L\rightarrow|V|$); for large graphs, the PICEF model becomes too large to fit into memory~\cite{Dickerson16:Position}. Our formulation uses a fixed number of variables---$O(|V|^2)$---for any chain cap, alleviating this memory problem. This is especially relevant to existing exchanges, as long chains can significantly increase efficiency in kidney exchange~\cite{Ashlagi12:Need}. PI-TSP uses the following parameters:
\begin{itemize}
\setlength\itemsep{0em}
\item $G$: kidney exchange graph,  %consisting of edges $e\in E$ and vertices $v \in V=P \cup N$,
\item $C$: a set of cycles on exchange graph $G$,
\item $L$: chain cap (maximum number of edges used in a chain),
\item $w_e$: edge weights for each edge $e\in E$,
\item $w^C_c$: cycle weights for each cycle $c\in C$,
\end{itemize}
and the following decision variables:
\begin{itemize}
\item $p_e \geq 1$: the position of edge $e$ in any chain,
\item $p_v \geq 1$: the position of patient-donor vertex $v$ in any chain,
\item $\hat{p}_e \geq 0$: equal to $p_e$ if $e$ is used in a chain, and $0$ otherwise,
\item $z_c \in \{0,1\}$: 1 if cycle $c$ is used in the matching, and 0 otherwise,
\item $y_e\in \{0,1\}$: 1 if edge $e$ is used in a chain, and 0 otherwise,
\item $y^n_e\in \{0,1\}$: 1 if edge $e$ is used in a chain starting with NDD $n$, and 0 otherwise,
\item $w^N_n$: total weight of the chain starting with NDD $n$,
\item $f^i_v$ and $f^o_v$: chain flow into and out of vertex $v\in P$,
\item $f^{i,n}_v$ and $f^{i,n}_v$: chain flow into and out of vertex $v\in P$, from the chain starting with NDD $n\in N$.
\end{itemize}
The PI-TSP formulation with chain cap $L$ is given in Problem \ref{eq:pitsp1}. We use the notation $\delta^-(v)$ for the set of edges into vertex $v$ and $\delta^+(v)$ for the set of edges out of $v$. 
\tiny
\begin{subequations}\label{eq:pitsp1}
\begin{align}
\text{max} \quad\sum\limits_{n\in N}w^N_{n} + \sum\limits_{c\in C} w^C_c z_c   \label{pitsp1:obj}\\
\text{s.t.} \quad \sum\limits_{e \in E} w_e y^n_e = w^N_n &\hspace{0.4in} n \in N \label{pitsp1:z}\\
\quad \sum\limits_{n \in N}y^n_e = y_e & \hspace{0.4in} e \in E \label{pitsp1:g}\\
 \quad\sum\limits_{e\in \delta^-(v)}y_e = f^i_v  &\hspace{0.4in} v\in V \label{pitsp1:b}\\
\quad\sum\limits_{e\in \delta^+(v)}y_e = f^o_v  &\hspace{0.4in} v\in V \label{pitsp1:c}\\
\quad \sum\limits_{e \in \delta^-(v)} y^n_e = f_v^{i,n} & \hspace{0.4in} v\in V, n\in N \label{pitsp1:i}\\
\quad \sum\limits_{e \in \delta^+(v)} y^n_e = f_v^{o,n} & \hspace{0.4in} v\in V, n\in N  \label{pitsp1:j}\\
\quad f^o_v + \sum\limits_{c\in C:v\in c}z_c \leq \quad f^i_v + \sum\limits_{c\in C:v\in c}z_c \leq 1 &\hspace{0.4in} v\in P \label{pitsp1:d} \\
\quad f^o_v \leq 1 & \hspace{0.4in} v\in N \label{pitsp1:e} \\
\quad p_e = 1 &\hspace{0.4in} e \in \delta^+(N) \label{pitsp1:c0}\\
\quad \hat{p}_e = p_e  y_e & \hspace{0.4in} e\in E  \label{pitsp1:c1}\\
\quad p_v = \sum\limits_{e\in \delta^-(v)} \hat{p}_e & \hspace{0.4in} v\in P \label{pitsp1:c2}\\
\quad p_e = p_v + 1 & \hspace{0.4in} v\in P,e\in \delta^+(v)  \label{pitsp1:c3}\\
\quad \sum\limits_{e \in E}y^n_e \leq L & \hspace{0.4in} n \in N \label{pitsp1:h}\\
\quad f_v^{o,n} \leq f^{i,v} \leq 1 & \hspace{0.4in} v\in V, n\in N  \label{pitsp1:k}\\
\quad y_e \in \{0,1\}& \hspace{0.4in} e\in E \\
\quad z_c \in \{ 0,1\} &\hspace{0.4in} c\in C  \\
\quad y_e^n \in \{0,1\}& \hspace{0.4in} e\in E, n \in N
\end{align}
\end{subequations}
\normalsize

The ability to express individual chain weights as decision variables has applications beyond robustness. For particularly valuable NDDs (such as those with so-called ``universal donor'' blood-type O), exchanges may enforce a \emph{minimum} chain length or chain weight, to ensure that these rare NDDs are not ``used up'' on short chains; such a policy was formerly used by the United Network for Organ Sharing~\cite{Dickerson12:Optimizing}, using a much less scalable form of optimization---that also does not consider uncertainty---than our approach~\cite{Abraham07:Clearing}. Such a policy can be implemented efficiently with PI-TSP, inefficiently with PC-TSP, and not with PICEF, where decision variables do not indicate from which NDD a chain originated. In \Appref{app:ex} we show--using real kidney exchange data--that PI-TSP can enforce a minimum chain length, and that this restriction has \emph{almost no} impact on overall matching score.

\section{Experimental Results}\label{sec:experiments}
In this section, we compare each robust formulation against the leading non-robust formulation, PICEF~\cite{Dickerson16:Position}, with varying levels of uncertainty. These experiments use real exchange graphs collected from the United Network for Organ Sharing (UNOS)---a large US-wide kidney exchange with over 160 participating transplant centers---between $2010$ and $2016$, as well simulated exchanges generated from known patient statistics using the standard method~\cite{Dickerson18:Failure}.\footnote{All experiments were implemented in Python and used Gurobi~\cite{Gurobi}, a state-of-the-art industrial combinatorial optimization toolkit, as a sub-solver. Our code is available on GitHub: {\tiny \texttt{\url{https://github.com/duncanmcelfresh/RobustKidneyExchange}}}.} 

For each exchange, we calculate the optimal non-robust matching $\MOPT{}$ (with total score $|\MOPT{}|$), and the robust matching $\MR{}$, for varying uncertainty budgets. We then draw many \emph{realizations} of the exchange graph, based on the uncertainty model, and calculate the realized scores of the robust matching $|\MR{}|$ and non-robust matching $|\MNR{}|$. We are primarily interested in the fractional difference from $|\MOPT{}|$, calculated as
$\DeltaOPT{M_{\{\MRsub{},\MNRsub{}\}}} = \left(|\MOPT{}|-|M_{\{\MRsub{},\MNRsub{}\}}|\right) / |\MOPT{}|. $
%$$\DeltaOPT{M_{\{\MRsub{},\MNRsub{}\}}} = \frac{|\MOPT{}|-|M_{\{\MRsub{},\MNRsub{}\}}|}{|\MOPT{}|}. $$

We calculate $\DeltaOPT{\MR{}}$ and $\DeltaOPT{\MNR{}}$ for $N=400$ realizations, and compare the robust and non-robust approaches.  In rare cases the optimal matching is empty (i.e., there is no solution, or the uncertainty budget exceeds the matching size), we exclude these exchanges from the results.

%\subsection{Edge Weight Uncertainty}
\xhdr{Edge Weight Uncertainty} We begin by exploring the impact on match utility of robust approaches to managing edge weight uncertainty.  Here, each edge is randomly labeled as \emph{probabilistic} (P) or \emph{deterministic} (D). P edges receive weight $0$ or $1$ with probability $0.5$, while D edges always receive weight $0.5$; thus, expected edge weight is always $0.5$. The non-robust approach maximizes \emph{expected} edge weight, making this a kind of stochastic approach. The robust approach considers the discount value ($0$ or $0.5$) of each edge, and avoids edges with a positive discount value. To vary the level of uncertainty, we vary the fraction of P edges ($\alpha$). Each realization is drawn by assigning the P edges to have weight either $0$ or $1$. 

We compute $\MR{}$ for protection levels $\epsilon \in \{10^{-4}, 10^{-3}, 10^{-2}, 10^{-1}, 0.5 \}$, and then calculate both $\DeltaOPT{\MR{}}$ and $\DeltaOPT{\MNR{}}$. \Figref{fig:wtresults} shows $\DeltaOPTraw{}$ on realistic $64$-vertex simulated graphs (left) and larger (typically $150$--$300$-vertex) real UNOS graphs (right); these figures show results  for each protection level $\epsilon$ and for various $\alpha$. Note that $\MNR{}$ does not depend on $\epsilon$, and thus the non-robust results are shown as (constant) dashed lines.

\begin{figure*}[ht!]
\centering
\includegraphics[width=.7\linewidth] {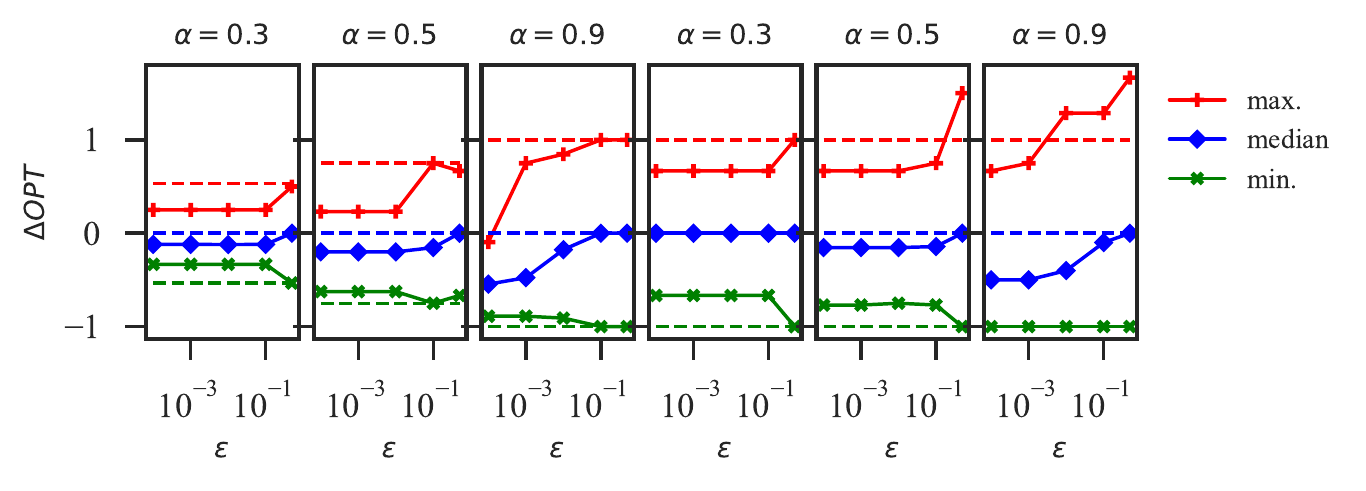}
\caption{$\DeltaOPTraw{}$ for non-robust (dashed lines) and edge weight robust (solid lines) matchings, for 64-vertex simulated exchange graphs (3 left plots) and real UNOS exchanges (3 right plots).}\label{fig:wtresults}
\end{figure*}

The robust approach guarantees a better worst-case (minimum) $\DeltaOPTraw{}$, but results in a lower median $\DeltaOPTraw{}$. The protection level $\epsilon$ controls the robustness of our approach; smaller $\epsilon$ protects against more uncertain outcomes, but at greater cost to nominal behavior. As $\epsilon\rightarrow 1$, the robust approach protects against fewer bad outcomes, and approaches the behavior of non-robust. 

%\subsection{Edge Existence Uncertainty}

\xhdr{Edge Existence Uncertainty} We now address edge existence uncertainty, and compare the robust and non-robust approaches with $\Gamma$ edge failures, for $\Gamma\in \{1,2,3,4,5 \}$. Each $\Gamma$ corresponds to a different notion of uncertainty, such that exactly $\Gamma$ edges fail.\footnote{This is slightly more conservative than the notion of uncertainty introduced previously; in \Secref{sec:ex}, \emph{up to} $\Gamma$ edges may fail, while in the experiments \emph{exactly} $\Gamma$ edges fail.}  For each $\Gamma$, we calculate $\MR{}$, and draw $N=400$ realizations by failing $\Gamma$ edges in the matching. 

We calculate $\DeltaOPTraw{}$ for each realization, and compare these results for the robust and non-robust matchings. With edge existence uncertainty, the worst-case outcome is almost always an empty matching ($\DeltaOPT{}=-1$). Thus, rather than compare the worst-case $\Delta OPT$, we compare the \emph{distribution} of $\DeltaOPTraw{}$ for each approach: we treat $\DeltaOPTraw{}$ as a random variable, and use three simple statistical tests to demonstrate that---as expected---the robust approach produces more conservative and predictable results.

First, we use the Wilcoxon signed-rank test to determine that the robust and non-robust approaches produce a different distribution of $\DeltaOPTraw{}$. For each $\Gamma$, this test produces $p$-values well below $10^{-3}$, indicating that the distributions of $\DeltaOPTraw{}$ \emph{are different} for the robust and non-robust approach. Second, for all exchanges and all $\Gamma$, the mean $\DeltaOPTraw{}$ is typically $1\%$ \emph{higher}, and the standard deviation $1$--$2\%$ \emph{lower} for the robust approach. That is, the robust approach more consistently produces higher-weight solutions.

\begin{figure*}[ht!]
\centering
\includegraphics[width=.7\linewidth] {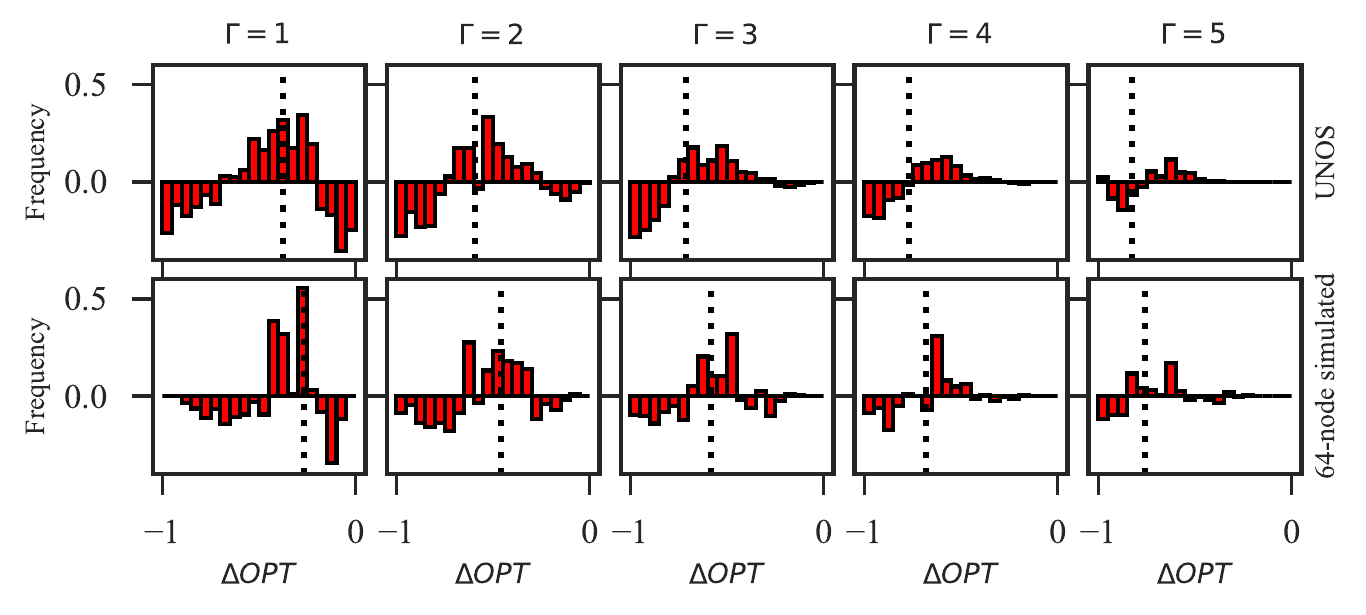}
\caption{Difference between the robust and non-robust histograms of $\DeltaOPTraw{}$ (robust minus non-robust) for real UNOS (top) and simulated exchanges (bottom), for various $\Gamma$. Dotted line: mean $\DeltaOPTraw{}$ for non-robust.}\label{fig:ex-hist}
\end{figure*}

Third, we visualize the difference between these distributions using their histograms. \Figref{fig:ex-hist} shows the bin-wise difference between the histograms of $\DeltaOPTraw{}$ (robust minus non-robust), with mean $\DeltaOPTraw{}$ for non-robust shown as a dotted line. In these plots, the height of the bars indicate the change in probability density due to robustness. On all plots, the bars are \emph{negative} for high and low values of $\DeltaOPTraw{}$, meaning that the robust matching is \emph{less likely} to have an abnormally high or low $\DeltaOPTraw{}$. The bars are \emph{positive} when $\DeltaOPTraw{}$ is near its mean non-robust value---meaning that the robust matching is \emph{more likely} to have a $\DeltaOPTraw{}$ near the mean non-robust value. This is exactly the desired behavior: robustness produces more predictable and less varied results. In this application robustness exceeds expectations: the robust approach achieves a lower variance, \emph{and} slightly improves nominal performance.

\section{Robustness as Fairness}\label{sec:fairness}
Balancing efficiency and fairness is a classic economic problem; recently, a body of literature covering fairness in kidney exchange has developed in the AI/Economics~\cite{Dickerson14:Price,McElfresh18:fair,Ashlagi13:Kidney,Ding18:Non-asymptotic} and medical ethics~\cite{Gentry05:Comparison} communities; \Appref{app:fair} presents a more thorough discussion.  We now draw connections between robustness and fairness in kidney exchange.  We show that budgeted edge weight uncertainty generalizes \emph{weighted fairness} in kidney exchange, a generalization of ``priority point'' systems used in practice (see, e.g., \cite{UNOS15:Revising}). Though seemingly unrelated, fairness and robustness share a key characteristic: the balance between two competing properties. Fairness rules in kidney exchange often mediate between a fair and efficient outcome, using a parameter to set the balance. Similarly, robustness mediates between a good nominal outcome with the worst-case outcome, using an uncertainty budget or protection level to set that balance. 

In kidney exchange, fairness most often refers to the prioritization of both pediatric and \emph{highly-sensitized} patients, who are unlikely to find a match due to medical characteristics that make them incompatible with nearly all potential donors. In the weighted fairness approach, edges that represent transplants to prioritized patients receive additional edge weight, making them more likely to be matched by standard algorithms; versions of this prioritization scheme are used by most exchanges, including UNOS. To generalize weighted fairness, let each edge have a \emph{priority weight} $\hat{w}_e\in [0,\infty)$, equal to the nominal weight $w_e$ multiplied by a factor $(1+\alpha_e)$, with $\alpha_e \in [-1,\infty)$. For example, we might set $\alpha_e>0$ for all edges \emph{into} prioritized patients; this will help prioritized patients, but will likely lower overall efficiency (a tradeoff often described as the \emph{price of fairness}~\cite{Caragiannis09:Efficiency,Bertsimas11:Price,Dickerson14:Price,McElfresh18:fair}).
%, or set $\alpha_e<0$ for all edges into non-prioritized patients. 

To balance fairness with efficiency, policymakers limit the degree of prioritization. Let $\mathcal{P}_\Gamma$ be a \emph{budgeted prioritization set}, which bounds the sum of absolute differences between each $w_e$ and $\hat{w}_e$; this prioritization set is given as:
{
\begin{equation*}
\mathcal{P}_\Gamma = \left\{ \mathbf{\hat{w}} \mid \hat{w}_e = w_e (1+\alpha_e),\alpha_e \geq -1, \sum\limits_{e\in E}\alpha_e w_e \leq \Gamma \right\}
\end{equation*}%
}%
As with edge weight uncertainty, the budget $\Gamma$ balances between fairness and efficiency. If $\Gamma$ is large, the algorithm might sacrifice matching size in order to match prioritized patients---but the maximum amount of efficiency sacrificed will be predictable, given $\Gamma$, which is attractive to policymakers. In \Appref{app:fair} we further develop this concept, propose fairness rules that use $\mathcal{P}_\Gamma$, and present some theoretical results regarding the balance between fairness and efficiency.

%\vspace{-2mm}
\section{Conclusions \& Future Research}\label{sec:conclusions}
%\vspace{-4mm}
In this paper, we presented the first \emph{scalable} robust formulations of kidney exchange.  Our methods address both uncertainty over the \emph{quality} and the \emph{existence} of a potential transplant.  On real and simulated data from a large, fielded kidney exchange, we showed that our methods (i) clear the market within seconds and (ii) result in more predictable and better quality matchings than the status quo.

%Edge existence uncertainty accounts for transplants that may fail, impacting the quality and feasibility of solutions. We introduce a simple notion of edge existence uncertainty, which is simpler to measure and interpret than existing probabilistic approaches. When uncertainty exists, the robust approach produces solutions with lower variance -- and somewhat surprisingly -- a higher nominal score than the non-robust approach. In addition, we present a novel kidney exchange formulation that allows long or uncapped chains, with a fixed number of decision variables and without requiring column generation.

Adapting automated ethical decision-making frameworks that aggregate noisy human value judgments~\cite{Noothigattu18:Making,Freedman18:Adapting,Bonnefon16:Social} into our robust formulation is a natural way to handle uncertainty in the weights determined by a committee of stakeholders.  Approaching \emph{dynamic} kidney exchange, where participants arrive and depart over time, via robust reinforcement learning methods would be fruitful~\cite{Lim13:Reinforcement,Xu10:Distributionally}.

{\scriptsize
\bibliographystyle{named}
\bibliography{refs,bib}
}
\clearpage
\appendix

\onecolumn
\begin{center}{\LARGE\bf Appendix to: Scalable Robust Kidney Exchange}\end{center}

\section{Edge Weight Robust Formulation}\label{app:wt}

We develop an edge weight robust formulation with uncertainty set $\mathcal{U}^I_\Gamma$, based on the position-indexed chain-edge formulation formulation (PICEF) introduced by \citet{Dickerson16:Position}. In \Secref{sec:picef} we review the PICEF formulation, and in \Secref{sec:wtform} we introduce our linear formulation for edge-weight robust kidney exchange. 

\Secref{sec:solGamma} and \Secref{sec:solgamma} describe the solution methods for constant uncertainty budget $\Gamma$ and variable uncertainty budget $\gamma(|\mathbf{x}|)$ for decision variables $\mathbf{x}$, respectively. 

For simplicity, we use the abbreviation $KEX(\mU)$ to refer to the robust kidney exchange problem, with uncertainty set $\mU$.

\subsection{PICEF Formulation}\label{sec:picef}

The position-indexed chain-edge formulation (PICEF) is a compact formulation proposed by \citet{Dickerson16:Position}, with a polynomial (with regard to the compatibility graph size and exogenous cycle cap) count of both variables and constraints. This formulation uses the following parameters:

\begin{itemize}
\item $G$: kidney exchange graph, consisting of edges $e\in E$ and vertices $v \in V=P \cup N$, including patient-donor pairs $P$ and NDDs $N$
\item $C$: a set of cycles on exchange graph $G$
\item $L$: chain cap (maximum number of edges used in a chain)
\item $w_e$: edge weights for each edge $e\in E$
\item $w^C_c$: cycle weights for each cycle $c\in C$, defined as $w^C_c = \sum_{e\in c} w_e$
\end{itemize}

This formulation uses one decision variable for each cycle, and several decision variables for each edge to represent chains:

\begin{itemize}
\item $z_c \in \{0,1\}$: 1 if cycle $c$ is used in the matching, and 0 otherwise
\item $y_{ek}\in \{0,1\}$: 1 if edge $e$ is used at position $k$ in a chain, and 0 otherwise
\end{itemize}

Note that edges between an NDD $n\in N$ and a patient-donor vertex $v\in P$ may only take position 1 in a chain, while edges between two patient-donor pairs may take any position $1,2,\dots, L$ in a chain. For convenience, we define the function $\mathcal{K}$ for each edge $e$, such that $\mathcal{K}(e)$ is the set of all possible positions that edge $e$ may take in a chain.

$$
\mathcal{K}(e) =
\begin{cases}
\{1\} & e \text{ begins in } n \in N \\
\{1,2,\dots,L\} & e  \text{ begins in } v\in P
\end{cases}
$$

We also use the following notation for flow into and out of vertices:

\begin{itemize}
\item $\delta^-(s)$ and $\delta^-(S)$: the set of edges into vertex $s$ or set of vertices $S$ 
\item $\delta^+(s)$ and $\delta^+(S)$: the set of edges out of vertex $s$ or set of vertices $S$ 
\end{itemize}

The PICEF formulation is given in \Probref{eq:picef}.

\begin{subequations}\label{eq:picef}
\begin{align}
\max \quad\sum\limits_{e\in E}\sum\limits_{k\in\mathcal{K}(e)} w_{e} y_{ek} + \sum\limits_{c\in C} w_c z_c   \label{picef:ob}\\
\text{s.t.} \quad\sum\limits_{e\in \delta^-(i)}\sum\limits_{k\in\mathcal{K}(e)} y_{ek} + \sum_{c\in C: i \in c} z_c\leq 1 &\hspace{0.4in} i\in P \label{picef:b}\\
\quad\sum\limits_{e\in \delta^+(i)} y_{e1} \leq 1&\hspace{0.4in} i \in N \label{picef:c} \\
\quad\sum\limits_{\begin{array}{l} e\in \delta^-(i) \wedge \\ k \in \mathcal{K}(e)\end{array} } y_{ek} \geq \sum\limits_{ e\in \delta^+(i)} y_{e,k+1}  &\hspace{0.4in}\begin{array}{l} i \in P \\ k \in \{1,\dots,L-1 \} \end{array} \label{picef:d}\\
\quad y_{ek} \in \{ 0,1\} &\hspace{0.4in} e \in E,k\in \mathcal{K}(e) \label{picef:e}\\
\quad z_c \in \{ 0,1\} &\hspace{0.4in} c\in C \label{picef:f}
\end{align}
\end{subequations}

The Objective (\ref{picef:ob}) maximizes the total weight of a matching, defined by the cycle decision variables $z_c$ and edge variables $y_{ek}$. Feasible matchings may only use each edge once, and must contain valid chains. Capacity constraints ensure that each edge is used at most once:

\noindent The capacity constraints for each vertex are as follows:
\begin{itemize}
\item Constraint \ref{picef:b}: each patient-donor vertex $i\in P$ may only participate in one cycle or one chain
\item Constraint \ref{picef:c}: each NDD  $i\in N$ may only participate in one chain
\end{itemize}

\noindent Valid chains must begin in an NDD, and conserve flow through patient-donor pairs:

\begin{itemize}
\item Constraint \ref{picef:d}: a patient-donor vertex $i \in P$ can only have an outgoing edge at position $k+1$ in a chain if it has an incoming edge at position $k$
\end{itemize}

In the next section we present the mixed integer linear program formulation for $KEX(\mathcal{U}^{I}_\Gamma)$, based on PICEF.

\subsection{Our Robust Formulation}\label{sec:wtform}

To simplify notation, let $\mM^P$ be the set of all feasible matchings for the PICEF formulation. The edge weight robust kidney exchange problem $KEX(\mathcal{U}^{I}_\Gamma)$ is given in \Eqref{eq:weightrobpicef}.

\begin{subequations}\label{eq:weightrobpicef}
\begin{align}
\max\min\limits_{\mathbf{w} \in \mathcal{U}^{I}_\Gamma} \quad\sum\limits_{e\in E}\sum\limits_{k\in\mathcal{K}(e)} w_{e} y_{ek} + \sum\limits_{c\in C} w_c z_c   \label{weightrobpicef:ob}\\
\text{s.t.} \quad (\mathbf{z},\mathbf{y})\in \mM^P
\end{align}
\end{subequations}
Proposition \ref{prop:uset} states that this problem is identical to the robust formulation with one-sided uncertainty set $\mathcal{U}^{I1}_\Gamma$---that is, $KEX(\mathcal{U}^I_\Gamma)=KEX(\mathcal{U}^{I1}_\Gamma)$.
\begin{proposition}\label{prop:uset}
The problems $KEX(\mathcal{U}^I_\Gamma)$ and $KEX(\mathcal{U}^{I1}_\Gamma)$ are equivalent.
\end{proposition}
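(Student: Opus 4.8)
The claim is that the robust kidney exchange problem with the two-sided interval uncertainty set $\mathcal{U}^I_\Gamma$ is equivalent to the same problem with a one-sided version $\mathcal{U}^{I1}_\Gamma$. The natural definition of the one-sided set is the one in which fractional deviations are restricted to be nonnegative (or, since weights can only be \emph{discounted} downward in the worst case for a maximization, to $\alpha_e \in [-1,0]$), i.e. $\mathcal{U}^{I1}_\Gamma = \{\hat{\mathbf{w}} \mid \hat w_e = w_e + d_e\alpha_e,\ -1 \le \alpha_e \le 0,\ \sum_{e} |\alpha_e| \le \Gamma\}$. Since $\mathcal{U}^{I1}_\Gamma \subseteq \mathcal{U}^I_\Gamma$, one direction of the inequality $KEX(\mathcal{U}^I_\Gamma) \le KEX(\mathcal{U}^{I1}_\Gamma)$ is immediate: a larger uncertainty set can only lower the inner minimum, hence lower the max-min value. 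The work is in the reverse inequality.

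\textbf{The plan.}
Fix any matching $(\mathbf{z},\mathbf{y}) \in \mM^P$ and consider the inner minimization $Z(\mathbf{x}) = \min_{\hat{\mathbf{w}} \in \mathcal{U}^I_\Gamma} \mathbf{x}\cdot\hat{\mathbf{w}}$. I would argue that for this fixed nonnegative decision vector, some optimal $\hat{\mathbf{w}}^\star$ lies in $\mathcal{U}^{I1}_\Gamma$. The objective $\mathbf{x}\cdot\hat{\mathbf{w}} = \sum_e (\text{coef}_e)\,(w_e + d_e\alpha_e)$ is linear in $\alpha$, and all coefficients on the $\alpha_e$ terms are $(\text{coef}_e) d_e \ge 0$, because the decision variables (the $y_{ek}$ and $z_c$, or equivalently the induced edge-usage indicators) are nonnegative and the discounts $d_e \ge 0$. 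Minimizing a linear function with nonnegative $\alpha$-coefficients over the feasible box-plus-budget region $\{|\alpha_e|\le 1,\ \sum|\alpha_e|\le\Gamma\}$ is clearly optimized by pushing every $\alpha_e$ to be nonpositive: given any feasible $\alpha$ with some $\alpha_e > 0$, replacing that component by $0$ keeps feasibility (it only decreases $\sum|\alpha_e|$) and does not increase the objective. Hence the minimum over $\mathcal{U}^I_\Gamma$ equals the minimum over $\mathcal{U}^{I1}_\Gamma$ for every fixed $\mathbf{x}\in\mM^P$, and taking the outer $\max$ over $\mM^P$ on both sides gives the equality $KEX(\mathcal{U}^I_\Gamma) = KEX(\mathcal{U}^{I1}_\Gamma)$.

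\textbf{Key steps, in order.}
First, state the definition of $\mathcal{U}^{I1}_\Gamma$ explicitly and note the containment $\mathcal{U}^{I1}_\Gamma \subseteq \mathcal{U}^I_\Gamma$, which yields $KEX(\mathcal{U}^I_\Gamma) \le KEX(\mathcal{U}^{I1}_\Gamma)$ by monotonicity of $\min$ under set enlargement. Second, fix an arbitrary $\mathbf{x}=(\mathbf{z},\mathbf{y})\in\mM^P$ and write the inner objective as an affine function of $\alpha$ with nonnegative coefficients on each $\alpha_e$, using $y_{ek},z_c \ge 0$ and $d_e \ge 0$. Third, run the componentwise truncation argument: any optimal (or any) $\alpha$ can be modified by zeroing out its positive components without increasing the objective or violating the constraints, so an optimizer exists in the one-sided region. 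Fourth, conclude $Z^{I}(\mathbf{x}) = Z^{I1}(\mathbf{x})$ pointwise and take $\max_{\mathbf{x}\in\mM^P}$.

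\textbf{Main obstacle.}
There is no deep obstacle here — the result is essentially a sign-of-coefficients observation — so the only real care needed is bookkeeping: confirming that in the PICEF-based objective the relevant decision quantities multiplying $d_e\alpha_e$ are genuinely nonnegative for \emph{every} edge (including chain edges at every position and cycle edges), so that the truncation step never backfires, and making sure the one-sided set $\mathcal{U}^{I1}_\Gamma$ is stated with the orientation ($\alpha_e \le 0$) that matches the worst case for a maximization problem. I would also remark briefly that this reduction is exactly what makes the later linearization of $Z(\mathbf{x})$ tractable, since a one-sided budgeted uncertainty set is precisely the regime in which Bertsimas–Sim-style dualization applies cleanly.
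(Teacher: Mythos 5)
Your proof is correct and follows essentially the same route as the paper's: the paper likewise observes that for a maximization with nonnegative decision variables the inner minimization over $\mathcal{U}^I_\Gamma$ only ever discounts weights downward (i.e.\ effectively restricts $\alpha_e$ to $[-1,0]$), which is exactly $\mathcal{U}^{I1}_\Gamma$. Your componentwise truncation argument and the containment inequality simply make explicit what the paper asserts more informally, so no further changes are needed.
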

\begin{proof}
In $KEX(\mathcal{U}^I_\Gamma)$ (Problem \ref{eq:weightrobpicef}), edge weights are minimized with respect to uncertainty set $\mathcal{U}^{I}_\Gamma$. The objective is minimized when up to $\Gamma$ edge weights are reduced by the maximum amount within $\mathcal{U}^{I}_\Gamma$ ($d_e$), and one edge weight is reduced by $(\Gamma-\floor{\Gamma})d_e$ . That is, $KEX(\mathcal{U}^I_\Gamma)$ only considers realized edge weights on the interval $\hat{w}_e \in [w_e - d_e, w_e]$. This is equivalent to restricting $\alpha_e$ to the interval $[-1,0]$ in $\mathcal{U}^{I}_\Gamma$, which is equivalent to $\mathcal{U}^{I1}_\Gamma$.
\end{proof}
Thus we must solve \Probref{eq:weightrobpicef1}, with uncertainty set $\mathcal{U}^{I1}_\Gamma$.
\begin{subequations}\label{eq:weightrobpicef1}
\begin{align}
\max\min\limits_{\mathbf{w} \in \mathcal{U}^{I1}_\Gamma} \quad\sum\limits_{e\in E}\sum\limits_{k\in\mathcal{K}(e)} w_{e} y_{ek} + \sum\limits_{c\in C} w_c z_c   \label{weightrobpicef1:ob}\\
\text{s.t.} \quad (\mathbf{z},\mathbf{y})\in \mM^P
\end{align}
\end{subequations}

Next we develop a MILP formulation for Problem \ref{eq:weightrobpicef1} by directly minimizing its Objective (\ref{weightrobpicef1:ob}). This minimum occurs when $\floor{\Gamma}$ edge weights are reduced by $d_e$, and one edge weight is reduced by $(\Gamma-\floor{\Gamma})d_e$. For this reason we refer to $d_e$ as the \emph{discount value} of edge $e$, and all edges that receive reduced weight in the robust matching are \emph{discounted}.

For simplicity, we define a variable $\hat{y}_e$ for each edge $e\in E$ such that $\hat{y}_e$ is $1$ if edge $e$ is used in the matching, and $0$ otherwise. Note that edge $e$ is used in the matching if it is used in a chain (i.e. any $y_{ek}=1$), or if it is used in a cycle (i.e. $z_c=1$ for any cycle $c$ containing $e$). Thus we define variables $\hat{y}_e$ using the following constraint.
\begin{align*}
\sum\limits_{k\in\mathcal{K}(e)}  y_{ek} + \sum\limits_{c\in C:e\in c} z_c = \hat{y}_e &\quad , e\in E \\
 \hat{y}_e \in \{0,1\}&, e\in E
\end{align*}

Next we minimize the Objective (\ref{weightrobpicef1:ob})w.r.t. $\mathcal{U}^{I1}_\Gamma$, by discounting up to $\Gamma$ edges. Note that if only $G<\Gamma$ edges are used in a matching, only $G$ edge weights may be discounted. Thus let $\Gamma'\equiv \min\{G, \Gamma \}$ be the number of discounted edges, with 
$$G=\sum\limits_{e\in E}\hat{y}_e,$$
the total number of edges used in the matching. To linearize the definition of $\Gamma'$ we introduce variable $h$, which is 1 if $G<\Gamma$ and 0 otherwise. The statement $\Gamma'= \min\left(G, \Gamma \right)$ can be linearized using the following constraints,
\begin{align*}
\Gamma - G  &\leq Wh \\
G  - \Gamma &\leq W(1-h) \\
G - Wh  &\leq \Gamma' \\
\Gamma - W(1-h) &\leq \Gamma'  \\
h &\in \{0,1\}
\end{align*}
where $W$ is a large constant.

The objective of \Probref{eq:weightrobpicef1} is minimized the the $\Gamma'$ discounted edges are those with the \emph{largest} discount value $d_e$. To select these edges we introduce variables $g_e \in\{0,1 \}$ for each edge $e\in E$. Let $m$ be the smallest $d_e$ of any discounted edge---that is, $m$ is the $\ceil{\Gamma'}^{th}$ highest $d_e$ of any edge used in the matching. We define $g_e$ as follows
\begin{equation*}
g_e= \begin{cases}
1 &\text{if } d_e \geq m \\
0 &\text{otherwise}
\end{cases}
\end{equation*}
That is, $g_e$ is 0 if $d_e$ is smaller than the $\ceil{\Gamma'}^{th}$ highest discount value of edges used in the matching, and 1 otherwise. We can define these variables using linear constraints in two steps. First, note that variables $g_e$ and $d_e$ must obey the same ordering relation. That is, $g_i \geq g_j \Leftrightarrow d_i \geq d_j$ must hold for all $i,j\in E,i\neq j$. Note that variables $d_e$ are constant, and can be sorted during pre-processing. Let $\geq_d$ indicate this ordering relation.

Next we ensure that only $\Gamma'$ edges are discounted. Note that $\hat{y}_e=1$ implies that edge $e$ is used in the matching. Edge $e$ should be discounted if it is used in the matching, and if $d_e$ is above the minimum discount value (that is, $g_e=1$). Thus, edge $e$ should be discounted if the following identity holds
$$g_e \hat{y}_e=1$$
Using this observation, we can ensure that exactly $\Gamma'$ edges are discounted with the following constraint,
$$\sum\limits_{e\in E} g_e  \hat{y}_e = \Gamma'.$$ 

For any feasible matching $M=(\mathbf{y},\mathbf{z})$, we can directly solve the minimization in \Probref{eq:weightrobpicef1} by discounting the $\Gamma'$ edges used in $M$ with the largest discount values. This is accomplished using variables $g_e$; \Eqref{eq:robsol} gives the solution of this minimization when $\Gamma$ is integer, which is expressed as a function $Z(\mathbf{y},\mathbf{z})$; the next section extends this formulation to accommodate non-integer $\Gamma$. 

\begin{subequations}\label{eq:robsol}
\begin{align}
Z( \mathbf{y},\mathbf{z})=  \quad\sum\limits_{e\in E}\sum\limits_{k\in\mathcal{K}(e)} w_{e} y_{ek} + \sum\limits_{c\in C} w_c z_c   -  \sum\limits_{e\in E}g_e d_e &\hat{y}_e\\
\text{s.t.} \quad \sum\limits_{k\in\mathcal{K}(e)}  y_{ek} + \sum\limits_{c\in C:e\in c} z_c = \hat{y}_e &, e\in E \\
\quad   \sum\limits_{e\in E}\hat{y}_e= G \\
\quad \Gamma - G  \leq W h \\
G  - \Gamma \leq W(1-h) \\
 G - W h  \leq \Gamma' \\
\quad \Gamma - W(1-h) \leq \Gamma'  \\
\quad \sum\limits_{e\in E} g_e \hat{y}_e = \Gamma'\\
\quad g_e,\hat{y}_e\in \{0,1\} &, e\in E\\
\quad g_{a} \geq_d g_{b} &, a,b\in E, a \neq b \\
\quad h \in \{0,1\}
\end{align}
\end{subequations}

%Applying this minimization solution directly to problem (\Eqref{eq:weightrobpicef1}), we arrive at the equivalent maximization problem in \Eqref{eq:robpicef1}.
%
%\begin{subequations}\label{eq:robpicef1}
%\begin{align}
%\max  \sum\limits_{e\in E}\sum\limits_{k\in\mathcal{K}(e)} w_{e} y_{ek} + \sum\limits_{c\in C} w_c z_c   -  \sum\limits_{e\in E}g_e d_e & \hat{y}_e \label{robpicef1:ob}\\
%\text{s.t.} \quad \sum\limits_{k\in\mathcal{K}(e)}  y_{ek} + \sum\limits_{c\in C:e\in c} z_c = \hat{y}_e &, e\in E \\
%\quad   \sum\limits_{e\in E} \hat{y}_e = G \\
%\quad \Gamma - G  \leq W h \\
%G  - \Gamma \leq W(1-h) \\
% G - W h  \leq \Gamma' \\
%\quad \Gamma - W(1-h) \leq \Gamma'  \\
%\quad\sum\limits_{e\in E} g_e  \hat{y}_e = \Gamma' \label{robpicef1:gammaprime}\\
%\quad g_e, \hat{y}_e\in \{0,1\} &, e\in E\\
%\quad g_{a} \geq_d g_{b} &, a,b\in E, a \neq b \\
%\quad h \in \{0,1\}\\
%\quad (\mathbf{z},\mathbf{y})\in \mM^P 
%\end{align}
%\end{subequations}
Note that this formulation contains two sets of quadratic terms: $g_e y_{ek}$ for $k\in \mathcal{K}(e)$ for $e\in E$, and $g_e z_c$ for $c\in C$ and $e\in E$. We linearize these terms in the following section, after considering non-integer $\Gamma$.

\paragraph{Non-Integer $\Gamma$}
The number of discounted edges $\Gamma'$ may be integer or non-integer valued.  When $\Gamma'$ is not integer valued, up to $\floor{\Gamma'}$ edges are fully discounted by value $d_e$, and the edge with the smallest discount value is discounted by $(\Gamma-\floor{\Gamma})d_e$. We include this fractional discount by using two sets of indicator variables $g^f_e$ and $g^p_e$ for all $e\in E$, and then discount each edge $e$ as follows:
\begin{itemize}
\item $e$ is fully discounted if $g^p_e=g^f_e=1$.
\item $e$ is discounted by fractional amount $(\Gamma-\floor{\Gamma})$ if $g^f_e=0$ and $g^p_e=1$
\item $e$ is not discounted if $g^f_e=g^p_e=0$.
\end{itemize}
Thus if $\Gamma'$ is integer, $g^f_e=g^p_e$ for all $e\in E$; if $\Gamma'$ is not integer, then $\ceil{\Gamma'}$ matching edges should be at least partially discounted ($g^p_e=1$), and $\floor{\Gamma'}$ matching edges should be fully discounted ($g^p_e=g^f_e=1$). These indicator variables are defined in the same way as $g_e$ in \Eqref{eq:robsol}: $g^f_e,g^p_e\in \{0,1\}$, and they obey the same ordering relation as $d_e$. However, the number of matching edges with $g^f_e=1$ can be different than the number of edges with $g^p_e=1$. 

First note that $\ceil{\Gamma'}$ matching edges must have $g^p_e=1$. Recall that $G$ is the number of matching edges, and $\Gamma'=\min(\Gamma,G)$; if $\Gamma<G$, then $\ceil{\Gamma'}=\ceil{\Gamma}$, and otherwise $\ceil{\Gamma'}=G$. The variable $h$ is defined to be 1 if $G < \Gamma$ and $0$ otherwise. Thus, we use the following constraint to require that $\ceil{\Gamma'}$ matching edges have $g^p_e=1$:
$$ \sum\limits_{e\in E} g^p_e  \hat{y}_e = h G + (1-h)\ceil{\Gamma}.$$
Similarly, we can require that $\floor{\Gamma'}$ edges have $g^f_e=1$ with the following constraint
$$ \sum\limits_{e\in E} g^f_e \hat{y}_e = h G + (1-h)\floor{\Gamma}.$$
Thus if $G<\Gamma$, then all $G$ matching edges have $g^f_e=g^p_e=1$; otherwise, there are $\ceil{\Gamma}$ matching edges with $g^p_e=1$, and $\floor{\Gamma}$ matching edges with $g^f_e=1$, where the matching edge with the smallest discount has $g^f_e=0$ and $g^p_e=1$.

Using these indicator variables, the new objective of the robust formulation is
\begin{align*} \max  \sum\limits_{e\in E}\sum\limits_{k\in\mathcal{K}(e)} w_{e} y_{ek} + \sum\limits_{c\in C} w_c z_c   -  \left(1-\Gamma + \floor{\Gamma} \right)\sum\limits_{e\in E}g^f_e d_e\hat{y}_e \\
- \left(\Gamma - \floor{\Gamma} \right)\sum\limits_{e\in E}g^p_e d_e \hat{y}_e
\end{align*}
which discounts an edge $e$ by weight $d_e$ if $g^f_e=g^p_e=1$, and by weight $d_e\left(\Gamma - \floor{\Gamma} \right)$ if $g^f_e=0$ and $g^p_e=1$. Note that there are two sets of quadratic terms in this problem: $g^f_e \hat{y}_e$ and $g^p_e \hat{y}_e$ for all $e\in E$. To linearize these terms we introduce the variables $\hat{g}^f_e \equiv g^f_e \hat{y}_e$ and $\hat{g}^p_e \equiv g^p_e \hat{y}_e$, which we define using the following constraints.
\begin{align*}
\begin{array}{rl}
\hat{g}^f_e &\leq g^f_e \\
\hat{g}^f_e &\leq \hat{y}_e \\
\hat{g}^f_e &\geq g^f_e+\hat{y}_e - 1 
\end{array} 
&, e\in E\\
\hat{g}^f_e \in \{0,1\} &, e\in E \\
\\
\begin{array}{rl}
\hat{g}^p_e &\leq g^p_e \\
\hat{g}^p_e &\leq \hat{y}_e \\
\hat{g}^p_e &\geq g^p_e+\hat{y}_e - 1 
\end{array} 
&, e\in E\\
\hat{g}^p_e \in \{0,1\} &, e\in E
\end{align*}

To linearize the term $hG$, we introduce variable $\hat{g}\equiv hG$, which is defined using the following constraints. As before, $W$ is a large constant.
\begin{align*}
\hat{h} &\leq hW \\
\hat{h} &\leq G \\
\hat{h} &\geq G - (1-h)W \\
\hat{h} &\geq 0
\end{align*}

Finally, for any feasible matching $M=(\mathbf{y},\mathbf{z})$, we can directly solve the minimization in problem \ref{eq:weightrobpicef1} by discounting the $\Gamma'$ edges used in $M$ with the largest discount values. This is accomplished using variables $g^f_e$ and $g^p_e$; \Eqref{eq:robsol2} gives the solution of this minimization for general $\Gamma>0$.

%% THE SOLUTION BELOW IS NOT LINEAR, BUT CORRECT
% 
%\begin{subequations}\label{eq:robsol}
%\begin{align}
%Z( \mathbf{y},\mathbf{z})=  \quad\max  \sum\limits_{e\in E}\sum\limits_{k\in\mathcal{K}(e)} w_{e} y_{ek} + \sum\limits_{c\in C} w_c z_c   -  \left(1-\Gamma + \floor{\Gamma} \right)\sum\limits_{e\in E}g^f_e d_e\hat{y}_e \\
%- \left(\Gamma - \floor{\Gamma} \right)\sum\limits_{e\in E}g^p_e d_e \hat{y}_e\\
%\text{s.t.} \quad \sum\limits_{k\in\mathcal{K}(e)}  y_{ek} + \sum\limits_{c\in C:e\in c} z_c = \hat{y}_e &, e\in E \\
%\quad   \sum\limits_{e\in E}\hat{y}_e= G \\
%\quad \Gamma - G  \leq W h \\
%G  - \Gamma \leq W(1-h) \\
% G - W h  \leq \Gamma' \\
%\quad \Gamma - W(1-h) \leq \Gamma'  \\
%\quad \sum\limits_{e\in E} g_e \hat{y}_e = \Gamma'\\
%\quad \sum\limits_{e\in E} g^p_e  \hat{y}_e = h G + (1-h)\ceil{\Gamma} \\
%\quad \sum\limits_{e\in E} g^f_e \hat{y}_e = h G + (1-h)\floor{\Gamma}\\
%\quad g^p_e,g^f_e,\hat{y}_e\in \{0,1\} &, e\in E\\
%\quad g^f_{a} \geq_d g^f_{b} &, a,b\in E, a \neq b \\
%\quad g^p_{a} \geq_d g^p_{b} &, a,b\in E, a \neq b \\
%\quad h \in \{0,1\}
%\end{align}
%\end{subequations}

\begin{subequations}\label{eq:robsol2}
\begin{align}
Z( \mathbf{y},\mathbf{z})=   \sum\limits_{e\in E}\sum\limits_{k\in\mathcal{K}(e)} w_{e} y_{ek} + \sum\limits_{c\in C} w_c z_c   -  \left(1-\Gamma + \floor{\Gamma} \right)&\sum\limits_{e\in E}\hat{g}^f_e d_e \\
- \left(\Gamma - \floor{\Gamma} \right)&\sum\limits_{e\in E}\hat{g}^p_e d_e \\
\text{s.t.} \quad \sum\limits_{k\in\mathcal{K}(e)}  y_{ek} + \sum\limits_{c\in C:e\in c} z_c = \hat{y}_e &, e\in E \\
\quad   \sum\limits_{e\in E}\hat{y}_e= G \\
\quad \Gamma - G  \leq W h \\
G  - \Gamma \leq W(1-h) \\
 G - W h  \leq \Gamma' \\
\quad \Gamma - W(1-h) \leq \Gamma'  \\
\quad \sum\limits_{e\in E} \hat{g}^p_e  =  \hat{h} + (1-h)\ceil{\Gamma} \\
\quad \sum\limits_{e\in E} \hat{g}^f_e  = \hat{h} + (1-h)\floor{\Gamma}\\
\quad \begin{array}{rl}
\hat{g}^f_e &\leq g^f_e \\
\hat{g}^f_e &\leq \hat{y}_e \\
\hat{g}^f_e &\geq g^f_e+\hat{y}_e - 1 
\end{array} &, e\in E\\
\quad \begin{array}{rl}
\hat{g}^p_e &\leq g^p_e \\
\hat{g}^p_e &\leq \hat{y}_e \\
\hat{g}^p_e &\geq g^p_e+\hat{y}_e - 1 
\end{array} 
&, e\in E\\
\quad \hat{h} \leq hW \\
\quad \hat{h} \leq G \\
\quad \hat{h} \geq G - (1-h)W \\
\quad g^p_e,g^f_e,\hat{y}_e\in \{0,1\} &, e\in E\\
\quad g^f_{a} \geq_d g^f_{b} &, a,b\in E, a \neq b \\
\quad g^p_{a} \geq_d g^p_{b} &, a,b\in E, a \neq b \\
\quad \hat{g}^p_e,\hat{g}^f_e \in \{0,1\} &, e\in E\\
\quad h \in \{0,1\}\\
\quad \hat{h} \geq 0
\end{align}
\end{subequations}

\Eqref{eq:robsol2} is the direct minimization of the Objective of $KEX(\mathcal{U}^{I1}_\Gamma)$ (\ref{weightrobpicef1:ob}). Thus we directly apply this minimization solution to the original \Probref{eq:weightrobpicef1}, to obtain the final linear formulation in \Eqref{eq:finalweight}.

 \begin{subequations}\label{eq:finalweight}
\begin{align}
 \quad\max  \sum\limits_{e\in E}\sum\limits_{k\in\mathcal{K}(e)} w_{e} y_{ek} + \sum\limits_{c\in C} w_c z_c   -  \left(1-\Gamma + \floor{\Gamma} \right)&\sum\limits_{e\in E}\hat{g}^f_e d_e \\
- \left(\Gamma - \floor{\Gamma} \right)&\sum\limits_{e\in E}\hat{g}^p_e d_e \\
\text{s.t.} \quad\sum\limits_{e\in \delta^-(i)}\sum\limits_{k\in\mathcal{K}(e)} y_{ek} + \sum_{c\in C: i \in c} z_c\leq 1 &\hspace{0.4in} i\in P \label{finalweight:b}\\
\quad\sum\limits_{e\in \delta^+(i)} y_{e1} \leq 1&\hspace{0.4in} i \in N \label{finalweight:c} \\
\quad\sum\limits_{\begin{array}{l} e\in \delta^-(i) \wedge \\ k \in \mathcal{K}(e)\end{array} } y_{ek} \geq \sum\limits_{ e\in \delta^+(i)} y_{e,k+1}  &\hspace{0.4in}\begin{array}{l} i \in P \\ k \in \{1,\dots,L-1 \} \end{array} \label{finalweight:d}\\
\quad \sum\limits_{k\in\mathcal{K}(e)}  y_{ek} + \sum\limits_{c\in C:e\in c} z_c = \hat{y}_e &\hspace{0.4in} e\in E \\
\quad   \sum\limits_{e\in E}\hat{y}_e= G \\
\quad \Gamma - G  \leq W h \\
G  - \Gamma \leq W(1-h) \\
 G - W h  \leq \Gamma' \\
\quad \Gamma - W(1-h) \leq \Gamma'  \\
\quad \sum\limits_{e\in E} \hat{g}^p_e  = \hat{h} + (1-h)\ceil{\Gamma} \\
\quad \sum\limits_{e\in E} \hat{g}^f_e  = \hat{h} + (1-h)\floor{\Gamma}\\
\quad \begin{array}{rl}
\hat{g}^f_e &\leq g^f_e \\
\hat{g}^f_e &\leq \hat{y}_e \\
\hat{g}^f_e &\geq g^f_e+\hat{y}_e - 1 
\end{array} &\hspace{0.4in} e\in E\\
\quad \begin{array}{rl}
\hat{g}^p_e &\leq g^p_e \\
\hat{g}^p_e &\leq \hat{y}_e \\
\hat{g}^p_e &\geq g^p_e+\hat{y}_e - 1 
\end{array} 
&\hspace{0.4in} e\in E\\
\quad \hat{h} \leq hW \\
\quad \hat{h} \leq G \\
\quad \hat{h} \geq G - (1-h)W \\
\quad g^f_{a} \geq_d g^f_{b} &\hspace{0.4in} a,b\in E, a \neq b \\
\quad g^p_{a} \geq_d g^p_{b} &\hspace{0.4in} a,b\in E, a \neq b \\
\quad y_{ek} \in \{ 0,1\} &\hspace{0.4in} e \in E,k\in \mathcal{K}(e)\\
\quad z_c \in \{ 0,1\} &\hspace{0.4in} c\in C \\
\quad g^p_e,g^f_e,\hat{y}_e\in \{0,1\} &\hspace{0.4in} e\in E\\
\quad \hat{g}^p_e,\hat{g}^f_e \in \{0,1\} &\hspace{0.4in} e\in E\\
\quad h \in \{0,1\} \\
\quad \hat{h} \geq 0
\end{align}
\end{subequations}

\subsection{Solution Method for Constant Budget $\Gamma$}\label{sec:solGamma}

%\subsection{Edge-Weight-Robust Formulation} \label{ssec:algedgeweight}

This section describes the algorithm for solving the edge-weight robust formulation in \Secref{sec:wtform}, when it is unreasonable to find all cycles in the exchange graph during preprocessing. We build on the cycle pricing method in \citet{Dickerson16:Position}, which in turn built on corrected versions of methods presented by~\citet{Glorie14:Kidney} and~\citet{Plaut16:Fast}.

This method begins by solving the LP relaxation of \Probref{eq:finalweight} on a \emph{reduced model} (using a small number of cycles), and then identifying \emph{positive-price cycles}---which may improve the solution---and adding these to the model. If no positive-price cycles exist, then the solution is optimal on the reduced LP relaxation. This process is known as the \emph{pricing problem}.

After optimizing the reduced LP relaxation, we proceed in one of two ways
\begin{enumerate}
\item If the solution is fractional, then we fix one of the fractional variables and branch, as in a standard branch-and-bound tree, 
\item If the solution is integral, then it is the optimal solution to \Probref{eq:finalweight}.
\end{enumerate}

This combination of cycle pricing and branch-and-bound is known as \emph{branch-and-price}.

Algorithm \ref{alg:dfs} is the branch-and-price method for solving  \Probref{eq:finalweight}. There are only two inputs to this algorithm: the kidney exchange graph $G$, and the set of fixed decision variables $\mathbf{X}_F$. At each branch in the search tree, a new decision variable is fixed to either 0 or 1 and added to $\mathbf{X}_F$. When both 1) no positive price cycles exist for reduced model $\mathbf{M}$ and solution $\mathbf{X}$, and 2) the solution $\mathbf{X}$ is integral, then $\mathbf{X}$ is returned.

\begin{algorithm}
\SetKwInOut{Input}{input}\SetKwInOut{Output}{output}
\caption{BranchAndPrice\label{alg:dfs}} 
%\begin{algorithmic}[1]
\Input{$G, \mathbf{X}_F$} %\Comment{...}
\Output{Optimal Matching $X$}

 Generate subset of cycles $C'$, in $G$\;
 Create reduced model $\mathbf{M}$, with cycles $C'$\;
 $ \mathbf{X}\gets$ Solve LP relaxation of $\mathbf{M}$ \;
 $C^+ \gets \mathrm{CyclePrice}(G,\mathbf{X})$\; % \Comment{Find positive-price cycles}
\While{$C^+ \neq \varnothing$}{% \Comment{While positive-price cycles exist}
 Add cycles $C^+$ to $\mathbf{M}$\;
 $ \mathbf{X}\gets$ solve LP relaxation of $\mathbf{M}$\;
 $C^+ \gets \mathrm{CyclePrice}(G,\mathbf{X})$\;% \Comment{Find positive-price cycles}
	} % end while
\uIf{$\mathbf{X}$ is fractional} {
 Find fractional binary variable $X_i\in \mathbf{X}$ closest to $0.5$\;
 $\mathrm{BranchAndPrice}(G,\mathbf{X}_F \cup (X_i=0))$\;
 $\mathrm{BranchAndPrice}(G,\mathbf{X}_F \cup (X_i=1))$\;
 }
\uElse{ 
 \Return{ $\mathbf{X}$}
 } % end else

%\end{algorithmic}
\end{algorithm}

The branch-and-price method in Algorithm \ref{alg:dfs} requires a cycle-pricing algorithm $\mathrm{GetCycles}$. This algorithm either returns positive-price cycles---using the reduced model $\mathbf{M}$ and the current solution to the LP relaxation, $\mathbf{X}$---or determines that none exist. We adapt the cycle-pricing algorithm usesd by \citet{Dickerson16:Position} to solve the PICEF formulation, which is based on \cite{Glorie14:Kidney} and \cite{Plaut16:Fast}. These algorithms calculate the price $p_c$ of cycle $c$ as

$$ p_c = \sum\limits_{e \in c} (w_e - \delta_{v}) $$

where $w_e$ is the weight of edge $e$ in cycle $c$, and $\delta_{e}$ is the dual value of the vertex where $e$ ends. In the edge-weight robust problem, each edge $e$ may receive its nominal weight $w_e$ or its discounted weight $(w_e - d_e)$. It is not obvious whether the nominal or discounted weights should be used during cycle pricing. 

To illustrate this problem, assume we know the optimal solution $\mathbf{X}$ to \Probref{eq:finalweight}, and the set of cycles $C$ used in $\mathbf{X}$. We consider two methods for cycle pricing.

\begin{enumerate}

\item Calculate cycle prices using  discounted edge weights $(w_e-d_e)$.

Assume that, for some cycle $c \in C$, none of the edges in $c$ are discounted in $\mathbf{X}$. During branch-and-price, it may occur that---before adding $c$ to the reduced model---the following inequalities hold

\begin{align*}
 \sum\limits_{e \in c} (w_e  - d_e- \delta_{v}) &\leq 0\\
 \sum\limits_{e \in c} (w_e - \delta_{v}) & >0
\end{align*}

If discounted edge weights are used during pricing, $c$ appears to have negative price---and will not be added to the reduced model. In this case, the calculated price is incorrectly negative, branch-and-price may return a sub-optimal solution. 

\item Calculate cycle prices using nominal edge weights $w_e$.

Assume that, for some cycle $c' \not\in C$, \emph{all} of the edges in $c'$ are discounted when it is added to the reduced model. It may occur that the following inequalities hold:

\begin{align*}
 \sum\limits_{e \in c'} (w_e  - d_e - \delta_{v}) &\leq 0\\
 \sum\limits_{e \in c'} (w_e - \delta_{v}) & >0
\end{align*}

In this case, using nominal edge weights for cycle pricing will incorrectly determine that $c'$ has a positive price, and will add $c'$ to the reduced model. 
\end{enumerate}

Neither of these methods is ideal---using discounted weights can result in a sub-optimal solution, while using nominal weights adds cycles to the reduced model. Instead, we calculate cycle prices using discounted edge weights \emph{only} for edges that will be discounted in \emph{any} matching, and nominal edge weights for all other edges. As discussed in \Secref{sec:wtform}, up to $\Gamma$ edges are discounted in every solution to \Probref{eq:finalweight}; these are the edges with the largest discount values $d_e$. For any exchange graph with $|E|$ edges, the $\min(\Gamma,|M|)$ edges with the largest discount values are \emph{always} discounted if they are used in a solution to \Probref{eq:finalweight}. Algorithm \ref{alg:price} describes this method, which uses the cycle pricer of \cite{Glorie14:Kidney} as a subroutine. Proposition \ref{prop:pricing} states that this method never incorrectly determines that a cycle has negative price---and therefore never results in a sub-optimal solution.

\begin{proposition}\label{prop:pricing}
Algorithm \ref{alg:price} never determines that a positive-price cycle has a negative price.
\end{proposition}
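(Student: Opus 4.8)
The plan is to show that the hybrid cycle-pricing rule in Algorithm~\ref{alg:price} assigns to every cycle a price that is an \emph{upper bound} on the ``true'' price that cycle could ever have in a realized (discounted) matching, so that whenever the true price is positive the computed price is also positive — hence a positive-price cycle is never discarded. Concretely, fix the optimal solution $\mathbf{X}$ to \Probref{eq:finalweight} and let $C$ be its cycles. For any cycle $c$, write $S^* \subseteq E$ for the set of edges that are \emph{always} discounted when used in any matching — by the discussion in \Secref{sec:wtform}, these are exactly the $\min(\Gamma,|M|)$ edges with the largest discount values $d_e$. Algorithm~\ref{alg:price} prices each edge $e$ at $w_e - d_e$ if $e \in S^*$ and at $w_e$ otherwise, then computes $p_c = \sum_{e\in c}(\text{priced weight of }e - \delta_v)$. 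I would first observe that in \emph{any} feasible matching the realized weight of edge $e$ lies in $[w_e - d_e, w_e]$, and if $e \in S^*$ it is exactly $w_e - d_e$; therefore the weight used by Algorithm~\ref{alg:price} is always $\ge$ the realized weight of $e$ in the matching under consideration. Summing the vertex-dual-adjusted edge contributions over $c$, the computed $p_c$ is at least the true price of $c$ with respect to the current LP relaxation and its duals.

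The second step is to connect this domination to correctness of branch-and-price. The standard argument (from \citet{Glorie14:Kidney,Plaut16:Fast,Dickerson16:Position}) is that branch-and-price returns the optimum provided that whenever a cycle could improve the reduced LP it is found to have positive price. The ``bad'' failure mode enumerated in case~1 above is precisely: a cycle $c$ that is not discounted in $\mathbf{X}$ but is priced with discounted weights, so $\sum_{e\in c}(w_e - d_e - \delta_v)\le 0$ while $\sum_{e\in c}(w_e - \delta_v)>0$. Under Algorithm~\ref{alg:price} this cannot happen, because the only edges priced with a discount are those in $S^*$, which are discounted in \emph{every} matching including $\mathbf{X}$; so for any edge of $c$ that is \emph{not} discounted in $\mathbf{X}$ we use its nominal weight $w_e$, and for any edge that \emph{is} discounted in $\mathbf{X}$ we use $w_e - d_e$ — matching exactly the weight that $c$ contributes to $\mathbf{X}$'s objective. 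Hence the computed $p_c$ equals the true improvement price of $c$ on the edges where it matters, and in particular $p_c \ge$ (true price of $c$), so $p_c$ is never negative when the true price is positive.

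The main obstacle I anticipate is making the ``true price'' notion precise when a cycle's edges may or may not fall among the top-$\Gamma$ discounted edges \emph{for the reduced model at hand} versus \emph{for the full model}: the set $S^*$ is defined by the $\min(\Gamma,|M|)$ largest $d_e$ over edges used in the solution, and $|M|$ is itself a decision variable, so one must argue that the edges Algorithm~\ref{alg:price} treats as ``always discounted'' are a \emph{subset} of the edges discounted in the true optimal matching — which holds because the discount-eligible set only shrinks as $|M|$ grows past $\Gamma$, and the top-$d_e$ edges are discounted first. I would handle this by a short monotonicity lemma: if edge $e$ is among the $\Gamma$ largest-$d_e$ edges overall and is used in matching $M$ with $|M|\ge \Gamma$, then $e$ is discounted in $M$; consequently pricing $e$ at $w_e - d_e$ never over-discounts relative to the true optimum. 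The remaining steps — that over-estimating edge contributions can only over-estimate $p_c$, and that an over-estimated price is ``safe'' for branch-and-price termination/optimality — are routine and follow the cited correctness proofs, so I would state them briefly rather than re-derive them.
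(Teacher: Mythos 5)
Your argument is correct and is essentially the justification the paper itself relies on: the paper states Proposition~\ref{prop:pricing} without a formal proof, leaning on exactly the observation you formalize---the only edges priced at $w_e - d_e$ by Algorithm~\ref{alg:price} are those whose discount value is among the $\Gamma$ largest in $E$, and such edges are discounted in \emph{every} solution in which they appear, so the computed price can only over-estimate (never under-estimate) a cycle's true price, and a truly positive-price cycle is never reported negative. One small overstatement in your second paragraph: an edge can be discounted in $\mathbf{X}$ without being among the $\Gamma$ largest discounts in all of $E$, in which case Algorithm~\ref{alg:price} prices it at the nominal $w_e$ rather than $w_e - d_e$, so the computed price need not ``match exactly'' the cycle's contribution to $\mathbf{X}$'s objective---but since this substitution only inflates the computed price, it does not affect the one-sided bound or the conclusion.
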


%\begin{proof}
% TODO
%\end{proof}

\begin{algorithm}
\SetKwInOut{Input}{input}\SetKwInOut{Output}{output}
\caption{CyclePrice\label{alg:price}} 
\Input{$G=(V,E),\mathbf{X}$} %\Comment{...}
\Output{Cycle Prices}
%\Begin{
 $d^* \gets \Gamma^{th}$ highest discount value $d_e$ in $E$\;
 $w^*_e \gets \begin{cases} w_e - d_e &\text{if  } d_e \geq d^* \\ w_e &\text{otherwise} \end{cases}$\;
 \Return{$\mathrm{PositivePriceCycles}(G,L,\mathbf{X},w^*_e)$, the cycle pricer from \cite{Glorie14:Kidney}} % \Comment{Cycle Pricer from \cite{Glorie14:Kidney}}
%}
\end{algorithm}

\subsection{Solution Method for Variable Budget $\gamma$}\label{sec:solgamma}

In this section we describe a method for solving the edge-weight robust kidney exchange problem with variable budget, $KEX(\mathcal{U}^{I1}_{\gamma})$. Theorem \ref{thm:cardrestrict} is a direct adaptation of Theorem 4 of \cite{poss2014robust} to the edge-weight uncertain kidney exchange problem, which states that the solution of $KEX(\mathcal{U}^{I1}_{\gamma})$ can be found by solving several cardinality-restricted instances of $KEX(\mathcal{U}^{I1}_\Gamma)$.

\begin{theorem}\label{thm:cardrestrict}

Let $\mM$ be the set of feasible matchings, with edge decision variables $\mathbf{x}\in \mM \subset \{0,1\}^{|E|}$. The solution to $KEX(\mathcal{U}^{I1}_{\gamma})$ can be found by solving $|E|$ cardinality-restricted instances of $KEX(\mathcal{U}^{I1}_\Gamma)$, 
\begin{subequations}
\begin{align*}
\max\min\limits_{\mathbf{\hat{w}}\in \mathcal{U}^I_\Gamma}\quad& \mathbf{x} \cdot \mathbf{\hat{w}}\\
\text{s.t.}\quad &\mathbf{x}\in \mM \\
&\| \mathbf{x} \| \leq k \\
&\Gamma = \gamma(k)
\end{align*}
\end{subequations}
with $k=1,\dots,|E|$, and taking the maximum-weight solution.
\end{theorem}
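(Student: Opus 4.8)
The plan is to adapt the argument of Poss (Theorem~4 of \cite{poss2014robust}) to our setting, the only structural facts we need being that $\mM$ consists of binary vectors indexed by edges and that $\gamma$ depends on a candidate solution only through its cardinality $\|\mathbf{x}\|$ (equivalently, through the number of edges it uses). First I would write out the variable-budget robust problem explicitly as
\begin{align*}
\max_{\mathbf{x}\in\mM}\ \min_{\mathbf{\hat w}\in\mathcal{U}^{I1}_{\gamma}}\ \mathbf{x}\cdot\mathbf{\hat w},
\end{align*}
and observe that the inner minimization, by the definition of $\mathcal{U}^{I1}_{\gamma}$, uses the budget $\gamma(|\mathbf{x}|)$, which is a \emph{constant} once $\mathbf{x}$ is fixed. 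So for a fixed $\mathbf{x}$ the inner value equals $\min_{\mathbf{\hat w}\in\mathcal{U}^{I1}_{\Gamma}}\mathbf{x}\cdot\mathbf{\hat w}$ with $\Gamma=\gamma(|\mathbf{x}|)$; call this $Z_\Gamma(\mathbf{x})$.

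The key step is to stratify $\mM$ by cardinality. Let $\mM_k=\{\mathbf{x}\in\mM : \|\mathbf{x}\|\le k\}$ for $k=1,\dots,|E|$; note $\mM_{|E|}=\mM$. For $\mathbf{x}\in\mM$ with $\|\mathbf{x}\|=j$, we have $\gamma(|\mathbf{x}|)=\gamma(j)$, so the objective of the variable-budget problem at $\mathbf{x}$ is exactly $Z_{\gamma(j)}(\mathbf{x})$. Now consider the $k$-th cardinality-restricted instance in the statement: its optimum is $\max\{Z_{\gamma(k)}(\mathbf{x}) : \mathbf{x}\in\mM,\ \|\mathbf{x}\|\le k\}$. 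I would take the maximum of these over $k=1,\dots,|E|$ and show it equals the variable-budget optimum in two inequalities. For ``$\le$'': each $k$-th instance is maximized at some $\mathbf{x}^{(k)}$ with $\|\mathbf{x}^{(k)}\|=j\le k$; since increasing the uncertainty budget can only lower the worst-case value (the feasible region of the inner min grows), $Z_{\gamma(k)}(\mathbf{x}^{(k)})\le Z_{\gamma(j)}(\mathbf{x}^{(k)})$ whenever $\gamma$ is nondecreasing and $j\le k$ — wait, we actually want the reverse direction here, so more carefully: $Z_{\gamma(k)}(\mathbf{x}^{(k)})$ is a \emph{lower} bound on the true objective only if $\gamma(k)\ge\gamma(j)$, and monotonicity of $\gamma$ gives $\gamma(k)\ge\gamma(j)$, hence $Z_{\gamma(k)}(\mathbf{x}^{(k)})\le Z_{\gamma(j)}(\mathbf{x}^{(k)})=$ objective value at $\mathbf{x}^{(k)}\le$ variable-budget optimum. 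For ``$\ge$'': let $\mathbf{x}^\star$ be optimal for the variable-budget problem with $\|\mathbf{x}^\star\|=j^\star$; then $\mathbf{x}^\star$ is feasible for the $j^\star$-th instance, and there $Z_{\gamma(j^\star)}(\mathbf{x}^\star)$ equals the variable-budget optimum exactly, so the $j^\star$-th instance — and hence the max over all $k$ — is at least that value.

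The main obstacle is the monotonicity subtlety in the ``$\le$'' direction: the argument needs that $\gamma$ is nondecreasing in $|\mathbf{x}|$ and that $Z_\Gamma(\mathbf{x})$ is nonincreasing in $\Gamma$. The latter is immediate since enlarging $\Gamma$ only enlarges the uncertainty set $\mathcal{U}^{I1}_\Gamma$ over which we minimize. The former should be checked from the definition of $\beta(|\mathbf{x}|)$: one must verify that $\min_{\Gamma>0}\{\Gamma : B(|\mathbf{x}|,\Gamma)\le\epsilon\}$ is nondecreasing in $|\mathbf{x}|$, which follows from the corresponding monotonicity of the tail bound $B$ established (implicitly) in \cite{Bertsimas04:Price}; if that monotonicity failed one would instead have to argue via the cardinality-restricted feasible sets being nested, $\mM_1\subseteq\mM_2\subseteq\cdots\subseteq\mM_{|E|}$, together with the fact that on $\{\mathbf{x} : \|\mathbf{x}\|=j\}$ every instance with $k\ge j$ uses the same $\mathbf{x}$ but a budget $\gamma(k)$ that may differ from $\gamma(j)$ — and then the equality only holds because the maximum over $k$ "picks out" the right one, $k=j^\star$. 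I would therefore phrase the proof so that the essential content is: (i) the inner minimum's dependence on $\mathbf{x}$ factors through $|\mathbf{x}|$; (ii) nestedness of the $\mM_k$; and (iii) for the optimal $\mathbf{x}^\star$, the instance indexed by $k=\|\mathbf{x}^\star\|$ reproduces the variable-budget objective exactly while no instance can exceed it.
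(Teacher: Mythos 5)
Your proposal is correct and takes essentially the same route as the paper, which omits the proof and defers verbatim to Theorem 4 of \citet{poss2014robust}: the cardinality stratification, the exact-budget match at $k=\|\mathbf{x}^\star\|$ for the ``$\geq$'' inequality, and monotonicity of $\gamma$ combined with nestedness of the uncertainty sets $\mathcal{U}^{I1}_\Gamma$ for the ``$\leq$'' inequality are precisely the ingredients of that argument. One caution: your fallback remark for a non-monotone $\gamma$ would not actually salvage the ``$\leq$'' direction under the $\|\mathbf{x}\|\leq k$ restriction (a non-monotone budget can make a cardinality-$j$ solution look strictly better than the true optimum inside a $k>j$ instance), but this is moot here because $\beta$ is indeed nondecreasing in $|\mathbf{x}|$ (since $B(n,\Gamma)$ is nondecreasing in $n$ for fixed $\Gamma$), so your main argument stands.
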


The proof of this theorem is identical to the proof of Theorem 4 in \citet{poss2014robust}, and is omitted here. In practice, feasible matchings use far fewer than $|E|$ edges, and thus many fewer than $|E|$ instances of $KEX(\mathcal{U}^{I1}_\Gamma)$ must be solved. Algorithm \ref{alg:variablegamma} describes our method for solving $KEX(\mathcal{U}^{I1}_\gamma)$, which first finds the maximum cardinality matching, and then solves each cardinality-restricted problem $KEX(\mathcal{U}^{I1}_\Gamma)$.

\begin{algorithm}
\caption{EdgeWeightRobust-$\gamma$\label{alg:variablegamma}}
\SetKwInOut{Input}{input}\SetKwInOut{Output}{output}
\Input{Function $\gamma$, exchange graph $G$}
\Output{Optimal matching $\mathbf{x}$}
Find the maximum cardinality matching  $\mathbf{x}_C$\;
\For{$k \leftarrow 1$ to $\|\mathbf{x}_C\|$}{
$\Gamma \leftarrow \gamma(k)$\;
$\mathbf{x}^*_k\leftarrow $ solution to $KEX(\mathcal{U}^{I1}_\Gamma)$, restricting cardinality to $k$\;
}
\Return{The maximum-weight matching in $\{\mathbf{x}^*_k\}$}
\end{algorithm}

\section{Edge Existence Robust Formulation}\label{app:ex}

In this section we develop an edge existence robust formulation for kidney exchange, using uncertainty set $\mathcal{U}_\Gamma^{w}$. Our approach is based on a formulation introduced by \citet{Anderson15:Finding}, which adapts a formulation of the prize-collecting traveling salesman problem (PC-TSP). For simplicity, we use the abbreviation $KEX(\mU)$ to refer to the robust kidney exchange problem, with uncertainty set $\mU$.

\subsection{PC-TSP Formulation}
We begin by overviewing the PC-TSP method proposed by~\citet{Anderson15:Finding}; it is based on a method for solving the prize-collecting traveling salesman problem (PC-TSP) introduced by~\citet{Balas89:Prize}. We use a version of the PC-TSP formulation with a finite chain cap; the uncapped formulation is much more compact. (Due to high failure rates, most fielded exchanges incorporate a finite maximum length of chains.  That cap can be quite high, e.g., $20$ or more, but is typically not allowed to float freely with parts of the input size, e.g., $|V|$.)  This formulation is especially useful because it allows us to define decision variables equal to each chain weight used in the matching, without explicitly enumerating all possible chains. 

This formulation uses all of the same parameters as PICEF:
\begin{itemize}
\item $G$: kidney exchange graph, consisting of edges $e\in E$ and vertices $v \in V=P \cup N$, including patient-donor pairs $P$ and NDDs $N$.
\item $C$: a set of cycles on exchange graph $G$.
\item $L$: chain cap (maximum number of edges used in a chain).
\item $w_e$: edge weights for each edge $e\in E$.
\item $w^C_c$: cycle weights for each cycle $c\in C$, defined as $w^\mathcal{C}_c = \sum_{e\in c} w_e$.
\end{itemize}

PC-TSP uses one decision variable for each cycle ($z_c$) and each edge ($y_e$), and several auxiliary decision variables that help define the constraints:

\begin{itemize}
\item $z_c \in \{0,1\}$: 1 if cycle $c$ is used in the matching, and 0 otherwise.
\item $y_e\in \{0,1\}$: 1 if edge $e$ is used in a chain, and 0 otherwise.
\item $y^n_e\in \{0,1\}$: 1 if edge $e$ is used in a chain starting with NDD $n$, and 0 otherwise.
\item $w^N_n$ (auxiliary): total weight of the chain starting with NDD $n$.
\item $f^i_v$ and $f^o_v$ (auxiliary): chain flow into and out of vertex $v\in P$, respectively.
\item $f^{i,n}_v$ and $f^{i,n}_v$ (auxiliary): chain flow into and out of vertex $v\in P$, respectively, from a chain beginning with NDD $n\in N$.
\end{itemize}

The PC-TSP formulation with chain cap $L$ is given in Problem \ref{eq:tsp}. As before, we use the notation $\delta^-(v)$ for the set of edges into vertex $v$ and $\delta^+(v)$ for the set of edges out of $v$. 

\begin{subequations}\label{eq:tsp}
\begin{align}
\text{max} \quad\sum\limits_{n\in N}w^N_{n} + \sum\limits_{c\in C} w^C_c z_c   \label{tsp:obj}\\
\text{s.t.} \quad \sum\limits_{e \in E} w_e y^n_e = w^N_n &\hspace{0.4in} n \in N \label{tsp:z}\\
\quad \sum\limits_{n \in N}y^n_e = y_e & \hspace{0.4in} e \in E \label{tsp:g}\\
 \quad\sum\limits_{e\in \delta^-(v)}y_e = f^i_v  &\hspace{0.4in} v\in V \label{tsp:b}\\
\quad\sum\limits_{e\in \delta^+(v)}y_e = f^o_v  &\hspace{0.4in} v\in V \label{tsp:c}\\
\quad \sum\limits_{e \in \delta^-(v)} y^n_e = f_v^{i,n} & \hspace{0.4in} v\in V, n\in N \label{tsp:i}\\
\quad \sum\limits_{e \in \delta^+(v)} y^n_e = f_v^{o,n} & \hspace{0.4in} v\in V, n\in N  \label{tsp:j}\\
\quad f^o_v + \sum\limits_{c\in C:v\in c}z_c \leq \quad f^i_v + \sum\limits_{c\in C:v\in c}z_c \leq 1 &\hspace{0.4in} v\in P \label{tsp:d} \\
\quad f^o_v \leq 1 & \hspace{0.4in} v\in N \label{tsp:e} \\
\quad \sum\limits_{e\in \delta^-(S)}y_e \geq f^i_v & \hspace{0.4in} S \subseteq P,v\in S \label{tsp:f} \\
\quad \sum\limits_{e \in E}y^n_e \leq L & \hspace{0.4in} n \in N \label{tsp:h}\\
\quad f_v^{o,n} \leq f^{i,v} \leq 1 & \hspace{0.4in} v\in V, n\in N  \label{tsp:k}\\
\quad y_e \in \{0,1\}& \hspace{0.4in} e\in E \\
\quad z_c \in \{ 0,1\} &\hspace{0.4in} c\in C  \\
\quad y_e^n \in \{0,1\}& \hspace{0.4in} e\in E, n \in N
\end{align}
\end{subequations}

% these variables are unrestricted, but I guess we can include them if we want...
% \quad f_v^{i}, f_v^{o} \geq 0 & \hspace{0.4in} v\in P\\
% \quad f_v^{i,n},f_v^{o,n} \geq 0 & \hspace{0.4in} v\in P,n \in N\\

The objective \ref{tsp:obj} maximizes the total weight of a matching, defined by the cycle decision variables $z_c$ and edge decision variables $y^e$. The auxiliary variables are defined using the following constraints:

\begin{itemize}
\item Constraint \ref{tsp:z}: defines $w^N_n$.
\item Constraint \ref{tsp:g}: defines $y_e$, using $y^n_e$.
\item Constraints \ref{tsp:b} and \ref{tsp:c}: define auxiliary variables $ f^i_v$ and $f^o_v$.
\item Constraints \ref{tsp:i} and \ref{tsp:j}: define auxiliary variables $ f^{i,n}_v$ and $f^{o,n}_v$.
\end{itemize}

There is only one capacity constraint for each patient-donor vertex and each NDD:

\begin{itemize}
\item Constraint \ref{tsp:d}: each patient-donor vertex $v$ may only be used in one cycle $c$; or, if $v$ is used in a chain, chain flow out of $v$ can only be nonzero if there is chain flow out of $v$.
\item Constraint \ref{tsp:e}: each NDD $n$ may only start one chain.
\end{itemize}

The follow constraints ensure that chain flow is conserved, and enforce the chain cap $L$:

\begin{itemize}
\item Constraint \ref{tsp:h}: chains can use no more than $L$ edges.
\item Constraint \ref{tsp:k}: chain flow out of $v$ can only be nonzero if there is chain flow out of $v$. This constraint is equivalent to \ref{tsp:d}, but for variables $f_v^{o,n}$.
\end{itemize}

The final constraints ensure that each chain includes an NDD. These are very similar to the generalized subtour elimination constraints in the TSP literature. % We direct the reader to \cite{} for more discussion of this type of constraint.

\begin{itemize}
\item Constraint \ref{tsp:f}: for every subset $S$ of the donor-patient vertices, each vertex in $S$ can only participate in a chain if there is chain flow into $S$.
\end{itemize}

The number of constraints in \ref{tsp:f} grows exponentially with the number of patient-donor vertices, so it is necessary to use constraint generation with the PC-TSP formulation. We avoid constraint generation by developing a new formulation, which draws on concepts of both PC-TSP and PICEF; this formulation is introduced in the following section. 

\subsection{Our PI-TSP Formulation}

In this section we present the new position-indexed PC-TSP formulation (PI-TSP), which combines concepts from both the PC-TSP formulation and the PICEF formulation. The main advantage of our approach is in the formulation of chains. PC-TSP uses a fixed number of decision variables to allow long (or uncapped) chains, but requires constraint generation. PICEF does not require constraint generation, but the number of decision variables grows polynomially with the chain cap. 

Our approach achieves the best of both worlds: PI-TSP uses a \emph{fixed} number of decision variables for any chain cap, and does not require constraint generation. To our knowledge, ours is the first formulation to exhibit this behavior. 

PI-TSP uses the same parameters as PICEF and PC-TSP:
\begin{itemize}
\item $G$: kidney exchange graph, consisting of edges $e\in E$ and vertices $v \in V=P \cup N$, including patient-donor pairs $P$ and NDDs $N$.
\item $C$: a set of cycles on exchange graph $G$.
\item $L$: chain cap (maximum number of edges used in a chain).
\item $w_e$: edge weights for each edge $e\in E$.
\item $w^C_c$: cycle weights for each cycle $c\in C$, defined as $w^C_c = \sum_{e\in c} w_e$.
\end{itemize}

PI-TSP also uses the same decision variables (and auxiliary variables) as PC-TSP. Two additional variables are added to the formulation: $p_e,p_v\geq 1$  for each edge $e\in E$ and patient-donor vertex $v\in P$, to represent $e$ and $v$'s position in a chain.

\begin{itemize}
\item $p_e \geq 1$: the position of edge $e$ in any chain.
\item $p_v \geq 1$: the position of patient-donor vertex $v$ in any chain (equal to the position of any incoming edge).
\item $\hat{p}_e \geq 0$: equal to $p_e$ if $e$ is used in a chain, and $0$ otherwise. (i.e. $\hat{p}_e = p_e \cdot y_e$) 
\item $z_c \in \{0,1\}$: 1 if cycle $c$ is used in the matching, and 0 otherwise.
\item $y_e\in \{0,1\}$: 1 if edge $e$ is used in a chain, and 0 otherwise.
\item $y^n_e\in \{0,1\}$: 1 if edge $e$ is used in a chain starting with NDD $n$, and 0 otherwise.
\item $w^N_n$ (auxiliary): total weight of the chain starting with NDD $n$.
\item $f^i_v$ and $f^o_v$ (auxiliary): chain flow into and out of vertex $v\in P$, respectively.
\item $f^{i,n}_v$ and $f^{i,n}_v$ (auxiliary): chain flow into and out of vertex $v\in P$, respectively, from a chain beginning with NDD $n\in N$.
\end{itemize}

The PI-TSP formulation with chain cap $L$ is given in Problem \ref{eq:pitsp}. As before, we use the notation $\delta^-(v)$ for the set of edges into vertex $v$ and $\delta^+(v)$ for the set of edges out of $v$. 

\begin{subequations}\label{eq:pitsp}
\begin{align}
\text{max} \quad\sum\limits_{n\in N}w^N_{n} + \sum\limits_{c\in C} w^C_c z_c   \label{pitsp:obj}\\
\text{s.t.} \quad \sum\limits_{e \in E} w_e y^n_e = w^N_n &\hspace{0.4in} n \in N \label{pitsp:z}\\
\quad \sum\limits_{n \in N}y^n_e = y_e & \hspace{0.4in} e \in E \label{pitsp:g}\\
 \quad\sum\limits_{e\in \delta^-(v)}y_e = f^i_v  &\hspace{0.4in} v\in V \label{pitsp:b}\\
\quad\sum\limits_{e\in \delta^+(v)}y_e = f^o_v  &\hspace{0.4in} v\in V \label{pitsp:c}\\
\quad \sum\limits_{e \in \delta^-(v)} y^n_e = f_v^{i,n} & \hspace{0.4in} v\in V, n\in N \label{pitsp:i}\\
\quad \sum\limits_{e \in \delta^+(v)} y^n_e = f_v^{o,n} & \hspace{0.4in} v\in V, n\in N  \label{pitsp:j}\\
\quad f^o_v + \sum\limits_{c\in C:v\in c}z_c \leq \quad f^i_v + \sum\limits_{c\in C:v\in c}z_c \leq 1 &\hspace{0.4in} v\in P \label{pitsp:d} \\
\quad f^o_v \leq 1 & \hspace{0.4in} v\in N \label{pitsp:e} \\
\quad p_e = 1 &\hspace{0.4in} e \in \delta^+(N) \label{pitsp:c0}\\
\quad \hat{p}_e = p_e  y_e & \hspace{0.4in} e\in E  \label{pitsp:c1}\\
\quad p_v = \sum\limits_{e\in \delta^-(v)} \hat{p}_e & \hspace{0.4in} v\in P \label{pitsp:c2}\\
\quad p_e = p_v + 1 & \hspace{0.4in} v\in P,e\in \delta^+(v)  \label{pitsp:c3}\\
\quad \sum\limits_{e \in E}y^n_e \leq L & \hspace{0.4in} n \in N \label{pitsp:h}\\
\quad f_v^{o,n} \leq f^{i,v} \leq 1 & \hspace{0.4in} v\in V, n\in N  \label{pitsp:k}\\
\quad y_e \in \{0,1\}& \hspace{0.4in} e\in E \\
\quad z_c \in \{ 0,1\} &\hspace{0.4in} c\in C  \\
\quad y_e^n \in \{0,1\}& \hspace{0.4in} e\in E, n \in N
\end{align}
\end{subequations}

All constraints are identical to those of PC-TSP, but without the subtour elimination constraints \ref{tsp:f}, and with the addition of the following constraints:
\begin{itemize}
\item Constraints \ref{pitsp:c0}: sets $p_e=1$ for all edges out of NDD vertices.
\item Constraints \ref{pitsp:c1}: defines $\hat{p}_e$.
\item Constraints \ref{pitsp:c2}: for all vertices $v$, sets $p_v$ equal to the variable $p_e$ of \emph{any} incoming edge.
\item Constraints \ref{pitsp:c3}: for all outgoing edges of all vertices $v$, sets $p_e=p_v+1$.
\end{itemize}

Two adjustments may be made to this formulation: first, the variables $p_v$ are not necessary, but are useful for illustration. We can remove these variables by combining Constraints \ref{pitsp:c2} and \ref{pitsp:c3} as follows:
\begin{align*}
 p_{\overline{e}} = 1 + \sum\limits_{e\in \delta^-(v)} \hat{p}_e \hspace{0.5in}  v\in P,\overline{e}\in \delta^+(v) 
\end{align*}
Second, Constraints \ref{pitsp:c1} are nonlinear; we linearize these by replacing \ref{pitsp:c1} with the following constraints for each $e\in E$:
\begin{align*}
 \hat{p}_e &\leq y_e M \\
 \hat{p}_e& \leq p_e \\
 p_e - (1-y_e) M &\leq \hat{p}_e
\end{align*}

\subsubsection{Experiments: Minimum Chain Length} 

We demonstrate the utility of the PI-TSP formulation by finding optimal matchings with a \emph{minimum} chain length ($L_{min}$). We set a \emph{maximum} chain length of $L_{max}=3$, and vary the $L_{min}$ from 0 to 3. For some exchange graph, let $|M_{OPT}|$ be the score of the \emph{optimal} matching (i.e. with no minimum chain length, and maximum chain length 3); we calculate the fractional optimality gap for the matching $M_l$ (with score $|M_l|$), which has minimum chain length $L_{min}=l$. We define $\Delta OPT(M_l)$ as
$$ \Delta OPT (M_l) = \frac{|M_l| - |M_{OPT}|}{|M_{OPT}|}$$
We calculate optimal matchings for $L_{min}=0,1,2,3$, for each of the UNOS exchange graphs used in \Secref{sec:experiments}. Only 154 of the roughly 300 UNOS graphs contain chains; the remaining graphs may have no NDDs, or the NDDs may have no feasible donors. Focusing on these 154 graphs, we calculate $\Delta OPT$ and the chain lengths of each optimal matching, for each $L_{min}$.  \Figref{fig:chain} shows histograms of $\Delta OPT$ and the chain lengths for all optimal matchings, for each $L_{min}=0,1,2,3$. Note that $\Delta OPT$ is zero for $L_{min}=0$, by definition.

For some of these exchanges, a minimum chain length of 2 or 3 was infeasible (58 for $L_{min}=2$, and 77 for $L_{min}=3$, out of 154 total exchanges); we do not consider these cases.

\begin{figure*}[ht!]
\centering
\includegraphics[width=.95\linewidth] {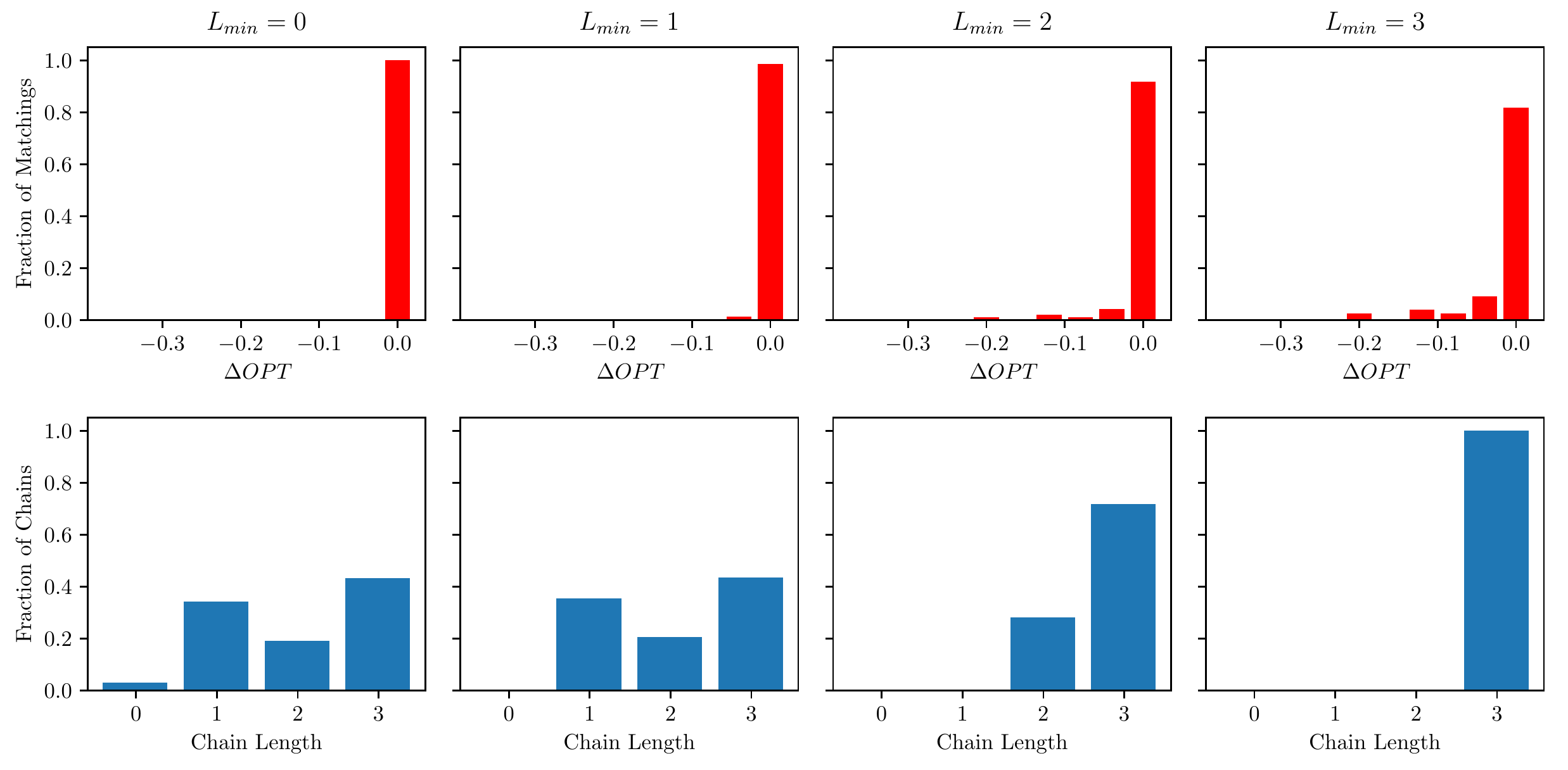}
\caption{$\DeltaOPTraw{}$ (top row) and chain lengths (bottom row) for the optimal matchings with minimum chain length $L_{min}$, and maximum chain length of 3.}\label{fig:chain}
\end{figure*}

As expected, enforcing $L_{min}>0$ results in longer chains -- such that when $L_{min}=L_{max}=3$, \emph{all} chains have length 3. Surprisingly, enforcing a minimum chain length does not impact the overall matching score. Indeed, even when $L_{min}=3$, $70\%$ of all matchings have a \emph{zero optimality gap}. However these experiments did not consider edge failures. As discussed in \Secref{sec:ex}, edge failures impact long cycles and chains more than short cycles and chains; in practice, when edges have a nonzero failure probability, setting a high $L_{min}$ makes the matching more susceptible to failure (i.e. less robust).
%These results demonstrate that a \emph{minimum} chain length can both a) guarantee that an NDD can produce more than one transplant, b) without impacting the overall matching score.

\subsection{Edge Existence Robust Formulation}

In this section we develop a mixed integer linear program formulation for the edge existence robust kidney exchange ($KEX(\mU^w_\Gamma)$). This problem maximizes the matching score while minimizing the objective with respect to realized cycle and chain weights $\mathbf{\hat{w}^M_c}$ for the current matching $M$. We develop an edge-existence robust formulation by directly minimizing the PI-TSP Objective (\ref{pitsp:obj}) over all cycle and chain weight realizations in $\mathcal{U}^w_\Gamma$. For brevity, let $\mM^P$ be the set of all possible feasible solutions to the PI-TSP formulation; we represent these feasible solutions as $(\mathbf{y},\mathbf{z})\in \mM$, where $\mathbf{y}$ are the edge decision variables for chains, and $\mathbf{z}$ are the cycle decision variables.

For any feasible solution $(\mathbf{y},\mathbf{z})\in \mM^P$, we find the minimum objective value for any realized cycle and chain weights in $\mathcal{U}^w_\Gamma$ With some abuse of notation, this minimum is represented by the function $Z(\mathbf{y},\mathbf{z})$. In this section we separate the realized weights $\mathbf{\hat{w}_c}$ into the realized cycle weights $\mathbf{\hat{w}^C_c}$ and the realized chain weights $\mathbf{\hat{w}^N_c}$.
\begin{align*}
Z(\mathbf{y},\mathbf{z}) &= \min\limits_{(\mathbf{\hat{w}^C_c},\mathbf{\hat{w}^N_c})\in \mathcal{U}^w_\Gamma} \sum\limits_{n\in N}\hat{w}^N_{n} + \sum\limits_{c\in C} \hat{w}^C_c z_c 
\end{align*}
Note that maximizing $Z(\mathbf{y},\mathbf{z})$ is equivalent to solving $KEX(\mU^w_\Gamma)$ -- the robust kidney exchange problem with uncertainty set $KEX(\mU^w_\Gamma)$. The following lemma states that this is equivalent to solving the constant-budget edge existence robust kidney exchange problem $KEX(\mathcal{U}_\Gamma^{E})$. 
\begin{lemma}
$KEX(\mathcal{U}_\Gamma^{E})$ is equivalent to $KEX(\mU^w_\Gamma)$
\end{lemma}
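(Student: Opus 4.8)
The plan is to reduce the claimed equivalence to a pointwise identity between the two inner worst-case values. Both $KEX(\mU^{E}_\Gamma)$ and $KEX(\mU^{w}_\Gamma)$ have the form $\max_{(\mathbf{y},\mathbf{z})\in\mM^P} Z_{\ast}(\mathbf{y},\mathbf{z})$, where $Z_{w}(\mathbf{y},\mathbf{z})$ is the function already defined in this section and $Z_{E}(\mathbf{y},\mathbf{z}):=\min_{\hat{G}\in\mU^{E}_\Gamma}(\text{realized weight of the matching in }\hat{G})$; so it suffices to prove $Z_{E}(\mathbf{y},\mathbf{z})=Z_{w}(\mathbf{y},\mathbf{z})$ for every feasible $(\mathbf{y},\mathbf{z})$ (this is just the earlier assertion that $\mU^{w}_\Gamma$ and $\mU^{ex}_\Gamma$ coincide, lifted to the level of the optimization problems). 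Throughout I would take $\Gamma$ to be a positive integer, as assumed when $\mU^{w}_\Gamma$ was introduced --- for non-integer $\Gamma$ the set $\mU^{E}_\Gamma$ permits only $\lfloor\Gamma\rfloor$ failures while $\mU^{w}_\Gamma$ does not collapse in the same way. Fix $(\mathbf{y},\mathbf{z})$, let $c_1,\dots,c_m$ be the cycles and chains it selects, and sort their weights $w_1\ge w_2\ge\cdots\ge w_m\ge 0$; the structural fact I would lean on is that cycles and chains in a matching are vertex-disjoint, hence edge-disjoint.

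First I would evaluate the $\mU^{w}_\Gamma$ side: by definition $Z_{w}(\mathbf{y},\mathbf{z})=\sum_{i=1}^m w_i-\max\{\sum_{i=1}^m w_i\alpha_i : 0\le\alpha_i\le 1,\ \sum_i\alpha_i\le\Gamma\}$, a continuous-knapsack linear program whose optimum --- since $\Gamma$ is integral --- is attained by setting $\alpha_i=1$ on the $\min(\Gamma,m)$ heaviest components and $\alpha_i=0$ elsewhere, giving $Z_{w}(\mathbf{y},\mathbf{z})=\sum_{i>\min(\Gamma,m)} w_i$. This step is routine. For the $\mU^{E}_\Gamma$ side I would prove both inequalities. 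For ``$\le$'' (achievability), take $F$ to consist of one edge from each of the $\min(\Gamma,m)$ heaviest components --- for a chain its first (NDD-incident) edge, for a cycle any edge: this is at most $\Gamma$ edges, it wipes out exactly those components, and leaves the rest fully intact, realizing value $\sum_{i>\min(\Gamma,m)} w_i$. For ``$\ge$'', in a realized graph $\hat{G}=(V,E\setminus F)$ with $|F|\le\Gamma$ each component $c_i$ realizes a nonnegative weight that equals its full weight $w_i$ whenever $F$ contains no edge of $c_i$; by edge-disjointness $F$ meets at most $\min(\Gamma,m)$ components, so at least $m-\min(\Gamma,m)$ of them are untouched and contribute their full weights, whose sum is at least $\sum_{i>\min(\Gamma,m)} w_i$ (the smallest such sum drops the heaviest weights), while the touched ones contribute $\ge 0$. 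Hence $Z_{E}(\mathbf{y},\mathbf{z})=\sum_{i>\min(\Gamma,m)} w_i=Z_{w}(\mathbf{y},\mathbf{z})$.

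Finally, since the inner values agree for every feasible $(\mathbf{y},\mathbf{z})$, the two outer maximizations share the same optimal value and the same set of optimizers, which is the asserted equivalence. I expect the only delicate point to be the ``$\ge$'' direction on the edge-existence side: it rests on (i) vertex- and hence edge-disjointness of the matching's cycles and chains, (ii) nonnegativity of edge weights, so a partially failed chain contributes no more than its nominal weight and no less than zero, and (iii) the observation that for a chain, deleting its first edge weakly dominates deleting any later edge (likewise any single edge suffices to destroy a cycle). Once those are in hand, the continuous-knapsack computation and the achievability construction are immediate.
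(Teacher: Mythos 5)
Your proposal is correct and follows essentially the same route as the paper: fix a feasible matching and show that the inner worst case over either uncertainty set amounts to wiping out the $\min(\Gamma,m)$ heaviest selected cycles and chains, so the two inner values (and hence the outer maximizations) coincide. Your version is simply a more rigorous rendering of the paper's argument---the paper reasons informally from the $\Gamma=1,2$ cases, whereas you make explicit the continuous-knapsack computation on the $\mathcal{U}^{w}_\Gamma$ side, the achievability construction (first chain edge / any cycle edge), the edge-disjointness lower bound, and the integer-$\Gamma$ restriction already noted in the main text.
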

\begin{proof}
Consider a feasible matching $M=(z_c, y_e)$. The only difference between $KEX(\mathcal{U}_\Gamma^{E})$ and $KEX(\mU^w_\Gamma)$ is the minimization of the objective over uncertainty sets $\mathcal{U}_\Gamma^{E}$ and $\mU^w_\Gamma$ respectively.

Problem $KEX(\mathcal{U}_\Gamma^{E})$ minimizes the matching weight over edge subsets $\hat{E} \subseteq E$, where $R=E \setminus \hat{E}$ contains up to $\Gamma$ edges:
\begin{itemize}
\item If $\Gamma=1$, the largest decrease in matching weight occurs if the \emph{highest weight} cycle or chain is discounted -- that is, if $R$ contains the first edge in the highest weight chain, or any edge in the highest weight cycle.
\item Similarly if $\Gamma=2$, the largest decrease in matching weight occurs when the two highest-weight cycles and chains are discounted.
\end{itemize}

Thus, for any positive $\Gamma$ and any feasible matching $M$, the minimum objective in $KEX(\mathcal{U}_\Gamma^{E})$ occurs when the $\Gamma$ highest-weight cycles and chains in $M$ are discounted.

In $KEX(\mU^w_\Gamma)$, for any $\Gamma$ and any feasible matching $M$, the minimum occurs (trivially) when the $\Gamma$ highest-weight cycles or chains are discounted in $\mU^w_\Gamma$. 

For any matching $M$, minimizing the $KEX$ objective over $\mathcal{U}^{E}$ and $\mU^w_\Gamma$ produce the same outcome -- the $\Gamma$ highest-weight cycles and chains are discounted. Thus, the minimization in $KEX(\mathcal{U}^{E})$ and $KEX(\mU^w_\Gamma)$ is equivalent.
\end{proof}

Thus, to solve the constant-budget edge existence robust kidney exchange problem, we can solve \Probref{eq:robzex} -- which maximizes $Z(\mathbf{y},\mathbf{z})$ over all feasible matchings $(\mathbf{y},\mathbf{z})\in \mM^P$.
%\begin{lemma}  % !!! DUNCAN -- JPD ADDED THIS
\begin{subequations}\label{eq:robzex}
\begin{align} 
\max \quad& Z(\mathbf{y},\mathbf{z}) \\
& (\mathbf{y},\mathbf{z}) \in \mM^P
\end{align}
\end{subequations}

We proceed by solving \Probref{eq:robzex}, which is equivalent to $KEX(\mU^w_\Gamma)$. To solve this problem we first develop a linear formulation for $Z$ using the PC-TSP decision variables, and then we maximize this linear expression. 

\subsection{Linear Formulation for $Z$}

In this section we minimize the function $Z$ for any matching $(\mathbf{y},\mathbf{z})\in \mM^P$, within uncertainty set $\mathcal{U}^w_\Gamma$. Within this uncertainty set, up to $\floor{\Gamma}$ cycles and chains can have zero realized weight (i.e. $\hat{w}_c=0$), and if $\Gamma$ is not integer, then one cycle or chain will have realized weight equal to the fraction $(\Gamma-\floor{\Gamma})$ of its total nominal weight (i.e. $\hat{w}_c=(\Gamma-\floor{\Gamma})w_c$. We say that any cycle or chain $c$ with $\hat{w}_c <  w_c$ is \emph{discounted}. 

First note that if a matching uses $G$ cycles and chains, and $G<\Gamma$, only $G$ objects are discounted. Thus let $\Gamma'=\min\{G,\Gamma\}$ be the number of discounted cycles and chains, i.e.,
$$ G = \sum\limits_{c\in C} z_c + \sum\limits_{n\in N}\sum\limits_{e\in \delta^+(n)} y_e.$$
To linearize the definition of $\Gamma'$, we introduce variable $h$, which is 1 if $G<\Gamma$ and 0 otherwise. The statement $\Gamma'=\min\{G,\Gamma\}$ is linearized using the following constraints:
\begin{align*}
\Gamma - G  &\leq Wh \\
G  - \Gamma &\leq W(1-h) \\
G - Wh  &\leq \Gamma' \\
\Gamma - W(1-h) &\leq \Gamma'  \\
h &\in \{0,1\}
\end{align*}
where $W$ is a large constant.

The function $Z$ is minimized w.r.t. the realized weights, when the  $\Gamma'$ discounted cycles and chains are those with the \emph{largest} weight. To select these objects we introduce variables $g^C_c,g^N_n \in\{0,1 \}$ for each cycle $c\in C$ and each chain's NDD $n\in N$. For any matching, let $m$ be the smallest weight of any discounted cycle or chain -- that is, $m$ is the $\ceil{\Gamma'}^{th}$ highest weight of any cycle or chain used in the matching. We define $g^C_c$ and $g^N_n$ as follows
\begin{equation*}
g^C_c = \begin{cases}
1 &\text{if } w^C_c \geq m \\
0 &\text{otherwise}
\end{cases}
\quad\quad
g^N_n = \begin{cases}
1 &\text{if } w^N_n \geq m \\
0 &\text{otherwise}
\end{cases}
\end{equation*}
Thus $g^C_c=1$ or $g^N_n=1$ implies that cycle $c$ or chain $n$ should be discounted if used in the matching. We define these variables using linear constraints, in two steps. First, we require that $g^{\{C,N\}}_j=1$ only if $g^{\{C,N\}}_k=1$ for all cycles and chains $k$ with weight larger than $w^{\{C,N\}}_j$. That is, we require that variables $g^{\{C,N\}}$ obey the same ordering as $w^{\{C,N\}}$. This ordering requirement can be defined using the following correspondences
\begin{align}
g^C_i \geq g^C_j &\Leftrightarrow w^C_i \geq w^C_j,\hspace{0.2in} i,j\in C  \label{cor:1}\\
g^C_c> g^N_n &\Leftrightarrow w^C_c > w^N_n,\hspace{0.2in} c\in C,n\in N \label{cor:2}\\
g^C_c\leq g^N_n &\Leftrightarrow w^C_c \leq w^N_n,\hspace{0.2in} c\in C,n\in N \label{cor:2a}\\
g^N_i \geq g^N_j &\Leftrightarrow w^N_i \geq w^N_j , \hspace{0.2in} i,j\in N\label{cor:3}
\end{align} 
Note that cycle weights are fixed but chain weights depend on the decision variables. Thus we determine ordering relation \ref{cor:1} by sorting all cycle weights during preprocessing, and enforcing this ordering over $g^C_i$ using the relation $\geq_C$, defined as 
$$ \geq_C = \left\{ (g^C_{a}, g^C_{b}) \in \mathbf{g}^C \times \mathbf{g}^C \mid w^C_{a} \geq w^C_{b}   \right\}. $$
Using this notation, the ordering relation $\geq_C$ contains all pairs of cycles $(a,b)$ such that $w^C_{a} \geq w^C_{b}$. For simplicity, I will denote this ordering relation as
$$ a \geq_C b.$$
This ordering relation is enforced on variables $g^C_{i}$ using  $(|C|-1)$ constraints. The ordering required by correspondence \ref{cor:2}, \ref{cor:2a}, and \ref{cor:3} depend on the chain weights, which in turn depend on decision variables. We can linearize these correspondences using the following inequalities
\begin{align*}
g^C_c + q_{cn} &\geq g^N_n \\
W(1-q_{cn}) &\geq w^C_c - w^N_n \\
\\
g^N_n + (1-q_{cn}) &\geq g^C_c \\
Wq_{cn} &\geq w^N_n - w^C_c \\
\\
& q_{cn}\in \{0,1\},c\in C,n\in N,\\
\end{align*}
Where $W$ is a large constant. When $w^C_c > w^N_n$, this forces $q_{cn}$ to be 0; as a result, the inequality $g^C_c \geq g^N_n $ must hold. Otherwise, if $w^C_c < w^N_n$, this forces $q_{cn}$ to be 1, which forces the inequality $g^N_n  \geq g^C_c $ to hold. 

Similarly, the following constraints enforce the ordering in correspondence \ref{cor:3} over variables $g^N_n$
\begin{align*}
g^N_i + q^N_{ij} &\geq g^N_j \\
W(1-q^N_{ij}) &\geq w^N_i - w^N_j \\
\\
g^N_j + (1-q^N_{ij}) &\geq g^N_i \\
Wq^N_{ij} &\geq w^N_j - w^N_i \\
\\
& q^N_{ij}\in \{0,1\},i,j\in N,i\neq j\\
\end{align*}

Next we require that only $\Gamma'$ objects are discounted. Note that if cycle $c$ is discounted if $g^C_c w^C_c=1$, and chain $n$ is discounted if $g^{N}_n\sum\limits_{e\in E}y^n_e =1$. Thus, the following identity requires that exactly $\Gamma
$ objects are discounted:
$$\sum\limits_{c\in C}z_c g^C_c +   \sum\limits_{n \in N} g^N_n \sum\limits_{e\in \delta^+(n)}y_e= \Gamma'$$

We use these variables to directly minimize $Z( \mathbf{y},\mathbf{z})$ w.r.t. $\mathcal{U}^w_\Gamma$, and the result is given in \Eqref{eq:robsolex}.
\begin{subequations}\label{eq:robsolex}
\begin{align}
Z( \mathbf{y},\mathbf{z})=\quad& \sum\limits_{n\in N} w^N_{n} + \sum\limits_{c\in C}w_c z_c - \sum\limits_{n\in N}g^N_n w^N_{n} - &&\sum\limits_{c\in C}g^C_c w_c z_c \\
\text{s.t.} \quad&   \Gamma - G  \leq Wh \\
\quad& G  - \Gamma \leq W(1-h) \\
\quad&  G - Wh  \leq \Gamma' \\
\quad&  \Gamma - W(1-h) \leq \Gamma'  \\
\quad&  \begin{array}{rl} g^C_c + q_{cn} &\geq g^N_n \\
 W(1-q_{cn}) &\geq w^C_c - w^N_n \\
 g^N_n + (1-q_{cn}) &\geq g^C_c \\
 Wq_{cn} &\geq w^N_n - w^C_c  \end{array} &&, c\in C, n\in N \\
\quad &\begin{array}{rl}  g^N_i + q^N_{ij} &\geq g^N_j \\
W(1-q^N_{ij}) &\geq w^N_i - w^N_j  \end{array} &&, q^N_{ij}\in \{0,1\},i,j\in N,i\neq j\\
\quad &\sum\limits_{c\in C}z_c g^C_c +   \sum\limits_{n \in N} g^N_n \sum\limits_{e\in \delta^+(n)}y_e= \Gamma' \\
\quad & g^C_i \geq_C g^C_j &&, i,j\in C, i\neq j \\
\quad &q_{cn} \in \{0,1\}, &&c\in C,n\in N,\\
\quad &h \in \{0,1\}
\end{align}
\end{subequations}

Note that there are two sets of quadratic expressions in this formulation: $g^C_c z_c$, and $w^N_n g^N_n$. These are linearized in the next section, which addresses non-integer $\Gamma$.

%\begin{subequations}\label{eq:exrobmip}
%\begin{align}
%\max  \sum\limits_{n\in N} w^N_{n} + \sum\limits_{c\in C}w_c z_c - &\sum\limits_{n\in N}g^N_n w^N_{n} - \sum\limits_{c\in C}g^C_c w_c z_c \\
%\text{s.t.} \quad   \Gamma - G  \leq Wh \\
%\quad G  - \Gamma \leq W(1-h) \\
%\quad G - Wh  \leq \Gamma' \\
%\quad \Gamma - W(1-h) \leq \Gamma'  \\
%\quad \begin{array}{rl} g^C_c + q_{cn} &\geq g^N_n \\
% W(1-q_{cn}) &\geq w^C_c - w^N_n \\
% g^N_n + (1-q_{cn}) &\geq g^C_c \\
% Wq_{cn} &\geq w^N_n - w^C_c  \end{array} &, c\in C, n\in N \\
%\quad &q_{cn} \in \{0,1\}, c\in C,n\in N,\\
%\quad \begin{array}{rl}  g^N_i + q^N_{ij} &\geq g^N_j \\
%W(1-q^N_{ij}) &\geq w^N_i - w^N_j  \end{array} &, q^N_{ij}\in \{0,1\},i,j\in N,i\neq j\\
%\quad g^C_i \geq_C g^C_j &, i,j\in C, i\neq j \\
%\quad \sum\limits_{c\in C}z_c g^C_c +   \sum\limits_{n \in N} g^N_n \sum\limits_{e\in \delta^+(n)}y_e= \Gamma'  \label{eq:exrobmip:gammaprime}\\
%\quad &h \in \{0,1\} \\
%\quad &(\mathbf{z},\mathbf{w^N})\in \mM^P
%\end{align}
%\end{subequations}

\subsection{Non-Integer $\Gamma$}
When $\Gamma$ is not integer, the actual number of discounted cycles and chains ($\Gamma'$) may be integer or non-integer valued.  When $\Gamma'$ is not integer valued, up to $\floor{\Gamma'}$ cycles and chains are fully discounted (i.e. $\hat{w}_c=0$), and the smallest-weight cycle or chain is discounted by fraction $(\Gamma-\floor{\Gamma})$. We include this fractional discount by using two sets of indicator variables $f^{\{C,N\}}_i$ and $p^{\{C,N\}}_i$ for all cycles and chains $i\in C\cup N$, and then discount each $i$ as follows:
\begin{itemize}
\item $i$ is fully discounted if $p^{\{C,N\}}_i=f^{\{C,N\}}_i=1$.
\item $i$ is partially discounted fraction $(\Gamma-\floor{\Gamma})$ if $f^{\{C,N\}}_i=0$ and $p^{\{C,N\}}_i=1$
\item $i$ is not discounted if $f^{\{C,N\}}_i=p^{\{C,N\}}_i=0$.
\end{itemize}

Thus if $\Gamma'$ is integer, $f^{\{C,N\}}_i=p^{\{C,N\}}_i$ for all cycles and chains $i$; if $\Gamma'$ is not integer, then $\ceil{\Gamma'}$ cycles and chains are least partially discounted ($p^{\{C,N\}}_e=1$), and $\floor{\Gamma'}$ cycles and chains are fully discounted ($p^{\{C,N\}}_i=g^f_i=1$). These indicator variables are defined in the same way as $g^{\{C,N\}}_i$ in \Eqref{eq:robsolex}: $p^{\{C,N\}}_i,f^{\{C,N\}}_i\in \{0,1\}$, and they obey the same ordering relation as the cycle and chain weights. However, the number of cycles and chains with $f^{\{C,N\}}_i=1$ can be different than the number of cycles and chains with with $p^{\{C,N\}}_i=1$. Thus we add new constraints for each of these variables. 

\paragraph{Setting the number of discounted cycles and chains.} First we require $\ceil{\Gamma'}$ cycles and chains have $p^{\{C,N\}}_i=1$. Recall that $G$ is the number of matching edges, and $\Gamma'=\min(\Gamma,G)$; if $\Gamma<G$, then $\ceil{\Gamma'}=\ceil{\Gamma}$, and $\ceil{\Gamma'}=G$ otherwise. The variable $h$ is defined to be 1 if $G < \Gamma$ and $0$ otherwise. Thus, the following constraint requires that $\ceil{\Gamma'}$ cycles and chains have $p^{\{C,N\}}_i=1$:
$$ \sum\limits_{n\in N} p^N_n  \sum\limits_{e\in \delta^+(n)}y_e + \sum\limits_{c\in C}p^C_c z_c = h G + (1-h)\ceil{\Gamma}.$$
Similarly, the following constraint requires that $\floor{\Gamma'}$ cycles and chains have $f^{\{C,N\}}_i=1$:
$$ \sum\limits_{n\in N} f^N_n  \sum\limits_{e\in \delta^+(n)}y_e + \sum\limits_{c\in C}f^C_c z_c= h G + (1-h)\floor{\Gamma}.$$
Thus if $G<\Gamma$, then all $G$ cycles and chains will have $f^{\{C,N\}}_i=p^{\{C,N\}}_i=1$; otherwise, there are $\ceil{\Gamma}$ cycles and chains with $p^{\{C,N\}}_i=1$, and $\floor{\Gamma}$ cycles and chains with $f^{\{C,N\}}_i=1$, where the partially-discounted cycle or chain has $f^{\{C,N\}}_i=0$ and $p^{\{C,N\}}_i=1$.

\paragraph{Ordering relation over indicator variables.} To enforce the ordering relation over indicator variables $f^N_n$, $p^N_n$, $f^C_c$, and $p^C_c$, we use constraints similar to those used in the edge weight robust formulation. The auxiliary variables $q_{cn}$ and $q^N_{ij}$ are defined the same way here: $q_{cn}$ is $0$ when $w^C_n > w^N_n$ and $1$ otherwise; $q^N_{ij}$ is $0$ if 

\begin{align*}
\begin{array}{rl}
f^C_c + q_{cn} &\geq f^N_n \\
p^C_c + q_{cn} &\geq p^N_n \\
f^N_n + (1-q_{cn}) &\geq f^C_c \\
p^N_n + (1-q_{cn}) &\geq p^C_c \\
W(1-q_{cn}) &\geq w^C_c - w^N_n \\
Wq_{cn} &\geq w^N_n - w^C_c 
\end{array} &,c\in C,n\in N,\\
\\
 q_{cn}\in \{0,1\}&,c\in C,n\in N,\\
\end{align*}
Where $W$ is a large constant. When $w^C_c > w^N_n$, this forces $q_{cn}$ to be 0; as a result, the inequality $f^C_c \geq f^N_n $ and $p^C_c \geq p^N_n $ must hold. Otherwise, if $w^C_c < w^N_n$, this forces $q_{cn}$ to be 1, which forces the inequality $f^N_n  \geq f^C_c $ and $p^N_n  \geq p^C_c $ to hold. 

Similarly, the following constraints enforce the ordering in correspondence \ref{cor:3} over variables $f^N_n$ and $p^N_n$
\begin{align*}
\begin{array}{rl}
f^N_i + q^N_{ij} &\geq f^N_j \\
p^N_i + q^N_{ij} &\geq p^N_j \\
f^N_j + (1-q^N_{ij}) &\geq f^N_i \\
p^N_j + (1-q^N_{ij}) &\geq p^N_i \\
Wq^N_{ij} &\geq w^N_j - w^N_i \\
W(1-q^N_{ij}) &\geq w^N_i - w^N_j
\end{array} &, i,j\in N,i\neq j\\
\\
q^N_{ij}\in \{0,1\}&,i,j\in N,i\neq j\\
\end{align*}

As before, correspondence \ref{cor:1}, the ordering between cycle indicator variables, is enforced using the pre-determined ordering $\geq_C$.
\begin{align*}
\begin{array}{rl}
f^C_a &\geq_C f^C_b  \\
p^C_a &\geq_C p^C_b
\end{array}
&, a,b \in C, a\neq b 
\end{align*}

\paragraph{Objective for non-integer $\Gamma$.} 

Using these indicator variables, the new objective of the robust formulation is
\begin{align*} \max  \sum\limits_{n\in N} w^N_{n} + \sum\limits_{c\in C}w_c z_c &- (1-\Gamma + \floor{\Gamma})\left(\sum\limits_{n\in N} w^N_{n} f^N_n + \sum\limits_{c\in C}f^C_c w_c z_c \right)\\
& -  (\Gamma - \floor{\Gamma})\left(\sum\limits_{n\in N} w^N_{n} p^N_n + \sum\limits_{c\in C}p^C_c w_c z_c  \right)
\end{align*}
which discounts cycle or chain $i$ by its full weight if $f^{\{C,N\}}_i=p^{\{C,N\}}_i=1$, and by fraction $\left(\Gamma - \floor{\Gamma} \right)$ of its weight if $f^{\{C,N\}}_i=0$ and $p^{\{C,N\}}_i=1$.

\paragraph{Non-linear terms.} There are now $7$ types of nonlinear terms in this formulation: 
\begin{itemize}
\item $hG$,
\item $w^N_n f^N_n$, 
\item $w^N_n p^N_n$, 
\item $z_c f^C_c$, 
\item $z_c p^C_c$, 
\item $f^N_n y_e$, and 
\item $p^N_n y_e$. 
\end{itemize}
First we linearize the chain-related quadratic terms by introducing the variables $\hat{f}^N_n \equiv w^N_n f^N_n$ and $\hat{p}^N_n \equiv w^N_n p^N_n$. The following constraints define these new variables, using a large constant $W$. 

\begin{align*}
\begin{array}{rl}
\hat{f}^N_n  &\leq  f^N_n W\\
\hat{f}^N_n  &\leq w^N_n \\
\hat{f}^N_n  &\geq w^N_n - (1-f^N_n)W
\end{array} 
&, n\in N\\
\hat{f}^N_n  \geq 0 &, n\in N \\
\\
\begin{array}{rl}
\hat{p}^N_n  &\leq  p^N_n W\\
\hat{p}^N_n  &\leq w^N_n \\
\hat{p}^N_n  &\geq w^N_n - (1-p^N_n)W
\end{array} 
&, n\in N\\
\hat{p}^N_n  \geq 0 &, n\in N \\
\end{align*}

Next we define variables $\hat{f}^C_c \equiv z_c f^C_c$ and $\hat{p}^C_c \equiv z_c p^C_c$ using the following constraints.

\begin{align*}
\begin{array}{rl}
\hat{f}^C_c &\leq f^C_c \\
\hat{f}^C_c &\leq z_c \\
\hat{f}^C_c&\geq f^C_c+z_c - 1 
\end{array} 
&, c\in C\\
\hat{f}^C_c \in \{0,1\} &, c\in C \\
\\
\begin{array}{rl}
\hat{p}^C_c &\leq p^C_c \\
\hat{p}^C_c &\leq z_c \\
\hat{p}^C_c&\geq p^C_c+z_c - 1 
\end{array} 
&, c\in C\\
\hat{p}^C_c \in \{0,1\} &, c\in C \\
\end{align*}

To linearize the term $hG$, we introduce variable $\hat{g}\equiv hG$, which is defined using the following constraints. As before, $W$ is a large constant.
\begin{align*}
\hat{h} &\leq hW \\
\hat{h} &\leq G \\
\hat{h} &\geq G - (1-h)W \\
\hat{h} &\geq 0
\end{align*}

Finally, we introduce the variables $ F_n \equiv f^N_n  \sum\limits_{e\in \delta^+(n)}y_e$ and $ P_n \equiv p^N_n  \sum\limits_{e\in \delta^+(n)}y_e$, defined with the following constraints. Note that for each NDD $n\in N$ the sum of all $y_e$ variables is either zero (if $n$ does not initiate a chain) or $1$ (if $n$ initiates a chain). Thus $F_n$ and $P_n$ are products of binary variables, which we define using the following constraints.

\begin{align*}
\begin{array}{rl}
F_n &\leq f^N_n \\
F_n &\leq \sum\limits_{e\in \delta^+(n)}y_e\\
F_n &\geq \sum\limits_{e\in \delta^+(n)}y_e +  f^N_n - 1 
\end{array} 
&, c\in C\\
F_n \in \{0,1\} &, n\in N \\
\\
\begin{array}{rl}
P_n &\leq p^N_n \\
P_n &\leq \sum\limits_{e\in \delta^+(n)}y_e\\
P_n &\geq \sum\limits_{e\in \delta^+(n)}y_e +  p^N_n - 1 
\end{array} 
&, n\in N\\
P_n \in \{0,1\} &, n\in N \\
\end{align*}

\paragraph{Linear formulation.} Finally, for any feasible matching we directly minimize $Z$ by discounting the $\Gamma'$ largest-weight cycles and chains. This is accomplished using the variables  $\hat{f}^N_n$, $\hat{p}^N_n$, $\hat{f}^C_c$, $\hat{p}^C_c$. \Eqref{eq:linearzex} gives the minimization of $Z$ for any matching $( \mathbf{y},\mathbf{z})$ , using only linear constraints. 

\begin{subequations}\label{eq:linearzex}
\begin{align}
\begin{array}{r}
Z( \mathbf{y},\mathbf{z})=  \quad \sum\limits_{n\in N} w^N_{n} + \sum\limits_{c\in C}w_c z_c - (1-\Gamma + \floor{\Gamma})\left(\sum\limits_{n\in N} \hat{f}^N_n + \sum\limits_{c\in C}\hat{f}^C_c w_c \right)\\
 -  (\Gamma - \floor{\Gamma})\left(\sum\limits_{n\in N} \hat{p}^N_n + \sum\limits_{c\in C}\hat{p}^C_c w_c \right) \\
 \end{array} \label{linearzex:obj}\\
\intertext{s.t.} \nonumber\\
\begin{array}{rl} \Gamma - G  &\leq Wh  \\
\quad G  - \Gamma &\leq W(1-h) \\
\quad G - Wh  &\leq \Gamma' \\
\quad \Gamma - W(1-h) &\leq \Gamma'  
\end{array} \\
\quad \sum\limits_{n\in N} P_n + \sum\limits_{c\in C}\hat{p}^C_c = \hat{h} + (1-h)\ceil{\Gamma} \\
\quad \sum\limits_{n\in N} F_n + \sum\limits_{c\in C}\hat{f}^C_c = \hat{h} + (1-h)\floor{\Gamma} \\
\quad \begin{array}{rl}
f^C_c + q_{cn} &\geq f^N_n \\
p^C_c + q_{cn} &\geq p^N_n \\
f^N_n + (1-q_{cn}) &\geq f^C_c \\
p^N_n + (1-q_{cn}) &\geq p^C_c \\
W(1-q_{cn}) &\geq w^C_c - w^N_n \\
Wq_{cn} &\geq w^N_n - w^C_c 
\end{array} &,\hspace{0.1in} c\in C,n\in N\\
\quad \begin{array}{rl}
f^N_i + q^N_{ij} &\geq f^N_j \\
p^N_i + q^N_{ij} &\geq p^N_j \\
f^N_j + (1-q^N_{ij}) &\geq f^N_i \\
p^N_j + (1-q^N_{ij}) &\geq p^N_i \\
Wq^N_{ij} &\geq w^N_j - w^N_i \\
W(1-q^N_{ij}) &\geq w^N_i - w^N_j
\end{array} &,\hspace{0.1in} i,j\in N,i\neq j\\
\quad \begin{array}{rl}
f^C_a &\geq_C f^C_b  \\
p^C_a &\geq_C p^C_b
\end{array}
&,\hspace{0.1in} a,b \in C, a\neq b \\ 
\quad \begin{array}{rl}
\hat{f}^N_n  &\leq  f^N_n W\\
\hat{f}^N_n  &\leq w^N_n \\
\hat{f}^N_n  &\geq w^N_n - (1-f^N_n)W
\end{array} 
&,\hspace{0.1in} n\in N\\
\quad \begin{array}{rl}
\hat{p}^N_n  &\leq  p^N_n W\\
\hat{p}^N_n  &\leq w^N_n \\
\hat{p}^N_n  &\geq w^N_n - (1-p^N_n)W
\end{array} 
&,\hspace{0.1in} n\in N\\
\begin{array}{rl}
\hat{f}^C_c &\leq f^C_c \\
\hat{f}^C_c &\leq z_c \\
\hat{f}^C_c&\geq f^C_c+z_c - 1 
\end{array} 
&,\hspace{0.1in} c\in C\\
\begin{array}{rl}
\hat{p}^C_c &\leq p^C_c \\
\hat{p}^C_c &\leq z_c \\
\hat{p}^C_c&\geq p^C_c+z_c - 1 
\end{array} 
&,\hspace{0.1in} c\in C\\
\begin{array}{rl}
\hat{h} &\leq hW \\
\hat{h} &\leq G \\
\hat{h} &\geq G - (1-h)W \\
\hat{h} &\geq 0 
\end{array} \\
\begin{array}{rl}
F_n &\leq f^N_n \\
F_n &\leq \sum\limits_{e\in \delta^+(n)}y_e\\
F_n &\geq \sum\limits_{e\in \delta^+(n)}y_e +  f^N_n - 1 
\end{array} &, c\in C\\
\begin{array}{rl}
P_n &\leq p^N_n \\
P_n &\leq \sum\limits_{e\in \delta^+(n)}y_e\\
P_n &\geq \sum\limits_{e\in \delta^+(n)}y_e +  p^N_n - 1 
\end{array} &, n\in N\\
\hat{f}^C_c ,\hat{p}^C_c \in \{0,1\} &,\hspace{0.1in} c\in C \\
\hat{f}^N_n, \hat{p}^N_n  \geq 0 &,\hspace{0.1in} n\in N \\
F_n \in \{0,1\} &, n\in N \\
P_n \in \{0,1\} &, n\in N \\
q^N_{ij}\in \{0,1\} &,\hspace{0.1in} i,j\in N,i\neq j\\
 q_{cn}\in \{0,1\}&,\hspace{0.1in}c\in C,n\in N\\
\quad h \in \{0,1\}
\end{align}
\end{subequations}

The linear formulation for $KEX(\mU^w_\Gamma)$ is obtained by adding the PI-TSP constraints to \Probref{eq:linearzex}, and mazimizing the objective \ref{linearzex:obj}.

This linear formulation can be solved by any standard solver; our experiments use Gurobi~\cite{Gurobi}.

\section{Robustness as Fairness}\label{app:fair}
In this section we use the framework of edge weight uncertainty to address the problem of fairness in kidney exchange. Though seemingly unrelated, fairness and uncertainty share some key characteristics. The concept of \emph{budgeted uncertainty} balances the nominal objective value with the worst case. A similar trade-off exists between fairness and efficiency in kidney exchange: allocating kidneys to hard-to-match patients is \emph{fair}, but often reduces the number of possible transplants.

\subsection{The Price of Fairness} 

In kidney exchange, fairness often pertains to \emph{highly-sensitized} patients, who are very unlikely to find a compatible donor. Highly-sensitized patients face longer waiting times than lowly-sensitized patients\footnote{{\texttt{https://optn.transplant.hrsa.gov/data/}}}. In part this is because highly sensitized patients are hard to match; for this reason most kidney exchange optimization algorithms -- which maximize matching size or weight -- marginalize highly-sensitized patients.

A patient's sensitization level is measured by her Calculated Panel Reactive Antibody (CPRA) score, which ranges from $0$ to $100$. Patient-donor pair vertices in the exchange graph are highly-sensitized if the pair's patient has a CPRA score above some threshold $\tau$, which is set by policymakers ($\tau=80$ is common). Let $V_H$ ($V_L$) be the set of highly-sensitized (lowly-sensitized) vertices in $P$, and let $E_H$ ($E_L$) be the set of all edges that end in $V_H$ ($V_L$).

Fairness for a matching $M$ is often quantified using the \emph{utility} assigned to $V_H$ and $V_L$ -- i.e. the sum of edge weights into each vertex set,
$$ U_H(M) = \sum\limits_{e\in E_H} x_e w_e, \hspace{0.2in} U_L(M) = \sum\limits_{e\in E_L} x_e w_e.$$

The \emph{utilitarian} utility function is defined as $u(M)=U_H(M)+U_L(M)$ (i.e. the total edge weight of matching $M$). We might define a \emph{fair} utility function $u_f : \mM \to \mathbb{R}$, such that the matching $M^*_f$ that maximizes $u_f$ is considered fair:
\begin{align*}
  M^*_f &= \arg\max_{M \in \mM} u_f(M) 
\end{align*}

Fairness is quantified using the \emph{fraction of the fair score} $\%F:M,\mM\rightarrow [0,1]$ -- i.e. the fraction of the maximum possible utility awarded to highly sensitized patients
$$ \%F(M,\mM) = U_H(M) / \max\limits_{M'\in \mM} U_H(M').$$

\citet{Bertsimas11:Price} defines the \emph{price of fairness} as the ``relative system efficiency loss under a fair allocation assuming that a fully efficient allocation is one that maximizes the sum of [participant] utilities.''  Thus the price of fairness is defined using the set of matchings $\mM$, the fair utility function $u_f$, and the utilitarian utility function $u$:
\begin{align} \label{eq:pof}
  \POF(\mM,u_f)= \frac{u\left(M^*_{\text{ }}\right) - u\left(M^*_f\right)}{u\left(M^*\right)}
\end{align}

$\POF(\mM,u_f)$ is the relative loss in (utilitarian) efficiency caused by choosing the fair outcome $M^*_f$ rather than the most efficient outcome. 

Balancing $\%F$ and $\POF$ is a key problem in kidney exchange. Achieving a high degree of fairness (high $\%F$) often incurs a high $\POF$; on the other hand, requiring a low $\POF$ ofen results in low $\%F$. \citet{Dickerson14:Price} propose two rules for enforcing fairness in kidney exchange, and demonstrate that without chains, the price of fairness is low in theory. \citet{McElfresh18:fair} extended this result, finding that adding chains lowers the theoretical price of fairness -- eventually to zero; they also propose a fairness rule that limits the price of fairness. 

In the next section we generalize one of the fairness rule proposed by \citet{Dickerson14:Price} using the framework of budgeted robust optimization, and demonstrate its versatility in balancing fairness and efficiency.

\subsection{Fairness Through Robustness}

In this section we adapt the concept of budgeted uncertainty to apply budgeted \emph{prioritization} to highly sensitized patients in kidney exchange. To prioritize certain patients over others, we assign each edge $e\in E$ a \emph{priority weight} $\hat{w}_e\in [0,\infty)$, equal to the nominal weight multiplied by a factor $(1+\alpha_e)$, with $\alpha_e \in [-1,\infty)$. There are many ways to prioritize highly sensitized vertices using priority weights: we may set $\alpha>0$ for all edges in $E_H$, or we may set $\alpha=-1$ for edges in $E_L$, and so on. 

To balance fairness with efficiency it reasonable to \emph{limit} the degree of prioritization. To limit prioritization, we define a \emph{budgeted prioritization set} $\mathcal{P}$, which bounds the sum of absolute differences between each $w_e$ and $\hat{w}_e$; this prioritization set is given in \Eqref{eq:pset}.

\begin{equation}\label{eq:pset}
\mathcal{P}_\Gamma = \left\{ \mathbf{\hat{w}} \mid \hat{w}_e = w_e (1+\alpha_e),\alpha_e \in [-1,\infty], \sum\limits_{e\in E}\alpha_e w_e \leq \Gamma \right\}
\end{equation}

To prioritize $V_H$, we define $\alpha_e$ differently for each edge $e$. In one type of approach, we prioritize $V_H$ by setting $\alpha_e$ to a constant ($\alpha$) for all $e\in E_H$. This approach is given by $\mathcal{P}^+_\Gamma$, in \Eqref{eq:psetplus}

\begin{equation}\label{eq:psetplus}
\mathcal{P}^+_\Gamma = \left\{ \mathbf{\hat{w}} \mid \hat{w}_e = \begin{cases} w_e (1+\alpha) &\text{if } e\in E_H\\ w_e &\text{otherwise} \end{cases},\alpha \geq 0, \alpha \sum\limits_{e\in E} w_e \leq \Gamma \right\}
\end{equation}

A different type of approach prioritizes $V_H$ by reducing all edges into $E_L$; this approach is given by $\mathcal{P}^-_\Gamma$, in \Eqref{eq:psetminus}.

\begin{equation}\label{eq:psetminus}
\mathcal{P}^-_\Gamma = \left\{ \mathbf{\hat{w}} \mid \hat{w}_e = \begin{cases} w_e (1-\alpha) &\text{if } e\in E_L \\ w_e &\text{otherwise} \end{cases},\alpha \in[0,1], \alpha \sum\limits_{e\in E} w_e \leq \Gamma \right\}
\end{equation}

To apply prioritization to kidney exchange, we either minimize or maximize the kidney exchange objective with respect to $\mathcal{P}$. By choosing $\alpha_e$ and prioritization budget $\Gamma$, this general framework can implement a wide variety of prioritization requirements. Next we show how budgeted prioritization generalizes a previous fairness rule.

\subsection{Weighted Fairness}

Weighted fairness was proposed by \citet{Dickerson14:Price} to prioritize highly sensitized patients in kidney exchange. This fairness rule maximizes the total matching weight, after multiplying all edge weights into highly sensitized patients by a factor $(1+\gamma)$, where parameter $\gamma$ is set by policymakers. Weighted fairness is equivalent to \emph{maximizing} the kidney exchange objective over the budgeted prioritization set $\mathcal{P}^{w}$, given below. This prioritization set is equivalent to $\mathcal{P}^+_\Gamma$, with prioritization budget $\Gamma$ equal to $\gamma$ times the total weight received by highly sensitized patients. 
\begin{equation}\label{eq:psetweighted1}
\mathcal{P}^w_\gamma = \left\{ \mathbf{\hat{w}} \mid \hat{w}_e = \begin{cases} w_e (1+\alpha) &\text{if } e\in E_H \\ w_e &\text{otherwise} \end{cases},\alpha \geq 0, \alpha \sum\limits_{e\in E_H} w_e \leq \gamma \sum\limits_{e\in E_H} w_e  \right\}
\end{equation}

Note that the uncertainty budget does not depend on edge weights, and can be written succinctly as \Eqref{eq:psetweighted}.

\begin{equation}\label{eq:psetweighted}
\mathcal{P}^w_\gamma = \left\{ \mathbf{\hat{w}} \mid \hat{w}_e = \begin{cases} w_e (1+\alpha) &\text{if } e\in E_H \\ w_e &\text{otherwise} \end{cases}, 0 \leq \alpha \leq \gamma   \right\}
\end{equation}

Weighted fairness is implemented by \emph{maximizing} over priority set $\mathcal{P}^w_\gamma$, as in \Probref{eq:wtfairness}

\begin{subequations} \label{eq:wtfairness}
\begin{align}
\max \max\limits_{\hat{w}\in \mathcal{P}^w_\gamma} &\hat{w} \cdot x_e \\
\mathbf{x} &\in \mM
\end{align}
\end{subequations}

\begin{proposition}
$\gamma$-weighted fairness is equivalent to maximizing the kidney exchange objective over $\mathcal{P}^w_\gamma$.
\end{proposition}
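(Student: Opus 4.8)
The statement is essentially a book-keeping identity between a known fairness rule and a budgeted best-case (rather than worst-case) reformulation, so the plan is to unwind both definitions and match them term by term. First I would write the $\gamma$-weighted fairness objective explicitly: maximize over $\mathbf{x}\in\mM$ the quantity $\sum_{e\in E_L} w_e x_e + (1+\gamma)\sum_{e\in E_H} w_e x_e$, which is just the standard kidney exchange objective after scaling every edge into a highly-sensitized vertex by $(1+\gamma)$.

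Next I would fix an arbitrary feasible $\mathbf{x}\in\mM$ and evaluate the inner maximization $\max_{\hat{w}\in\mathcal{P}^w_\gamma}\hat{w}\cdot\mathbf{x}$ appearing in \Probref{eq:wtfairness}. Substituting the definition of $\mathcal{P}^w_\gamma$ from \Eqref{eq:psetweighted}, any $\hat{w}$ in the set has $\hat{w}_e = w_e(1+\alpha)$ on $E_H$ and $\hat{w}_e=w_e$ elsewhere for some $0\le\alpha\le\gamma$, so $\hat{w}\cdot\mathbf{x} = \sum_{e\in E_L}w_e x_e + (1+\alpha)\sum_{e\in E_H}w_e x_e$. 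Because edge weights and the decision variables are nonnegative, $\sum_{e\in E_H}w_e x_e\ge 0$, hence this expression is nondecreasing in $\alpha$ and is maximized at $\alpha=\gamma$. Therefore the inner max equals $\sum_{e\in E_L}w_e x_e + (1+\gamma)\sum_{e\in E_H}w_e x_e$, exactly the weighted-fairness objective evaluated at $\mathbf{x}$. Taking the outer maximum over $\mathbf{x}\in\mM$ on both sides yields the claimed equivalence.

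Finally I would dispatch the auxiliary point implicit in the excerpt: that the ``normalized'' prioritization set in \Eqref{eq:psetweighted1}, whose budget reads $\alpha\sum_{e\in E_H}w_e\le\gamma\sum_{e\in E_H}w_e$, coincides with \Eqref{eq:psetweighted}. When $\sum_{e\in E_H}w_e>0$ this is immediate by dividing through; when $\sum_{e\in E_H}w_e=0$ (in particular when $E_H=\varnothing$) both the budget constraint and the $E_H$-part of the objective vanish, so both formulations, and both sides of the proposition, collapse to the non-fair problem and the equality is trivial.

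I expect no genuine obstacle here: the result holds essentially by definition. The only things to state carefully are the monotonicity step, which rests on nonnegativity of the weights and of $x_e$ in the kidney exchange model, and the degenerate case of empty or all-zero $E_H$, both handled in a line as above.
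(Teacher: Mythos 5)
Your proof is correct and follows the same route the paper takes implicitly: the paper states this proposition without a formal proof, treating it as immediate from the construction of $\mathcal{P}^w_\gamma$ (the preceding derivation from \Eqref{eq:psetweighted1} to \Eqref{eq:psetweighted} is the whole argument), and your explicit observation that $\hat{w}\cdot\mathbf{x}$ is nondecreasing in $\alpha$ (by nonnegativity of $w_e$ and $x_e$) so the inner maximum sits at $\alpha=\gamma$ is exactly the formalization of that. Your handling of the degenerate case $\sum_{e\in E_H} w_e = 0$ is a small addition the paper does not mention, but it does not change the substance.
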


As demonstrated in \Eqref{eq:psetweighted1}, weighted fairness uses the prioritization budget $\Gamma =\gamma\sum\limits_{e\in E_H}w_e$, which is proportional to the weight received by highly sensitized patients. Thus, we may derive an upper bound on the $\POF$ for $\gamma$-weighted fairness.

\begin{proposition}\label{prop:wtpofgen}
For $\gamma$-weighted fairness, and some matching $M$ the price of fairness for choosing matching $M$ is bounded above by
$$ \POF(u^w_\gamma,M) \leq \frac{\gamma}{1+\gamma + U_L(M)/U_H(M)}.$$
\end{proposition}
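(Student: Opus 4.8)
The plan is to leverage the optimality of the $\gamma$-weighted-fair matching with respect to the reweighted objective, and then convert the resulting \emph{additive} gap into the claimed \emph{multiplicative} (price-of-fairness) bound. Throughout, write $u(M) = U_H(M) + U_L(M)$ for the utilitarian utility, let $M^*$ be a utilitarian-optimal matching, and let $M$ be the matching returned by $\gamma$-weighted fairness. By the preceding proposition, $M$ maximizes $u^w_\gamma(M') \equiv \max_{\hat{\mathbf{w}} \in \mathcal{P}^w_\gamma} \hat{\mathbf{w}} \cdot \mathbf{x}(M')$; since the inner maximization over $\mathcal{P}^w_\gamma$ simply sets $\alpha = \gamma$ (the objective $u(M') + \alpha\, U_H(M')$ is nondecreasing in $\alpha$ because $U_H(M') \geq 0$), we have $u^w_\gamma(M') = u(M') + \gamma\, U_H(M')$, and $\POF(u^w_\gamma, M) = \bigl(u(M^*) - u(M)\bigr)/u(M^*)$.

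First I would extract an additive bound from optimality of $M$ for $u^w_\gamma$. Since $u^w_\gamma(M) \geq u^w_\gamma(M^*)$, we get $u(M) + \gamma\, U_H(M) \geq u(M^*) + \gamma\, U_H(M^*) \geq u(M^*)$, where the last inequality drops the nonnegative term $\gamma\, U_H(M^*)$. Rearranging gives $u(M^*) \leq u(M) + \gamma\, U_H(M)$. Next I would convert this to multiplicative form: write $\POF(u^w_\gamma, M) = 1 - u(M)/u(M^*)$, which is nondecreasing in $u(M^*)$ (for $u(M) > 0$), so substituting the upper bound on $u(M^*)$ yields $\POF(u^w_\gamma, M) \leq \gamma\, U_H(M)\big/\bigl(u(M) + \gamma\, U_H(M)\bigr)$. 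Finally, substituting $u(M) = U_H(M) + U_L(M)$ and dividing numerator and denominator by $U_H(M)$ gives the claimed $\gamma\big/\bigl(1 + \gamma + U_L(M)/U_H(M)\bigr)$.

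The algebra is routine; the points that need a word of care are the mild non-degeneracy assumptions — $U_H(M) > 0$ so the ratio $U_L(M)/U_H(M)$ is defined, and $u(M^*) > 0$, i.e.\ the instance admits a nonempty matching (already assumed in the experimental setup) — and making precise that "$\POF$ for choosing $M$" is the natural specialization of Equation~(\ref{eq:pof}) with the fair-optimal matching replaced by the actual output $M$. The one genuinely load-bearing step is the observation that the efficiency loss $u(M^*) - u(M)$ is controlled by $\gamma\, U_H(M)$ — the utility $M$ itself awards to highly sensitized pairs — rather than by $\gamma\, U_H(M^*)$ or a graph-wide quantity; this is exactly what dropping $\gamma\, U_H(M^*) \geq 0$ buys, and it is what produces the extra $+\gamma$ in the denominator that sharpens the naive estimate $\gamma\big/\bigl(1 + U_L(M)/U_H(M)\bigr)$.
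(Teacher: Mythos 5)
Your proof is correct, and it reaches the paper's bound by a genuinely different route. The paper argues by constructing an extremal scenario: it posits a competing matching $E$ with $U_H(E)=0$ whose weighted score falls just $\epsilon$ short of that of $M$, computes the resulting efficiency loss, lets $\epsilon \to 0$, and asserts that this configuration is the worst case. You instead derive the bound deductively from the optimality of $M$ for the reweighted objective: $u^w_\gamma(M) \geq u^w_\gamma(M^*)$ gives $u(M^*) \leq u(M) + \gamma\, U_H(M)$ after dropping the nonnegative term $\gamma\, U_H(M^*)$, and monotonicity of $1 - u(M)/t$ in $t$ converts this additive gap into the multiplicative bound $\POF(u^w_\gamma,M) \leq \gamma\,/\bigl(1+\gamma+U_L(M)/U_H(M)\bigr)$. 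The underlying insight is the same in both arguments---the efficiency loss is controlled by $\gamma\, U_H(M)$, the priority weight that $M$ itself collects---but your version buys rigor and generality: it holds for any utilitarian-optimal $M^*$ (indeed for any matching that $\gamma$-weighted fairness weakly disprefers) without needing to justify that the constructed instance is truly extremal, which the paper leaves implicit. What the paper's construction buys in exchange is an explicit tightness witness: the bound is approached exactly when the displaced efficient matching awards zero utility to highly-sensitized patients. Your care about the degenerate cases ($U_H(M)>0$, $u(M^*)>0$) is appropriate; note that when $U_H(M)=0$ your additive inequality already forces $\POF = 0$, so nothing is lost there.
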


\ifshowproofs
\begin{proof}
Suppose that $\gamma$-weighted fairness chooses matching $M$ over a higher-weight matching $E$. In the worst case, both $F$ and $E$ receive nearly the same \emph{priority weight} under $\gamma$-weighted fairness (within a small perturbation $\epsilon$). Let the utility awarded by each outcome to highly- and lowly-sensitized patients be given by
\begin{equation*}
\begin{array}{rlrl}
U_H(M) & = A & U_L(M) &= B \\
U_H(E) & = 0 & U_L(E) &= A(1+\gamma) + B - \epsilon \\
\end{array}
\end{equation*}
with $0<\epsilon \ll 1$. Both $M$ and $E$ receive nearly the same priority weight from $\gamma$-weighted fairness, but $E$ receives $\gamma A$ more weight than $M$:
\begin{align*}
u^w_\gamma(M) &= A(1+\gamma) + B \\
u^w_\gamma(E) &= A(1+\gamma) + B - \epsilon
\end{align*}
And thus $\gamma$-weighted fairness selects $M$ over $E$. Taking the limit as $\epsilon \rightarrow 0$, the price of fairness for choosing $M$ is
$$ \POF(u^w_\gamma,M) = \frac{A(1+\gamma) + B - \epsilon - A - B}{A(1+\gamma) + B - \epsilon} = \frac{\gamma}{1+\gamma + B/A},$$
note that $A=U_H(M)$ and $B=U_L(M)$, and thus
$$ \POF(u^w_\gamma,M) = \frac{\gamma}{1+\gamma + U_L(M)/U_H(M)}.$$
Note that this is the worst-case $\POF$ for choosing $M$, and thus
$$ \POF(u^w_\gamma,M) \leq \frac{\gamma}{1+\gamma + U_L(M)/U_H(M)}.$$
\end{proof}
\fi

It follows that this $\POF$ is maximized when $U_L(M)=0$, which is the worst case $\POF$ for $\gamma$-weighted fairness. 

\begin{corollary}\label{cor:wtpof}
For $\gamma$-weighted fairness, the price of fairness  is bounded above by
$$ \POF(u^w_\gamma) \leq \frac{\gamma}{1+\gamma} $$
\end{corollary}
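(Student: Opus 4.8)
The plan is to derive the corollary directly from Proposition~\ref{prop:wtpofgen} by a short monotonicity argument; essentially all the work has already been done there. Proposition~\ref{prop:wtpofgen} gives, for every matching $M$ that $\gamma$-weighted fairness may return, the bound $\POF(u^w_\gamma,M) \leq \frac{\gamma}{1+\gamma + U_L(M)/U_H(M)}$. Since edge weights are nonnegative and the decision variables are binary, both $U_H(M) = \sum_{e\in E_H} x_e w_e$ and $U_L(M) = \sum_{e\in E_L} x_e w_e$ are nonnegative, so the ratio $t := U_L(M)/U_H(M)$ is nonnegative whenever $U_H(M) > 0$.

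Next I would observe that, for fixed $\gamma \geq 0$, the map $t \mapsto \frac{\gamma}{1+\gamma+t}$ is nonincreasing on $[0,\infty)$ and hence attains its maximum at $t = 0$. Substituting $t = 0$ (equivalently, $U_L(M) = 0$, which is the worst case flagged in the text preceding the corollary) into the bound of Proposition~\ref{prop:wtpofgen} yields $\POF(u^w_\gamma,M) \leq \frac{\gamma}{1+\gamma}$ for every matching $M$ with $U_H(M) > 0$.

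It then remains to dispatch the degenerate case $U_H(M) = 0$. In that case $M$ receives no priority bonus under $\gamma$-weighted fairness, so the weighted objective agrees with the utilitarian objective on $M$; thus $M$ can only be chosen if it is already utilitarian-optimal, giving $\POF(u^w_\gamma,M) = 0 \leq \frac{\gamma}{1+\gamma}$. Since the bound $\frac{\gamma}{1+\gamma}$ now holds for every matching the rule may return, and $\POF(u^w_\gamma)$ is defined (via Equation~(\ref{eq:pof})) through such a matching, we conclude $\POF(u^w_\gamma) \leq \frac{\gamma}{1+\gamma}$.

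I do not anticipate a genuine obstacle here: the substance is entirely in Proposition~\ref{prop:wtpofgen}, and the only care needed is to record the nonnegativity of the utilities (to justify $t \geq 0$) and to treat the $U_H(M) = 0$ boundary case where the ratio is undefined.
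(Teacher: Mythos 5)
Your proposal is correct and follows the same route the paper takes: the corollary is obtained from Proposition~\ref{prop:wtpofgen} by noting the bound $\frac{\gamma}{1+\gamma+U_L(M)/U_H(M)}$ is maximized when $U_L(M)=0$. Your added care about nonnegativity of $t=U_L(M)/U_H(M)$ and the degenerate case $U_H(M)=0$ (where the chosen matching must already be utilitarian-optimal, so $\POF=0$) is a sound tightening of a detail the paper leaves implicit, but it is not a different argument.
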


%\ifshowproofs
%\begin{proof}
%Trivial?
%\end{proof}
%\fi

\begin{proposition} \label{prop:wtfairnessgen}
Let $U_H^*$ be the maximum possible utility for highly-sensitized patients. For $\gamma$-weighted fairness, and some matching $M$ the fraction of the fair score $\%F$ for matching $M$ is bounded below by
$$ \%F(M,\mM) \geq 1 - \frac{U_L(M)}{U_H^*}\frac{1}{1+\gamma}.$$
\end{proposition}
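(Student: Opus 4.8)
The plan is to unwind the definition of $\%F$ and reduce the claim to a one-line consequence of the optimality of $M$ under the $\gamma$-weighted objective. First recall from the preceding definitions that $\%F(M,\mM) = U_H(M)/\max_{M'\in\mM} U_H(M') = U_H(M)/U_H^*$, so the asserted bound is equivalent to
$$U_H(M) \;\geq\; U_H^* - \frac{U_L(M)}{1+\gamma},$$
which, after multiplying by $1+\gamma$, is equivalent to $(1+\gamma)U_H(M) + U_L(M) \geq (1+\gamma)U_H^*$. The key observation is that the left-hand side is exactly the $\gamma$-weighted matching weight $u^w_\gamma(M)$ of $M$ — i.e.\ the total weight of $M$ after scaling every edge into a highly-sensitized patient by $(1+\gamma)$ — since edges into $V_H$ contribute $(1+\gamma)w_e x_e$ and edges into $V_L$ contribute $w_e x_e$. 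Hence it suffices to show $u^w_\gamma(M) \geq (1+\gamma)U_H^*$.

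Next I would introduce a matching $M^*_H \in \mM$ attaining $U_H(M^*_H) = U_H^*$. Its $\gamma$-weighted weight is $u^w_\gamma(M^*_H) = (1+\gamma)U_H^* + U_L(M^*_H)$. Since all edge weights are nonnegative, $U_L(M^*_H) \geq 0$, so $u^w_\gamma(M^*_H) \geq (1+\gamma)U_H^*$. Because $\gamma$-weighted fairness selects $M$ as a maximizer of $u^w_\gamma$ over $\mM$ (cf.\ \Probref{eq:wtfairness}), we get $u^w_\gamma(M) \geq u^w_\gamma(M^*_H) \geq (1+\gamma)U_H^*$, which is precisely the inequality identified above. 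Dividing through by $U_H^*$ yields $\%F(M,\mM) \geq 1 - \frac{U_L(M)}{U_H^*}\frac{1}{1+\gamma}$, as claimed.

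There is no substantial obstacle here; the argument is a direct optimality comparison. The only points requiring care are (i) recognizing that $(1+\gamma)U_H(\cdot) + U_L(\cdot)$ is literally the $\gamma$-weighted objective $u^w_\gamma$, so that the optimality of $M$ can be invoked against the competitor $M^*_H$; and (ii) the nonnegativity $U_L(M^*_H)\geq 0$, which follows from $w_e \geq 0$. One should also implicitly assume $U_H^* > 0$ so that the final division is legitimate (if $U_H^* = 0$ the bound is trivial). I would present the proof in essentially this order: unpack $\%F$, rewrite the target inequality in terms of $u^w_\gamma$, exhibit $M^*_H$, and apply optimality of $M$.
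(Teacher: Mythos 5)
Your proof is correct, and it reaches the bound by a cleaner route than the paper. The paper argues by constructing a putative worst-case scenario: it posits a competing ``fair'' outcome $F$ with $U_H(F)=U_H^*$ and $U_L(F)=0$, lets $M$ beat it by an $\epsilon$ margin under the $\gamma$-weighted objective, takes $\epsilon\to 0$, and reads off the resulting equality as the worst case. You instead observe that the claim is just the optimality inequality $u^w_\gamma(M)\geq u^w_\gamma(M^*_H)$ combined with $u^w_\gamma(M^*_H)=(1+\gamma)U_H^*+U_L(M^*_H)\geq(1+\gamma)U_H^*$, then unpack $\%F(M,\mM)=U_H(M)/U_H^*$. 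The two arguments pivot on the same comparison---the selected matching must weakly beat a matching attaining $U_H^*$ under the weighted objective---but your version is logically tighter: it does not require that an outcome with $U_L=0$ and $U_H=U_H^*$ actually exist (you only use $U_L(M^*_H)\geq 0$), it avoids the informal limit/perturbation device, and it makes explicit where nonnegativity of edge weights and $U_H^*>0$ enter. The paper's presentation, by contrast, is aimed at exhibiting when the bound is (asymptotically) tight, which your direct deduction does not address but also does not need for the stated inequality.
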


\ifshowproofs
\begin{proof}
Let $M\in mM$ be a feasible matching, and let $U_H^*$ be the maximum possible utility for highly-sensitized patients. Consider the worst case scenario for $\gamma$-weighted fairness: two outcomes receive nearly equal utility from $\gamma$-weighted fairness, but the outcome chosen is far less fair. Let the \emph{fair} outcome $F$ assign the maximum possible utility to highly sensitized patients, and zero utility to lowly sensitized patients:
$$ u_H(F) = U_H^*, \hspace{0.2in} u_L(F) = 0.$$
Let $M$ be the outcome selected by $\gamma$-weighted fairness, which assigns utility $\beta U_H(M)$ to highly sensitized patients, with $0<\beta <1$, and some utility $A+\epsilon$ to lowly sensitized patients, with $0<\epsilon\ll 1$:
$$ u_H(M) = \beta U_H^*, \hspace{0.2in} u_L(M) = A + \epsilon,$$
and note that $\beta$ is $\%F$, the fraction of the fair score, for outcome $M$. 

Letting $\epsilon\rightarrow 0$, both $F$ and $M$ receive the same utility under $\gamma$-weighted fairness; that is,
$$ U_H^* (1+\gamma) = \beta U_H^* (1+\gamma) + U_L(M).$$
Rearranging, we have
$$\%F(M,\mM) = \beta = 1 - \frac{U_L(M)}{U_H^*}\frac{1}{1+\gamma}.$$
This is the worst-case outcome for $\%F$, and thus
$$\%F(M,\mM) \geq \beta = 1 - \frac{U_L(M)}{U_H^*}\frac{1}{1+\gamma}.$$
\end{proof}
\fi

It follows that the worst-case $\%F$ occurs when $U_L$ is maximal, and $M=M^*$.

\begin{corollary} \label{cor:wtfairness}
Let $U_H^*$ and $U_L^*$ be the maximum possible utility for highly- and lowly-sensitized patients, respectively. Under $\gamma$-weighted fairness, the fraction of the fair score $\%F$ is bounded below by
$$ \%F(*,\mM) \geq 1 - \frac{U_L^*}{U_H^*}\frac{1}{1+\gamma}.$$
\end{corollary}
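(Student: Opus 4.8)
The plan is to derive Corollary~\ref{cor:wtfairness} as an immediate specialization of Proposition~\ref{prop:wtfairnessgen}. First I would instantiate that proposition at the matching actually returned by $\gamma$-weighted fairness — call it $M^*$, the outcome denoted by $*$ in the corollary — which gives $\%F(M^*,\mM) \geq 1 - \frac{U_L(M^*)}{U_H^*}\cdot\frac{1}{1+\gamma}$. Implicit throughout is that the instance is nontrivial for highly-sensitized patients, i.e. $U_H^* > 0$, so that $\%F$ is well-defined; and $\gamma \geq 0$, so that $\frac{1}{1+\gamma} \in (0,1]$.

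Next I would invoke the definition $U_L^* = \max_{M'\in\mM} U_L(M')$, which yields $U_L(M^*) \leq U_L^*$. Since $\frac{1}{1+\gamma} > 0$ and $U_H^* > 0$, the map $t \mapsto 1 - \frac{t}{U_H^*}\cdot\frac{1}{1+\gamma}$ is strictly decreasing, so replacing $U_L(M^*)$ by the larger quantity $U_L^*$ only weakens the bound: $1 - \frac{U_L(M^*)}{U_H^*}\cdot\frac{1}{1+\gamma} \geq 1 - \frac{U_L^*}{U_H^*}\cdot\frac{1}{1+\gamma}$. Chaining this with the previous display gives $\%F(*,\mM) \geq 1 - \frac{U_L^*}{U_H^*}\cdot\frac{1}{1+\gamma}$, which is exactly the claimed bound.

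There is no genuine obstacle here — the corollary is a one-line consequence of the proposition. The only point worth stating carefully is the reading of the $*$ notation: the bound of Proposition~\ref{prop:wtfairnessgen} holds uniformly over every feasible outcome $M$, and since $U_L(M) \leq U_L^*$ for all such $M$, the weakest instance of that bound — the one with $U_L^*$ in place of $U_L(M)$ — necessarily holds for whatever matching $\gamma$-weighted fairness selects. I would close by remarking (as the surrounding text already notes) that this worst case is attained when the selected matching simultaneously maximizes the utility of lowly-sensitized patients and coincides with the utilitarian optimum $M^*$.
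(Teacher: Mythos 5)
Your argument is correct and matches the paper's own (implicit) justification: the paper gives no separate proof of Corollary~\ref{cor:wtfairness}, merely noting that it follows from Proposition~\ref{prop:wtfairnessgen} by taking the worst case in which $U_L(M)$ attains its maximum $U_L^*$, which is exactly your monotonicity step $U_L(M)\leq U_L^*$ applied to the proposition's bound. Your additional care about $U_H^*>0$, $\gamma\geq 0$, and the reading of the $*$ notation only makes the one-line deduction more explicit.
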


%\ifshowproofs
%\begin{proof}
%Trivial?
%\end{proof}
%\fi

These results may be used to balance $\%F$ and $\POF$, subject to policymaker requirements. For example, suppose policymakers require that $\%F\geq f$, and $\POF \leq p$, for some constants $f$ and $p$. If we know the maximum utility for highly- and lowly- sensitized patients, we can bound $\gamma$ using the worst-case bounds from Corollary \ref{cor:wtpof} and \ref{cor:wtfairness}. Inverting the bounds from these Corollaries with $p=\POF$ and $f=\%f$, we have
$$ \gamma \leq \frac{p}{1-p}, \hspace{0.2in} \gamma \geq \frac{U_L^*}{U_H^*}\frac{1}{1-f} - 1.$$
Combining these restrictions, we arrive at the bounded prioritization set $\mathcal{P}^f_p$, given in \Eqref{eq:wtpsetfp}.
\begin{equation}\label{eq:wtpsetfp}
\mathcal{P}^f_p = \left\{ \mathbf{\hat{w}} \mid \hat{w}_e = \begin{cases} w_e (1+\gamma) &\text{if } e\in E_H \\ w_e &\text{otherwise} \end{cases}, \frac{U_L^*}{U_H^*}\frac{1}{1-f} - 1 \leq \gamma \leq \frac{p}{1-p}   \right\}
\end{equation}
There are two important observations about this prioritization set. First, not all choices of $f$ and $p$ are valid, and this depends on $U_L^*/U_H^*$; that is, choosing either $f$ or $p$ necessarily bounds the other. Second, there are many ways to use $\mathcal{P}^f_p$ in practice: we might \emph{minimize} or \emph{maximize} $\mathbf{\hat{w}}$ before maximizing the kidney exchange objective (i.e., setting $\gamma$ to its maximum or minimum value; this is equivalent to the weighted fairness proposed by \citet{Dickerson14:Price}.

Alternatively, we might allow $\gamma$ to vary within the range of set by $\mathcal{P}^f_p$. This approach allows the optimization algorithm to \emph{choose} the value of $\gamma$, such that priority weight is maximized. Note that this is not equivalent to weighted fairness (\Probref{eq:wtfairness}), which maximizes priority weight \emph{before} maximizing the objective. This variable-$\gamma$ approach is given in \Probref{eq:wtfairvarg}.
\begin{subequations} \label{eq:wtfairvarg}
\begin{align*}
\max\limits_{\hat{w}\in \mathcal{P}^f_p} &\sum\limits_{e\in E} \hat{w}_e \cdot x_e \\
\mathbf{x} &\in \mM
\end{align*}
\end{subequations}
By directly applying the definition of $\hat{w}$ to this problem, we arrive at \Probref{eq:wtfairvarg2}.
\begin{subequations} \label{eq:wtfairvarg2}
\begin{align}
\max \quad &  (1+\gamma) \sum\limits_{e\in E_H} w_e \cdot x_e +\sum\limits_{e\in E_L} w_e \cdot x_e\\
&\frac{U_L^*}{U_H^*}\frac{1}{1-f} - 1 \leq \gamma\leq \frac{p}{1-p} \\
&\mathbf{x} \in \mM
\end{align}
\end{subequations}

In the next section we tighten this the bound on $\%F$ for $\gamma$-weighted fairness, by relaxing the bounds on $\gamma$.

\subsubsection{Variable Weighted Fairness}

The bounds in Corollary \ref{cor:wtpof} and \ref{cor:wtfairness} are for the \emph{worst-case} bounds on $\gamma$; however, the worst-case scenarios that produce these bounds may never occur. Instead, we use Proposition \ref{prop:wtpofgen} and \ref{prop:wtfairnessgen} to bound $\gamma$ for some feasible matching $M$. 

As before, suppose that policymakers require that $\%F\geq f$, and $\POF \leq p$, for some constants $f$ and $p$. If we know the maximum utility for highly-sensitized patients, we can bound $\gamma$ (for some matching $M$) using the worst-case bounds from Proposition \ref{prop:wtpofgen} and \ref{prop:wtfairnessgen}. Inverting these bounds with $p=\POF$ and $f=\%f$, we have
$$ \gamma \leq \frac{p}{1-p}\left(1+\frac{U_L(M)}{U_H(M)} \right), \hspace{0.2in} \gamma \geq \frac{U_L(M)}{U_H^*}\frac{1}{1-f} - 1.$$
Applying these bounds on $\gamma$ results in the following prioritization set $\mathcal{P}^f_p$, given in \Eqref{eq:wtpsetvarfp}.
\begin{equation}\label{eq:wtpsetvarfp}
\mathcal{P}^f_p = \left\{ \mathbf{\hat{w}} \mid \hat{w}_e = \begin{cases} w_e (1+\gamma) &\text{if } e\in E_H \\ w_e &\text{otherwise} \end{cases}, \frac{U_L(M)}{U_H^*}\frac{1}{1-f} - 1 \leq \gamma \leq \frac{p}{1-p}\left(1+\frac{U_L(M)}{U_H(M)} \right) \right\}
\end{equation}
As before, we might maximize or minimize the prioritization weight over $\mathcal{P}^p_f$ (i.e., weighted fairness), or allow $\gamma$ to vary within the range of $\mathcal{P}^p_f$. Note that allowing $\gamma$ to vary adds variable inequalities, which depends on the decision variables of $M$.

\end{document}